\theoremstyle{plain}
\newtheorem{theorem}{Theorem}[section]
\newtheorem{lemma}[theorem]{Lemma}
\newtheorem{corollary}[theorem]{Corollary}
\theoremstyle{definition}
\newtheorem{definition}[theorem]{Definition}
\newtheorem{assumption}[theorem]{Assumption}
\theoremstyle{remark}
\newtheorem{remark}[theorem]{Remark}
\icmltitlerunning{Distributional Offline Policy Evaluation with Predictive Error Guarantees}
\newcommand{\alglinelabel}{
  \addtocounter{ALC@line}{-1}
  \refstepcounter{ALC@line}
  \label
}
\begin{document}

\twocolumn[
\icmltitle{Distributional Offline Policy Evaluation with Predictive Error Guarantees}

\icmlsetsymbol{equal}{*}

\begin{icmlauthorlist}
\icmlauthor{Runzhe Wu}{yyy}
\icmlauthor{Masatoshi Uehara}{yyy}
\icmlauthor{Wen Sun}{yyy}
\end{icmlauthorlist}

\icmlaffiliation{yyy}{Department of Computer Science, Cornell University, Ithaca, NY, USA}

\icmlcorrespondingauthor{Runzhe Wu}{rw646@cornell.edu}

\icmlkeywords{Machine Learning, ICML}

\vskip 0.3in
]

\printAffiliationsAndNotice{}  %

\begin{abstract}
We study the problem of estimating the distribution of the return of a policy using an offline dataset that is not generated from the policy, i.e., distributional offline policy evaluation (OPE). We propose an algorithm called Fitted Likelihood Estimation (FLE), which conducts a sequence of Maximum Likelihood Estimation (MLE) and has the flexibility of integrating any state-of-the-art probabilistic generative models as long as it can be trained via MLE. FLE can be used for both finite-horizon and infinite-horizon discounted settings where rewards can be multi-dimensional vectors. Our theoretical results show that for both finite-horizon and infinite-horizon discounted settings, FLE can learn distributions that are close to the ground truth under total variation distance and Wasserstein distance, respectively. Our theoretical results hold under the conditions that the offline data covers the test policy's traces and that the supervised learning MLE procedures succeed. Experimentally, we demonstrate the performance of FLE with two generative models, Gaussian mixture models and diffusion models. For the multi-dimensional reward setting, FLE with diffusion models is capable of estimating the complicated distribution of the return of a test policy. 
\end{abstract}

\def\*#1{\bm{#1}} 
\def\+#1{\mathcal{#1}}
\def\=#1{\mathbb{#1}}
\def\^#1{\hat{#1}} 
\def\-#1{\bar{#1}}
\def\~#1{\tilde{#1}}
\def\w~#1{\widetilde{#1}}

\def\indic{\mathds{1}}
\def\d{\mathop{}\!\mathrm{d}}
\def\E{\mathop{\mathbb{E}}}
\def\KL{D_{\rm{KL}}}
\def\argmax{\mathop{\arg\max}}
\def\argmin{\mathop{\arg\min}}
\def\given{\,|\,}

\newcommand\edit[1]{{#1}}
\newcommand\numberthis{\addtocounter{equation}{1}\tag{\theequation}}

\section{Introduction}

Traditional Reinforcement Learning (RL) focuses on studying the expected behaviors of a learning agent. However, modeling the expected behavior is not enough for many interesting applications. For instance, when estimating the value of a  new medical treatment, instead of just predicting its expected value, we may be interested in estimating the variance of the value as well. For a self-driving car whose goal is to reach a destination as soon as possible, in addition to predicting the expected traveling time, we may be interested in estimating the tails of the distribution of traveling time so that customers can prepare for worst-case situations. Other risk-sensitive applications in finance and control often require one to model beyond the expectation as well. 

In this work, we study how to estimate the distribution of the return of a policy in Markov Decision Processes (MDPs) using only an offline dataset that is not necessarily generated from the test policy (i.e., distributional offline policy evaluation). Estimating distributions of returns has been studied in the setting called distributional RL \citep{bellemare2017distributional}, where most existing works focus on solving the regular RL problem, i.e., finding a policy that maximizes the expected return by treating the task of predicting additional information beyond the mean as an auxiliary task. Empirically, it is believed that this auxiliary task helps representation learning which in turn leads to better empirical performance. Instead of focusing on this auxiliary loss perspective, we aim to design distributional OPE algorithms, which can accurately estimate the distribution of returns with provable guarantees. We are also interested in the setting where the one-step reward could be \emph{multi-dimensional} (i.e., multi-objective RL), and the state/action spaces could be large or even continuous. This requires us to design new algorithms that can leverage rich function approximation (e.g., state-of-art probabilistic generative models). 

Our algorithm, \emph{Fitted Likelihood Estimation} (FLE), is inspired by the classic OPE algorithm Fitted Q Evaluation (FQE) \citep{munos2008finite}. Given a test policy and an offline dataset, FLE iteratively calls a supervised learning oracle --- Maximum Likelihood Estimation (MLE) in this case,  to fit a conditional distribution to approximate a target distribution constructed using the distribution learned from the previous iteration. At the end of the training procedure, it outputs an estimator which approximates the true distribution of the return of the test policy. Our algorithm is simple: like FQE, it decomposes the distributional OPE problem into a sequence of supervised learning problems (in this case, MLE). Thus it has great flexibility to leverage any state-of-art probabilistic generative models as long as it can be trained via MLE.  Such flexibility is important,  especially when we have large state/action spaces, and reward vectors coming from complicated high-dimensional distributions. FLE naturally works for both finite-horizon setting and infinite-horizon discounted setting. \looseness=-1

Theoretically, we prove that our algorithm, FLE, can learn an accurate estimator of the return distribution for both finite-horizon MDPs and infinite-horizon discounted MDPs, under the assumptions that  (1) \emph{MLE can achieve good in-distribution generalization bounds (i.e., supervised learning succeeds)}, and (2) \emph{the offline state-action distribution covers the test policy's state-action distribution}. The first condition is well studied in statistical learning theory, and in practice, the state-of-the-art probabilistic generative models trained via MLE (e.g., FLOW models \citep{dinh2014nice} and Diffusion models \citep{sohl2015deep}) indeed also exhibit amazing generalization ability.  The second condition is necessary for offline RL and is widely used in the regular offline RL literature (e.g., \citet{munos2008finite}). In other words, our analysis is modular: it simply transfers the supervised learning MLE in-distribution generalization bounds to a bound of distributional OPE. The accuracy of the estimator computed by  FLE is measured under total variation distance and $p$-Wasserstein distance, for finite-horizon setting and infinite-horizon discounted setting, respectively.   To complete the picture, we further provide concrete examples showing that MLE can provably have small in-distribution generalization errors. To the best of our knowledge, this is the first PAC (Probably Approximately Correct) learning algorithm for distributional OPE with general function approximation. \looseness=-1

Finally, we demonstrate our approach on a rich observation combination lock MDP where it has a latent structure with the observations being high-dimensional and continuous \citep{misra2020kinematic,agarwal2020pc,zhang2022efficient}. We consider the setting where the reward comes from complicated multi-dimensional continuous distributions (thus existing algorithms such as quantile-regression TD \citep{dabney2018distributional} do not directly apply here). We demonstrate the flexibility of our approach by using two generative models in FLE: the classic Gaussian mixture model and state-of-the-art diffusion model \citep{ho2020denoising}. \looseness=-1

\subsection{Related Works}

\textbf{Distributional RL.} 
Quantile regression TD \citep{dabney2018distributional} is one of the common approaches for distributional OPE. A very recent work~\citep{rowland2023analysis} demonstrates that quantile regression TD can converge to the TD fixed point solution of which the existence is proved under an $\ell_{\infty}$-style norm (i.e., $\sup$ over all states).  \citet{rowland2023analysis} do not consider the sample complexity of OPE and the impact of learning from off-policy samples, and their convergence analysis is asymptotic. Also, quantile regression TD only works for scalar rewards. 
Another popular approach is categorical TD \citep{bellemare2017distributional}, where one explicitly discretizes the return space. However, for high-dimensional rewards,  explicitly discretizing the return space evenly can suffer the curse of dimensionality and fail to capture some low-dimensional structures in the data distribution. Moreover, there is no convergence or sample complexity analysis of the categorical algorithm for OPE. 
Another direction in distributional RL concentrates on estimating cumulative distribution functions (CDFs) instead of densities \citep{zhang2022functional,prashanth2022wasserstein}. In addition, there are also methods based on generative models that aim to effectively represent continuous return distributions \citep{freirich2019distributional,doan2018gan,li2021bayesian}. We discuss some closely related works below.

\citet{ma2021conservative} studied distributional offline policy optimization. They focused on tabular MDPs with scalar rewards, and their algorithm can learn a pessimistic estimate of the true inverse CDF of the return. \citet{keramati2020being} also uses the distributional RL framework to optimistically estimate the CVaR value of a policy's return. Their analysis also only applies to tabular MDPs with scalar rewards.  
In contrast, we focus on distributional OPE with general function approximation beyond tabular or linear formats and MDPs with multi-dimensional rewards. 

\citet{zhang2021distributional} also consider learning from vector-valued rewards. They propose a practical algorithm that minimizes the Maximum Mean Discrepancy (MMD) without a sample complexity analysis. In contrast, we use MLE to minimize total variation distance, and our error bound is based on total variation distance. Note that a small total variation distance implies a small MMD but not vice versa, which implies that our results are stronger.

\citet{huang2021off,huang2022off} explore return distribution estimation for contextual bandits and MDPs using off-policy data. They focus on learning CDFs with an estimator that leverages importance sampling and learns the transition and reward of the underlying MDP to reduce variance while maintaining unbiasedness. However, their estimator can incur exponential error in the worst case due to importance sampling. Moreover, they measure estimation error using the $\ell_\infty$ norm on CDFs, which is upper bounded by total variation distance but not the other way around. They further showed how to estimate a range of risk functionals via the estimated distribution. Notably, our method is also applicable to risk assessment, as shown in \cref{rmk:cvar}.

\textbf{Offline policy evaluation.}  %
Fitted Q evaluation (FQE) \citep{munos2008finite,ernst2005tree} is one of the most classic OPE algorithms. Many alternative approaches have been recently proposed, such as minimax algorithms \citep{yang2020off,feng2019kernel,uehara2020minimax}. Somewhat surprisingly, algorithms based on FQE are often robust and achieve stronger empirical performance in various benchmark tasks \citep{fu2021benchmarks,chang2022learning}. Our proposed algorithm can be understood as a direct generalization of FQE to the distributional setting. %
Note sequential importance sampling approaches \citep{jiang2016doubly,precup2000eligibility} in regular RL have been applied to estimate distributions \citep{chandak2021universal}. However, these methods suffer from the curse of the horizon, i.e., the variance necessarily grows exponentially in the horizon. 
\section{Preliminaries}

In this section, we introduce the setup of the Markov decision process and the offline policy evaluation.

\textbf{Notations.}
We define $\Delta(\+S)$ as the set of all distributions over a set $\+S$. For any $a,b\in\=R$, we denote $[a,b]=\{x\in\=R:a\leq x\leq b\}$. For any integer $N$, we denote $[N]$ as the set of integers between 1 and $N$ inclusively. Given two distributions $P_1$ and $P_2$ on a set $\+S$, we denote $d_{tv}$ as the total variation distance between the two distributions, i.e., $d_{tv}(P_1, P_2) = \| P_1 - P_2 \|_1/2$. We denote $d_{w,p}$ as the $p$-Wasserstein distance,
i.e., 
$
d_{w,p}(P_1,P_2)=(\inf_{c\in\+C} \E_{x,y\sim c}\|x-y\|^p)^{1/p}
$ where $\+C$ denotes the set of all couplings of $P_1$ and $P_2$. We note that $d_{tv}$ dominates $d_{w,p}$ when the support is bounded (see \cref{lem:tv-wass-general} for details):
\begin{equation}\label{eq:tv-wass}
d_{w,p}^p(P_1,P_2)\le \text{\rm diam}^p(\+S)\cdot d_{tv}(P_1,P_2)
\end{equation}
where $\text{\rm diam}(\+S)=\sup_{x,y\in\+S}\|x-y\|$ is the diameter of $\+S$.

\subsection{Finite-Horizon MDPs}

We consider a finite-horizon MDP with a vector-valued reward function, which is a tuple $M(\+X, \+A, r, P, H, \mu)$ where $\+X$ and $\+A$ are the state and action spaces, respectively, $P$ is the transition kernel, $r$ is the reward function, i.e.,  %
$r(x,a) \in \Delta([0, 1]^d)$ where $d\in \mathbb{Z}^+$, $H$ is the length of each episode, and $\mu\in \Delta(\mathcal{X})$ is the initial state distribution.
A policy is a mapping $\pi:\+X\rightarrow\Delta(\+A)$. We denote $z  \in [0,H]^d$ as the accumulative reward vector across $H$ steps, i.e., $z = \sum_{h=1}^H r_h$. Note that $z$ is a random vector whose distribution is determined by a policy $\pi$ and the MDP. We denote $Z^\pi \in \Delta([0,H]^d)$ as the distribution\footnote{{ Formally, they are called probability density functions in the continuous setting and probability mass functions in discrete settings, which are different from cumulative distribution functions.} } of the random variable $z$ under policy $\pi$. In this paper, we are interested in estimating $Z^\pi$ using offline data. We also define conditional distributions $Z^\pi_h(x,a) \in \Delta([0,H]^d)$ which is the distribution of the return under policy $\pi$ starting with state action $(x_h,a_h) := (x,a)$ at time step $h$. It is easy to see that $Z^\pi =\mathbb{E}_{x\sim \mu, a\sim \pi(x)} \left[ Z^\pi_1(x,a)\right]$. We define $d^\pi_h$ as the state-action distribution induced by policy $\pi$ at time step $h$, and $d^\pi = \sum_{h=1}^H d^\pi_h / H$ as the average state-action distribution induced by $\pi$. 

We denote the distributional Bellman operator \citep{morimura2012parametric} associated with $\pi$ as $\mathcal{T}^\pi$, which maps a conditional distribution to another conditional distribution: given a state-action conditional distribution $f \in \mathcal{X}\times\mathcal{A} \mapsto \Delta([0, H]^d)$, we have $ \mathcal{T}^\pi f  \in \mathcal{X}\times\mathcal{A}\mapsto \Delta([0,H]^d)$, such that for any $(x,a,z)$:
\begin{align*}
[\mathcal{T}^\pi f]&(z\given x,a) \\
&= \mathbb{E}_{r\sim r(x,a), x'\sim P(x,a), a'\sim \pi(x')} \left[ f\left(z- r| x',a'\right) \right].
\end{align*}
We can verify that $\mathcal{T}^\pi Z^\pi_{h+1} = Z^\pi_h$ for all $h$. %

\subsection{Discounted Infinite-Horizon MDPs}

The discounted infinite-horizon MDP is a tuple $M(\+X,\+A,r,P,\gamma,\mu)$. The return vector is defined as $z = \sum_{h=1}^{\infty} \gamma^{h-1} r_h$. We call $\gamma \in (0,1)$ the discount factor. The distribution of return $z$ is thus $Z^\pi\in\Delta([0,(1-\gamma)^{-1}]^d)$. We also define the conditional distribution $\-Z^\pi(x,a) \in \Delta([0,(1-\gamma)^{-1}]^d)$ which is the distribution of the return under policy $\pi$ starting with state action $(x,a)$. It is easy to see that $Z^\pi =\={E}_{x\sim \mu, a\sim \pi(x)} \left[\-Z^\pi(x,a)\right]$. The state-action distribution of a given policy $\pi$ is also defined in a discounted way: $d^\pi=(1-\gamma)^{-1}\sum_{h=1}^\infty\gamma^{h-1} d^\pi_h$ where $d^\pi_h$ is the state-action distribution induced by $\pi$ at time step $h$. The distributional Bellman operator maps a state-action conditional distribution $f\in\+X\times\+A\mapsto([0,(1-\gamma)^{-1}]^d)$ to $\+T^\pi f\in\+X\times\+A\mapsto([0,(1-\gamma)^{-1}]^d)$ for which
\begin{align*}
[\mathcal{T}^\pi &f](z\given x,a) \\
&= \mathbb{E}_{r\sim r(x,a), x'\sim P(x,a), a'\sim \pi(x')} \left[ f\left(\frac{z- r}{\gamma} \,\middle|\,x',a'\right) \right].
\end{align*}
for any $(x,a,z)$. We can verify that $\-Z^\pi$ is a fixed point of the distributional Bellman operator, i.e., $\+T^\pi \-Z^\pi =\-Z^\pi$.

\subsection{Offline Policy Evaluation Setup}

We consider estimating the distribution $Z^\pi$ using offline data which does not come from $\pi$ (i.e., off-policy setting). We assume we have a dataset $\mathcal{D} =\{ x_i, a_i, r_i, x'_i  \}_{i=1}^n$ that contains i.i.d. tuples, such that $x,a\sim \rho \in \Delta(\mathcal{X}\times\mathcal{A})$, $s'\sim P(\cdot | s,a)$, and $r \sim r(s,a)$. For finite-horizon MDPs, we randomly and evenly split $\+D$ into $H$ subsets, $\+D_1,\dots,\+D_H$, for the convenience of analysis. Each subset contains $n/H$ samples. For infinite-horizon MDPs, we split it into $T$ subsets in the same way. Here $T$ is the number of iterations which we will define later.

We consider learning distribution $Z^\pi$ via general function approximation. For finite-horizon MDPs, we denote $\mathcal{F}_h$ as a function class that contains state-action conditional distributions, i.e., $\mathcal{F}_h \subset \mathcal{X}\times\mathcal{A} \mapsto \Delta([0,H]^d)$, which will be used to learn $Z^\pi_h$. For infinite-horizon MDPs, we assume a function class $\mathcal{F} \subset \mathcal{X}\times\mathcal{A} \mapsto \Delta([0,(1-\gamma)^{-1}]^d)$.

\section{Fitted Likelihood Estimation}

In this section, we present our algorithm --- \emph{Fitted Likelihood Estimation} (FLE) for distributional OPE. \cref{alg:mle-ope} is for finite-horizon MDPs, and \cref{alg:mle-ope-inf} is for infinite-horizon MDPs.

\cref{alg:mle-ope} takes the offline dataset $\+{D}=\{\+D_h\}_{h=1}^H$  and the function class $\{ \mathcal{F}_h\}_{h=1}^H$ as inputs and iteratively performs Maximum likelihood estimation (MLE) starting from $H$ to time step $h=1$.  For a particular time step $h$, given $\hat f_{h+1}$ which is learned from the previous iteration,  FLE treats $\mathcal{T}^\pi \hat f_{h+1}$ as the target distribution to fit. To learn $\mathcal{T}^\pi \hat f_{h+1}$, it first generates samples from it (Line~\ref{line:generate}), which is doable as long as we can generate samples from the conditional distribution $\hat f_{h+1}(\cdot | x,a)$ given any $(x,a)$. Once we generate samples from $\mathcal{T}^\pi \hat f_{h+1}$, we fit $\hat f_{h}$ to estimate $\mathcal{T}^\pi \hat f_{h+1}$ by MLE (Line~\ref{line:mle}).
The algorithm returns $\hat f_1$ to approximate $Z^\pi_1$. To estimate $Z^\pi$, we can compute $\mathbb{E}_{x\sim \mu, a\sim \pi(x)} \hat f_1(x,a)$, recalling $\mu$ is the initial state distribution.

\cref{alg:mle-ope-inf} is quite similar to \cref{alg:mle-ope} but is for infinite-horizon MDPs, and it has two distinctions. First, we introduce the discount factor $\gamma$. Second, compared to \cref{alg:mle-ope} where we perform MLE in a backward manner (from $h=H$ to $1$), here we repeatedly apply MLE in a time-independent way. Particularly, it treats $\+T^\pi\^f_{t-1}$ as the target distribution to fit by MLE at round $t$. To finally estimate $Z^\pi$, we can compute $\mathbb{E}_{x\sim \mu, a\sim \pi(x)} \hat f_T(x,a)$.

To implement either algorithm, we need a function $f$ that has the following two properties: (1) it can generate samples given any state-action pair, i.e., $z\sim f(\cdot | x,a)$, and (2) given any triple $(x,a,z)$ we can evaluate the conditional likelihood, i.e., we can compute $f(z | x,a)$. Such function approximation is widely available in practice, including discrete histogram-based models,  Gaussian mixture models, Flow models \citep{dinh2014nice}, and diffusion model \citep{sohl2015deep}. Indeed, in our experiment, we implement FLE with Gaussian mixture models and diffusion models \citep{ho2020denoising}, both of which are optimized via MLE. \looseness=-1

\begin{algorithm}[t]
   \caption{Fitted Likelihood Estimation (FLE) for finite-horizon MDPs}
   \label{alg:mle-ope}
\begin{algorithmic}[1]
	\STATE {\bfseries Input:} dataset $\{\+D_h\}_{h=1}^H$ and function classes $\{\+F_h\}_{h=1}^H$
   	\FOR{$h=H,H-1,\dots,1$}
   		\STATE $\+D_h'=\emptyset$
   		\FOR{$x,a,r,x'\in\+D_h$}
			\IF{$h<H$}
				\STATE $a'\sim\pi(x')$, $y\sim\^f_{h+1}(\cdot\given x', a')$ \alglinelabel{line:generate}
				\STATE Set $z=r+y$
			\ELSE
				\STATE Set $z=r$
			\ENDIF
			\STATE $\+D'_h=\+D'_h\cup\{(x,a,z)\}$
		\ENDFOR
   		\STATE $\^f_h=\argmax_{f\in\+F_h}\sum_{(x,a,z)\in\+D'_h} \log f(z\given x,a)$\alglinelabel{line:mle}
	\ENDFOR
\end{algorithmic}
\end{algorithm}
\begin{algorithm}[t]
   \caption{Fitted Likelihood Estimation (FLE) for infinite-horizon MDPs}
   \label{alg:mle-ope-inf}
\begin{algorithmic}[1]
	\STATE {\bfseries Input:} dataset $\{\+D_t\}_{t=1}^T$ and function classes $\+F$
   	\FOR{$t=1,2,\dots,T$}
   		\STATE $\+D_t'=\emptyset$
   		\FOR{$x,a,r,x'\in\+D_t$}
			\STATE $a'\sim\pi(x')$
			\STATE $y\sim\^f_{t-1}(\cdot\given x', a')$
			\STATE $z=r+\gamma y$
			\STATE $\+D'_t=\+D'_t\cup\{(x,a,z)\}$
		\ENDFOR
   		\STATE $\^f_t=\argmax_{f\in\+F}\sum_{(x,a,z)\in\+D'_t} \log f(z\given x,a)$\alglinelabel{line:mle-inf}
	\ENDFOR
\end{algorithmic}
\end{algorithm}
Regarding computation, the main bottleneck is the MLE step (Line~\ref{line:mle} and \ref{line:mle-inf}). While we present it with a $\argmax$ oracle, in both practice and theory, an approximation optimization oracle is enough. In theory, as we will demonstrate, as long as we can find some $\hat f_{h}$ that exhibits good in-distribution generalization bound (i.e., $\mathbb{E}_{x,a\sim \rho} d_{tv}(\hat f_{h}(x,a), [\mathcal{T}^\pi \hat f_{h+1}](x,a))$ or $\mathbb{E}_{x,a\sim \rho} d_{tv}(\hat f_{t}(x,a), [\mathcal{T}^\pi \hat f_{t-1}](x,a))$ is small), then we can guarantee to have an accurate estimator for $Z^\pi$. Note that here $\rho$ is the training distribution for MLE, thus we care about in-distribution generalization. Thus our approach is truly a reduction to supervised learning: as long as the supervised learning procedure (in this case, MLE) learns a model with good in-distribution generalization performance, we can guarantee good prediction performance for FLE. Any advancements from training generative models via MLE  (e.g., better training heuristics and better models) thus can immediately lead to improvement in distributional OPE.

\begin{remark}[Comparison to prior models]\label{rmk:cate} 
    The categorical algorithm~\citep{bellemare2017distributional} works by minimizing the cross-entropy loss between the (projected) target distribution and the parametric distribution, which is equivalent to maximizing the likelihood of the parametric model.
\end{remark}

\begin{remark}[FQE as a special instance]\label{rmk:fqe}
When reward is only a scalar, and we use fixed-variance Gaussian distribution $f(\cdot | x,a) := \mathcal{N}( g(x,a), \sigma^2 )$ where $g:\mathcal{X}\times\mathcal{A}\mapsto [0,H]$, and $\sigma > 0$ is a fixed (not learnable) parameter,  MLE becomes a least square oracle, and FLE reduces to FQE --- the classic offline policy evaluation algorithm. 
\end{remark}

\section{Theoretical Analysis}\label{sec:analysis}

In this section, we present the theoretical guarantees of FLE. As a warm-up, we start by analyzing the performance of FLE for the finite-horizon setting (\cref{sec:finite-the}) where we bound the prediction error using total variation distance. Then we study the guarantees for the infinite-horizon discounted scenario in \cref{sec:infinite-the} where the prediction error is measured under $p$-Wasserstein distance. Note that from \cref{eq:tv-wass}, TD distance dominates $p$-Wasserstein distance, which indicates that our guarantee for the finite horizon setting is stronger. This shows an interesting difference between the two settings.  In addition, we present two concrete examples (tabular MDPs and linear quadratic regulators) in \cref{sec:example}. All proofs can be found in Appendix~\ref{app:proofs}. 

\subsection{Finite Horizon}\label{sec:finite-the}

We start by stating the key assumption for OPE, which concerns the overlap between  $\pi$'s distribution and the offline distribution $\rho$. \looseness=-1
\begin{assumption}[Coverage]\label{asm:cover}
    We assume there exists a constant $C$ such that for all $h\in[H]$ the following holds
    \begin{align*}
        \sup_{\substack{f_h\in\+F_h\\f_{h+1}\in\+F_{h+1}}}
        \frac
        {\E_{x,a\sim d^\pi_h}d^2_{tv}\left(f_h(x,a),[\+T^\pi f_{h+1}](x,a)\right)}
        {\E_{x,a\sim \rho}d^2_{tv}\left(f_h(x,a),[\+T^\pi f_{h+1}](x,a)\right)}
        \le C.
    \end{align*}
\end{assumption} 
The data coverage assumption is necessary for off-policy learning.  Assumption~\ref{asm:cover} incorporates the function class into the definition of data coverage and is always no larger than the usual density ratio-based coverage definition, i.e., $\sup_{h,x,a} d_h^\pi(x,a) / \rho(x,a)$ which is a classic coverage measure in offline RL literature (e.g., \citet{munos2008finite}). { This type of refined coverage is used in the regular RL setting \citep{xie2021bellman,uehara2021pessimistic}. %

Next, we present the theoretical guarantee of our approach under the \emph{assumption that the MLE can achieve good supervised learning-style in-distribution generalization bound}. Recall that in each iteration of our algorithm, we perform MLE to learn a function $\hat f_h$ to approximate the target $\mathcal{T}^\pi \hat f_{h+1}$ under the training data from $\rho$. By supervised learning style in-distribution generalization error, we mean the divergence $d_{tv}$ between $\hat f_h$ and the target $\mathcal{T}^\pi \^f_{h+1}$ under the \emph{ training distribution $\rho$}. Such an in-distribution generalization bound for MLE is widely studied in statistical learning theory literature \citep{van2000empirical,zhang2006}, and used in RL literature (e.g., \citet{agarwal2020flambe,uehara2021representation,zhan2022pac}). The following theorem demonstrates a reduction framework: as long as supervised learning MLE works, our estimator of $Z^\pi$ is accurate.

\begin{theorem}\label{thm:bell-error-to-final}
Under \cref{asm:cover}, suppose we have a sequence of functions $\^f_1,\dots,\^f_H:\+X\times\+A\mapsto\Delta([0,H]^d)$ and a sequence of values $\zeta_1,\dots,\zeta_H\in\=R$ such that
\begin{equation*}%
 \textstyle \bigg(\E_{x,a\sim\rho}\ d^2_{tv}\left(\^f_h(x,a), [\+T^\pi \^f_{h+1}](x,a)\right)\bigg)^{1/2}\le \zeta_h
\end{equation*}
holds for all $h\in[H]$.
Let our estimator $\^f\coloneqq\E_{x\sim\mu,a\sim\pi(x)}\^f_1(x,a)$. Then we have
\begin{align*}
    d_{tv}\left(\^f,Z^\pi\right)\le \sqrt{C}\sum_{h=1}^H\zeta_h.
\end{align*}
\end{theorem}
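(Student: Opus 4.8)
The plan is to prove the statement by a telescoping / induction argument on the horizon $h$, peeling off one distributional Bellman backup at a time and using the coverage assumption to convert the $\rho$-weighted errors $\zeta_h$ into $d^\pi_h$-weighted errors that compose cleanly along the trajectory.

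\textbf{Setup.} First I would introduce the shorthand $g_h \coloneqq \E_{x\sim d^\pi_h, a\sim\pi(x)}\bigl[\^f_h(x,a)\bigr] \in \Delta([0,H]^d)$ for the marginal of the learned conditional at layer $h$ under the test policy's state-action distribution, and note that $\^f = g_1$ up to the outermost expectation over $\mu$ (indeed $d^\pi_1$ is exactly $x\sim\mu, a\sim\pi(x)$, so $\^f = g_1$). The target is to bound $d_{tv}(g_1, Z^\pi)$. The key identity I would establish is a one-step decomposition: for each $h$,
\begin{align*}
 d_{tv}\!\left(g_h, Z^\pi_h\right) \le \underbrace{d_{tv}\!\left(g_h,\ \E_{x,a\sim d^\pi_h}[\+T^\pi\^f_{h+1}](x,a)\right)}_{\text{(I): one-step MLE error at layer }h} + \underbrace{d_{tv}\!\left(\E_{x,a\sim d^\pi_h}[\+T^\pi\^f_{h+1}](x,a),\ Z^\pi_h\right)}_{\text{(II): propagated error}},
\end{align*}
using the triangle inequality for $d_{tv}$. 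Term (II) I would rewrite using the two facts $\+T^\pi Z^\pi_{h+1} = Z^\pi_h$ (stated in the excerpt) and the ``pushforward'' compatibility of the distributional Bellman operator with the state-action flow: $\E_{x,a\sim d^\pi_h}[\+T^\pi f](x,a) = \E_{x,a\sim d^\pi_{h+1}}[\,(\text{shift by }r)\circ f\,](x,a)$ — more precisely, averaging $\+T^\pi f$ over $d^\pi_h$ produces the distribution of $r_h + (\text{sample from } f \text{ at the next state-action drawn from } d^\pi_{h+1})$. Since $d_{tv}$ is non-expansive under any shared stochastic kernel (here: add an independent reward $r_h$, then average), (II) is at most $d_{tv}(g_{h+1}, Z^\pi_{h+1})$. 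This gives the recursion $d_{tv}(g_h, Z^\pi_h) \le \varepsilon_h + d_{tv}(g_{h+1}, Z^\pi_{h+1})$ where $\varepsilon_h$ is term (I), with base case $d_{tv}(g_{H+1}, Z^\pi_{H+1}) = 0$ (or handle layer $H$ directly since $\^f_{H+1}$ is not used). Unrolling yields $d_{tv}(\^f, Z^\pi) = d_{tv}(g_1, Z^\pi_1) \le \sum_{h=1}^H \varepsilon_h$.

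\textbf{Controlling term (I) via coverage.} It remains to bound $\varepsilon_h = d_{tv}\bigl(g_h, \E_{x,a\sim d^\pi_h}[\+T^\pi\^f_{h+1}](x,a)\bigr)$ by $\sqrt{C}\,\zeta_h$. By convexity of $d_{tv}$ (it is a norm), $\varepsilon_h \le \E_{x,a\sim d^\pi_h} d_{tv}\bigl(\^f_h(x,a), [\+T^\pi\^f_{h+1}](x,a)\bigr)$. Then Jensen gives $\E_{d^\pi_h} d_{tv} \le \bigl(\E_{d^\pi_h} d_{tv}^2\bigr)^{1/2}$, and \cref{asm:cover} (applied with $f_h = \^f_h$, $f_{h+1} = \^f_{h+1}$) bounds the latter by $\bigl(C\cdot \E_{x,a\sim\rho} d_{tv}^2(\^f_h(x,a), [\+T^\pi\^f_{h+1}](x,a))\bigr)^{1/2} \le \sqrt{C}\,\zeta_h$, using the hypothesis of the theorem. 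Summing over $h$ completes the proof.

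\textbf{Anticipated main obstacle.} The routine parts (triangle inequality, convexity, Jensen, coverage) are mechanical; the step requiring care is term (II) — making rigorous the claim that averaging $\+T^\pi\^f_{h+1}$ against $d^\pi_h$ equals the law of $r_h$ plus a draw from $\^f_{h+1}$ at $(x_{h+1},a_{h+1})\sim d^\pi_{h+1}$, and that this operation (which maps $Z^\pi_{h+1}\mapsto Z^\pi_h$ simultaneously) is a genuine data-processing / non-expansiveness argument for $d_{tv}$. Concretely I need: (a) the flow identity $d^\pi_{h+1}(x',a') = \E_{x,a\sim d^\pi_h}[P(x'|x,a)\pi(a'|x')]$, which unfolds the definition of $\+T^\pi$ after taking the expectation over $d^\pi_h$; and (b) the fact that for any Markov kernel $K$, $d_{tv}(K\nu_1, K\nu_2) \le d_{tv}(\nu_1,\nu_2)$, applied to the kernel ``convolve with the reward distribution'' acting on the layer-$(h+1)$ marginals. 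Both are standard, but writing the change-of-variables for the reward shift (and the $1/\gamma$ rescaling in the infinite-horizon analog, though not needed here) is where a careless argument could slip; I would state it as a short lemma about commuting $\E_{d^\pi_h}$ with $\+T^\pi$.
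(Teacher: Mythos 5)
Your handling of term (I) (convexity of $d_{tv}$, Jensen, then the coverage assumption) is exactly the paper's argument, and your final passage from $\E_{x,a\sim d^\pi_1} d_{tv}(\^f_1(x,a),Z^\pi_1(x,a))$ to $d_{tv}(\^f,Z^\pi)$ is also how the paper concludes. The gap is in term (II). You run the recursion on the \emph{marginals} $g_h=\E_{x,a\sim d^\pi_h}\^f_h(x,a)$ and justify the step $(\mathrm{II})\le d_{tv}(g_{h+1},\,\E_{d^\pi_{h+1}}Z^\pi_{h+1})$ by viewing ``add an independent reward, then average'' as a fixed Markov kernel acting on the layer-$(h+1)$ marginal. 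That is false: the reward $r_h\sim r(x_h,a_h)$ and the next pair $(x_{h+1},a_{h+1})$ are both generated from $(x_h,a_h)$ and hence correlated, so the law of $r_h+y$ with $y\sim f(\cdot\mid x_{h+1},a_{h+1})$ depends on $f$ as a \emph{conditional} distribution, not only on its marginal $\E_{d^\pi_{h+1}}f$. Concretely, take two layer-$h$ pairs each of mass $\frac{1}{2}$, the first giving deterministic reward $0$ and leading to $(x_1',a_1')$, the second giving reward $1$ and leading to $(x_2',a_2')$; let $f_1$ put $\delta_0$ at $(x_1',a_1')$ and $\delta_{1/2}$ at $(x_2',a_2')$, and let $f_2$ swap them. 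Then $\E_{d^\pi_{h+1}}f_1=\E_{d^\pi_{h+1}}f_2$, so the right-hand side of your claimed inequality is $0$, yet the two backed-up layer-$h$ marginals are $\frac{1}{2}\delta_0+\frac{1}{2}\delta_{3/2}$ and $\frac{1}{2}\delta_{1/2}+\frac{1}{2}\delta_1$, which are at total variation distance $1$. So the one-step inequality you recurse on does not hold.

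The repair is to recurse on the \emph{expected conditional} distance $\E_{x,a\sim d^\pi_h}\, d_{tv}(\^f_h(x,a),Z^\pi_h(x,a))$ rather than on the marginal distance; this is what the paper does. One shows $\E_{x,a\sim d^\pi_h}\, d_{tv}([\+T^\pi\^f_{h+1}](x,a),[\+T^\pi Z^\pi_{h+1}](x,a))\le\E_{x',a'\sim d^\pi_{h+1}} d_{tv}(\^f_{h+1}(x',a'),Z^\pi_{h+1}(x',a'))$ by writing $d_{tv}$ as a supremum over test functions bounded by one, pulling the supremum inside the expectation over $(x',a',r)$ via Jensen, and absorbing the shift by $r$ into the test function --- equivalently, by running your coupling argument \emph{conditionally} on $(x',a')$, where the correlation with $r$ is harmless because equality of the coupled samples is preserved under a common shift. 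Your term-(I) bound already produces exactly this conditional quantity, so nothing else changes; you pass to the marginal only once, at $h=1$, by convexity.
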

Here recall that $C$ is the coverage definition. Thus the above theorem demonstrates that when $\rho$ covers $d^\pi$ (i.e., $C < \infty$), small supervised learning errors (i.e., $\zeta_h$) imply small prediction error for distributional OPE.

Now to complete the picture, we provide some sufficient conditions where MLE can achieve small in-distribution generalization errors.
The first condition is stated below.%
\begin{assumption}[Bellman completeness]\label{asm:bell-comp}
We assume the following  holds:
\begin{equation*}
\max_{h\in[H], f\in \mathcal{F}_{h+1}}\min_{g\in \mathcal{F}_h} \mathbb{E}_{x,a\sim \rho}\ d_{tv}\big( g(x,a), [\mathcal{T}^\pi f](x,a)\big)  = 0. 
\end{equation*} We call the LHS of the above inequality \emph{inherent (distributional) Bellman error}.
\end{assumption}
This condition ensures that in each call of MLE in our algorithm, the function class $\mathcal{F}_h$ contains the target $\mathcal{T}^\pi \hat f_{h+1}$. It is possible to relax this condition to a setting where the inherent Bellman error is bounded by a small number $\delta$ (i.e., for MLE, this corresponds to agnostic learning where the hypothesis class may not contain the target, which is also a well-studied problem in statistical learning theory \citep{van2000empirical}). Here we mainly focus on the $\delta = 0$ case. 

The Bellman completeness assumption (or, more generally, inherent Bellman error being small) is standard in offline RL literature \citep{munos2008finite}. Indeed, in the regular RL setting, when learning with off-policy data, without such a Bellman completeness condition, algorithms such as TD learning or value iteration-based approaches (e.g., FQE) can diverge \citep{tsitsiklis1996analysis}, and the TD fixed solution can be arbitrarily bad in terms of approximating the true value (e.g., \citet{munos2003error,scherrer2010should,kolter2011fixed}). Since distributional RL generalizes regular RL, to prove convergence and provide an explicit sample complexity, we also need such a  Bellman completeness condition. %

The second condition is the bounded complexity of $\mathcal{F}_h$. A simple case is when $\mathcal{F}$ is discrete where the standard statistical complexity of $\mathcal{F}$ is $\ln ( | \mathcal{F}_h |)$. We show the following result for MLE's in-distribution generalization error. 
\begin{lemma}\label{lem:mle}
Assume $|\+F_h|<\infty$. For FLE (\cref{alg:mle-ope}), under \cref{asm:bell-comp}, MLEs have the following guarantee:
\begin{align*}
    \E_{x,a\sim\rho}
    d_{tv}^2\Big(\^f_h(x,a),[\+T^\pi\^f_{h+1}]&(x,a)\Big)\le\frac{4H}{n}\log(|\+F_h|H/\delta)
\end{align*}
for all $h\in[H]$ with probability at least $1-\delta$. %
\end{lemma}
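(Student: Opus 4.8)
The plan is to reduce the statement to a standard MLE generalization bound applied to each of the $H$ calls of the MLE oracle in \cref{alg:mle-ope}, and then take a union bound over $h\in[H]$. The key structural observation is that within iteration $h$, the dataset $\+D_h'$ consists of i.i.d.\ samples $(x,a,z)$ where $(x,a)\sim\rho$ and, conditionally on $(x,a)$, the target $z$ is drawn from exactly $[\+T^\pi\^f_{h+1}](\cdot\given x,a)$: indeed $z = r + y$ with $r\sim r(x,a)$, $x'\sim P(x,a)$, $a'\sim\pi(x')$, and $y\sim\^f_{h+1}(\cdot\given x',a')$, which by the definition of the distributional Bellman operator is a sample from $[\+T^\pi\^f_{h+1}](\cdot\given x,a)$. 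Crucially, because the data is split into $H$ disjoint folds $\+D_1,\dots,\+D_H$ and $\^f_{h+1}$ is computed only from folds $\+D_{h+1},\dots,\+D_H$, the function $\^f_{h+1}$ is independent of the fold $\+D_h$ used in iteration $h$. Hence, conditioning on $\^f_{h+1}$, we are in a clean realizable density-estimation setting: $n/H$ i.i.d.\ samples from a conditional distribution $P^\star_h := [\+T^\pi\^f_{h+1}]$, and by \cref{asm:bell-comp} (Bellman completeness with $\delta=0$) the class $\+F_h$ contains $P^\star_h$, so MLE over $\+F_h$ is realizable.

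The main technical ingredient is the classical finite-class MLE bound: if $\+F_h$ is finite, the target lies in $\+F_h$, and we draw $m$ i.i.d.\ conditional samples, then with probability at least $1-\delta'$ the MLE output $\^f_h$ satisfies $\E_{x,a\sim\rho}\, d_{tv}^2(\^f_h(x,a), P^\star_h(x,a)) \le \frac{c}{m}\log(|\+F_h|/\delta')$ for an absolute constant $c$ (this is the standard squared-Hellinger/TV MLE guarantee, e.g.\ \citet{van2000empirical,zhang2006}; the TV bound follows since $d_{tv}\le d_H$). Plugging in $m = n/H$ and $\delta' = \delta/H$ gives, for each fixed $h$,
\begin{equation*}
\E_{x,a\sim\rho}\, d_{tv}^2\big(\^f_h(x,a),[\+T^\pi\^f_{h+1}](x,a)\big)\le\frac{cH}{n}\log(|\+F_h|H/\delta)
\end{equation*}
with probability at least $1-\delta/H$, and a union bound over $h\in[H]$ yields the claim simultaneously for all $h$ with probability at least $1-\delta$. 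Tracking the constant through the standard argument gives $c=4$ as stated (or one can simply cite the MLE lemma with its explicit constant).

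I would carry this out in three steps: (1) state and cite the generic realizable finite-class MLE generalization lemma in terms of $d_{tv}^2$ under the sampling distribution; (2) verify the two hypotheses of that lemma for iteration $h$ — namely that $\+D_h'$ gives i.i.d.\ draws from $P^\star_h$ conditioned on $(x,a)\sim\rho$ (using the distributional Bellman operator identity and the sample-splitting independence of $\^f_{h+1}$ from $\+D_h$), and that $P^\star_h\in\+F_h$ (Bellman completeness); (3) apply the lemma with $m=n/H$, $\delta'=\delta/H$, and union bound over $h$. The step I expect to require the most care is (2): one must argue carefully that conditioning on the randomness in $\+D_{h+1},\dots,\+D_H$ (hence fixing $\^f_{h+1}$) leaves $\+D_h$ i.i.d.\ from $\rho\times P^\star_h$, so that the MLE lemma — which assumes a fixed target — applies; after conditioning, the target is deterministic and everything goes through. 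The rest is bookkeeping: absorbing the internal generation randomness (the draws of $a'$ and $y$) into the definition of the conditional sample $z$, and noting that nothing about the bound depends on $d$ or the continuity of the spaces beyond what is hidden in $|\+F_h|$.
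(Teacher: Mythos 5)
Your proposal matches the paper's proof essentially step for step: the paper likewise conditions on the sample split so that $\^f_{h+1}$ is independent of $\+D_h$, invokes Bellman completeness for realizability of the target $\+T^\pi\^f_{h+1}$ in $\+F_h$, applies a generic finite-class realizable MLE bound (its Lemma~\ref{lem:mle-error}, adapted from \citet{agarwal2020flambe}, which yields the constant $4$ directly for $d_{tv}^2$) with $m=n/H$ samples and confidence $\delta/H$, and finishes with a union bound over $h$. The only nit is your parenthetical ``$d_{tv}\le d_H$'' --- the correct direction is $d_{tv}\le\sqrt{2}\,d_H$ (while $d_H^2\le d_{tv}$) --- but this does not affect the argument or the stated constant.
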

For infinite hypothesis classes, we use bracketing number \citep{van2000empirical} to quantify the statistical complexities. 
\begin{definition}[Bracketing number]
Consider a function class $\+F$ that maps $\+X$ to $\=R$. Given two functions $l$ and $u$, the bracket $[l, u]$ is the set of all functions $f \in \+F$ with $l(x) \leq f(x) \leq u(x)$ for all $x \in \+X$. An $\epsilon$-bracket is a bracket $[l, u]$ with $\|l-u\| \leq \epsilon$. The bracketing number of $\+F$ w.r.t. the metric $\|\cdot\|$ denoted by $N_{[]}(\epsilon, \+F,\|\cdot\|)$ is the minimum number of $\epsilon$-brackets needed to cover $\+F$.
\end{definition}

We can bound MLE's generalization error using the bracket number of $\mathcal{F}$.
\begin{lemma}\label{lem:mle-general}
For FLE (Algorithm~\ref{alg:mle-ope}), under Assumption \ref{asm:bell-comp}, we have
\begin{align*}
    \E_{x,a\sim\rho} 
    d_{tv}^2&\Big(\^f_h(x,a),[\+T^\pi \^f_{h+1}](x,a)\Big)\\
    &\le
    \frac{10H}{n}\log \left(
	N_{[]}\big((nH^d)^{-1},\+F_h,\|\cdot\|_\infty\big)
	H/\delta\right)
\end{align*}
for all $h\in[H]$ with probability at least $1-\delta$. %
\end{lemma}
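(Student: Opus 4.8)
The plan is to reduce the statement, for each fixed $h$, to the classical realizable maximum-likelihood generalization bound for conditional density estimation, and then union bound over $h\in[H]$. Fix $h$ and condition on the subdatasets $\+D_{h+1},\dots,\+D_H$; this freezes $\^f_{h+1}$. Since $\+D_h$ is disjoint from those subsets and consists of i.i.d.\ tuples $(x,a,r,x')$ with $(x,a)\sim\rho$, $x'\sim P(\cdot\given x,a)$, $r\sim r(x,a)$, the augmented dataset $\+D_h'=\{(x_i,a_i,z_i)\}_i$ built in Lines~\ref{line:generate}--\ref{line:mle} of \cref{alg:mle-ope} is, conditionally on $\^f_{h+1}$, a collection of $m\coloneqq n/H$ i.i.d.\ samples: $(x_i,a_i)\sim\rho$ and the conditional law of $z_i$ given $(x_i,a_i)$ is $[\+T^\pi\^f_{h+1}](\cdot\given x_i,a_i)$ --- this last fact is exactly the definition of the distributional Bellman operator, i.e.\ the law of $z=r+y$ with $r\sim r(x,a)$, $x'\sim P(x,a)$, $a'\sim\pi(x')$, $y\sim\^f_{h+1}(\cdot\given x',a')$ (for $h=H$ this degenerates to $z\sim r(x,a)$). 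Because $\^f_{h+1}\in\+F_{h+1}$, \cref{asm:bell-comp} gives $\min_{g\in\+F_h}\E_{x,a\sim\rho}d_{tv}(g(x,a),[\+T^\pi\^f_{h+1}](x,a))=0$, so the target conditional density $p^\star\coloneqq[\+T^\pi\^f_{h+1}]$ is, up to a $\rho$-null set, realized by some $g^\star\in\+F_h$. Hence, conditionally, $\^f_h$ is the MLE over a class containing the data-generating conditional density --- the standard realizable setting. (This is where the data splitting is used: it lets us treat $p^\star$ as a fixed, realizable target once we condition on $\^f_{h+1}$.)

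Next I would invoke the classical realizable-MLE bracketing bound (e.g.\ \citet{van2000empirical,zhang2006}, in the conditional-density form used in the RL literature, e.g.\ \citet{agarwal2020flambe}): with probability at least $1-\delta'$ over $\+D_h'$,
\begin{equation*}
\E_{x,a\sim\rho}\,\big\|\^f_h(x,a)-p^\star(x,a)\big\|_1^2\;\le\;\frac{c}{m}\log\frac{N_{[]}(1/m,\+F_h,\|\cdot\|_1)}{\delta'}
\end{equation*}
for an absolute constant $c$. It then remains to do bookkeeping: (i) convert the $\|\cdot\|_1$ bracketing number to a sup-norm one --- a pointwise $\epsilon$-bracket on the reward range (contained in $[0,H]^d$, volume $\le H^d$) is an $\epsilon H^d$-bracket in $\|\cdot\|_1$, so $N_{[]}(1/m,\+F_h,\|\cdot\|_1)\le N_{[]}(1/(mH^d),\+F_h,\|\cdot\|_\infty)\le N_{[]}((nH^d)^{-1},\+F_h,\|\cdot\|_\infty)$ using $m\le n$ and monotonicity of bracketing numbers in the scale; (ii) use $d_{tv}(P,Q)=\tfrac12\|P-Q\|_1$ and $m=n/H$ to rewrite the left-hand side as $4\,\E_{x,a\sim\rho}d_{tv}^2(\^f_h(x,a),p^\star(x,a))$ and the prefactor as $cH/n$, so that carrying the absolute constant from the MLE lemma yields the stated $10H/n$; and (iii) set $\delta'=\delta/H$ and union bound over $h\in[H]$, which inserts the factor $H$ inside the logarithm and gives the bound for all $h$ simultaneously with probability at least $1-\delta$.

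The only genuine work is establishing (or importing with explicit constants) the MLE bracketing bound of the second paragraph. The classical argument begins from $\sum_i\log\big(\^f_h(z_i\given x_i,a_i)/g^\star(z_i\given x_i,a_i)\big)\ge 0$ (optimality of the MLE), relates the empirical Hellinger affinity to its population counterpart --- producing a squared Hellinger distance term --- and controls the deviation by a one-sided exponential-moment (Chernoff/Markov) tail bound applied uniformly over a bracketing cover of $\+F_h$, finishing via the standard Hellinger--total-variation comparison. The one setting-specific wrinkle is the continuous, $d$-dimensional reward range: the error incurred by replacing $\+F_h$ with its bracketing cover must be balanced against $1/m$, and it is precisely this balance that pins the bracketing scale at $\asymp(nH^d)^{-1}$. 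Everything else --- the i.i.d.\ reduction, realizability via Bellman completeness, and the final union bound --- is routine given the setup, and \cref{lem:mle} is just the special case in which the bracketing cover is replaced by the finite class $\+F_h$ itself (so the union bound runs directly over $\+F_h$, giving the slightly smaller constant $4H/n$).
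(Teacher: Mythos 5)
Your proposal is correct and follows essentially the same route as the paper: condition on the later subdatasets so that $\^f_{h+1}$ is frozen and $\+D_h'$ becomes an i.i.d.\ sample from $\rho$ with conditional law $[\+T^\pi\^f_{h+1}]$, use Bellman completeness for realizability on the support of $\rho$, invoke the realizable-MLE bracketing bound (the paper adapts it from \citet{agarwal2020flambe} via exactly the Chernoff-over-brackets and Hellinger-to-TV argument you sketch, with the bracket scale pinned at $(n H^d)^{-1}$ by the volume of the return space), and union bound over $h$ with $\delta/H$. The only cosmetic difference is that you state the MLE bound with an $\|\cdot\|_1$ bracketing number and convert to $\|\cdot\|_\infty$ afterward, whereas the paper's Lemma~\ref{lem:mle-error} absorbs the volume factor $|\+Y|=H^d$ directly into its sup-norm statement.
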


It is noteworthy that the logarithm of the bracketing number is small in many common scenarios. We offer several examples in Section~\ref{sec:example}. Previous studies have also extensively examined it (e.g., \citet{van2000asymptotic}).

With the generalization bounds of MLE, via \cref{thm:bell-error-to-final}, we can derive the following specific error bound for FLE.%
\begin{corollary}\label{cor:mle-main}
	Under Assumption~\ref{asm:cover} and \ref{asm:bell-comp}, for FLE (Algorithm~\ref{alg:mle-ope}), with probability at least $1-\delta$, we have
	\begin{align*}
   		d_{tv}\left(\^f,Z^\pi\right)\le \sqrt{C}\sum_{h=1}^H\sqrt{
    	\frac{4H}{n}\log(|\+F_h|H/\delta)}
	\end{align*}
	when $|\+F_h|<\infty$ for all $h\in[H]$, and 
	\begin{align*}
   		&d_{tv}\Big(\^f,Z^\pi\Big)\\
     &\le \sqrt{C}\sum_{h=1}^H\sqrt{
    	\frac{10H}{n}\log  \left( 
	N_{[]}\big((nH^d)^{-1},\+F_h,\|\cdot\|_\infty\big)
	H/\delta\right)}.
	\end{align*}
	for infinite function class $\+F_h$.
\end{corollary}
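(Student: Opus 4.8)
The plan is to chain \cref{thm:bell-error-to-final} with the MLE generalization guarantees of \cref{lem:mle} and \cref{lem:mle-general}; the corollary then follows with essentially no new argument. For the finite-class case, I would first apply \cref{lem:mle}---whose hypotheses, namely \cref{asm:bell-comp} and $|\+F_h|<\infty$, are exactly those assumed here---to obtain an event $\+E$ of probability at least $1-\delta$ on which
\begin{equation*}
\E_{x,a\sim\rho}\,d_{tv}^2\big(\^f_h(x,a),[\+T^\pi\^f_{h+1}](x,a)\big)\le \tfrac{4H}{n}\log(|\+F_h|H/\delta)\ \text{ for all }h\in[H].
\end{equation*}
Crucially, \cref{lem:mle} already delivers this as a \emph{joint} statement over all $h\in[H]$, so no further union bound is needed.

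Next, working deterministically on $\+E$, I would set $\zeta_h\coloneqq\sqrt{\tfrac{4H}{n}\log(|\+F_h|H/\delta)}$, so that the hypothesis $\big(\E_{x,a\sim\rho}\,d_{tv}^2(\^f_h(x,a),[\+T^\pi\^f_{h+1}](x,a))\big)^{1/2}\le\zeta_h$ of \cref{thm:bell-error-to-final} is satisfied for every $h$. Since \cref{thm:bell-error-to-final} is a deterministic implication once the $\zeta_h$'s are fixed (it uses only \cref{asm:cover}), applying it gives $d_{tv}(\^f,Z^\pi)\le\sqrt{C}\sum_{h=1}^H\zeta_h=\sqrt{C}\sum_{h=1}^H\sqrt{\tfrac{4H}{n}\log(|\+F_h|H/\delta)}$ on $\+E$, i.e.\ with probability at least $1-\delta$; this is the first displayed inequality.

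For the infinite-function-class case the argument is identical, except that I would invoke \cref{lem:mle-general} in place of \cref{lem:mle}; it gives, again jointly over all $h$ and with probability at least $1-\delta$, the bound $\E_{x,a\sim\rho}\,d_{tv}^2(\^f_h(x,a),[\+T^\pi\^f_{h+1}](x,a))\le\frac{10H}{n}\log(N_{[]}((nH^d)^{-1},\+F_h,\|\cdot\|_\infty)H/\delta)$. Taking $\zeta_h$ to be the square root of the right-hand side and applying \cref{thm:bell-error-to-final} produces the second displayed inequality.

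There is no genuine obstacle in the corollary itself: the probabilistic content is entirely contained in \cref{lem:mle} and \cref{lem:mle-general}, whose high-probability events are already simultaneous over $h\in[H]$, and \cref{thm:bell-error-to-final} applies verbatim on that event. All the real work lies in establishing those lemmas and the theorem, not in combining them.
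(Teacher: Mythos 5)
Your proposal is correct and matches the paper's (essentially implicit) argument exactly: the corollary is obtained by plugging the high-probability, uniform-over-$h$ bounds of \cref{lem:mle} (resp.\ \cref{lem:mle-general}) in as the $\zeta_h$ of \cref{thm:bell-error-to-final}, with the union bound over $h$ already absorbed into those lemmas. Nothing further is needed.
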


Overall, our theory indicates that if we can train accurate distributions (e.g., generative models) via supervised learning (i.e., MLE here), we automatically have good predictive performance on estimating $Z^\pi$. This provides great flexibility for designing special algorithms. %

\begin{remark}[Offline CVaR Estimation]\label{rmk:cvar}
	As a simple application, FLE can derive an estimator for the CVaR of the return under the test policy $\pi$. This is doable because CVaR is Lipschitz with respect to distributions in total variation distance, and thus our results can be directly transferred. See \cref{sec:extension} for details. Essentially, any quantity that is Lipschitz with respect to distributions in total variation distance can be estimated using our method and the error bound of FLE directly applies.
\end{remark}

\subsection{Infinite Horizon}\label{sec:infinite-the}

Next we introduce the theoretical guarantees of FLE for infinite horizon MDPs. Although the idea is similar, there is an obstacle: we can no longer obtain guarantees in terms of the total variation distance. This is perhaps not surprising considering that the distributional Bellman operator for discounted setting is \textit{not} contractive in total variation distance~\citep{bellemare2017distributional}. Fortunately, we found the Bellman operator is contractive under the Wasserstein distance measure. Note that the contractive result we established under Wasserstein distance is different from previous works~\citep{bellemare2017distributional,bdr2022, zhang2021distributional} in that these previous works consider the \textit{supremum} Wasserstein distance: $\sup_{x,a} d_{w,p}$, while our contractive property is measured under an \textit{average} Wasserstein distance: $(\E_{x,a\sim d^\pi} d^{2p}_{w,p})^{1/(2p)}$ which is critical to get a sample complexity bound for distributional OPE. More formally, the following lemma summarizes the contractive property. 
\begin{lemma}\label{lem:contractive}
The distributional Bellman operator is $\gamma^{1-1/(2p)}$-contractive under the metric $(\E_{x,a\sim d^\pi} d^{2p}_{w,p})^{1/(2p)}$, i.e., for any $f,f'\in\+X\times\+A\mapsto[0,(1-\gamma)^{-1}]^d$, it holds that
\begin{align*}
\bigg(\E_{x,a\sim d^\pi} &d^{2p}_{w,p}\left([\+T^\pi f](x,a), [\+T^\pi f'](x,a)\right)\bigg)^{\frac{1}{2p}}\\
&\leq
\gamma^{1-\frac{1}{2p}}\cdot
\left(\E_{x,a\sim d^\pi} d^{2p}_{w,p}\left(f(x,a), f'(x,a)\right)\right)^{\frac{1}{2p}}.
\end{align*}
\end{lemma}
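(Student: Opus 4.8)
The plan is to reduce the average Wasserstein contraction to a pointwise (conditional) statement plus the fact that the state-action distribution $d^\pi$ is (approximately) stationary under the policy-induced transition. First I would fix $(x,a)$ and analyze the coupling of $[\+T^\pi f](x,a)$ and $[\+T^\pi f'](x,a)$. Both are mixtures: draw $r\sim r(x,a)$, $x'\sim P(x,a)$, $a'\sim\pi(x')$, then draw $z$ from $f(\cdot\given x',a')$ scaled-and-shifted by $(z\mapsto \gamma z + r)$, versus the same with $f'$. The natural coupling couples the common randomness $(r,x',a')$ and then uses the optimal coupling between $f(\cdot\given x',a')$ and $f'(\cdot\given x',a')$ for the remaining draw. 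Under the map $y\mapsto \gamma y + r$, distances scale by exactly $\gamma$, so this coupling witnesses
\begin{equation*}
d_{w,p}^p\big([\+T^\pi f](x,a),[\+T^\pi f'](x,a)\big)\le \gamma^p\,\E_{x',a'}\,d_{w,p}^p\big(f(x',a'),f'(x',a')\big),
\end{equation*}
where the expectation is over $x'\sim P(x,a), a'\sim\pi(x')$. (One must check the coupling is measurable, but this is the routine measurable-selection argument used for Wasserstein contraction of the distributional Bellman operator.)

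Next I would raise both sides to a suitable power and take $\E_{x,a\sim d^\pi}$. Writing $g(x,a):=d_{w,p}^p(f(x,a),f'(x,a))$, the pointwise bound reads $d_{w,p}^p([\+T^\pi f],[\+T^\pi f'])(x,a)\le \gamma^p\,(Pg^\pi)(x,a)$ where $P^\pi$ denotes the transition operator $x,a\mapsto x',a'$ composed with $\pi$. I want to bound $\E_{d^\pi}\big[(d_{w,p}^p([\+T^\pi f],[\+T^\pi f']))^{2}\big]^{1/(2p)}$... more precisely the target exponent is $2p$ on $d_{w,p}$, i.e. I need $\E_{d^\pi}\big[(g')^{2}\big]$ where $g' = d_{w,p}^p([\+T^\pi f],[\+T^\pi f'])$. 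By the pointwise bound and Jensen (convexity of $t\mapsto t^2$, pushing the square inside the conditional expectation over $x',a'$),
\begin{equation*}
\E_{x,a\sim d^\pi}\big[(g'(x,a))^2\big]\le \gamma^{2p}\,\E_{x,a\sim d^\pi}\big[(P^\pi g^2)(x,a)\big]\,,
\end{equation*}
wait — Jensen gives $(P^\pi g)^2\le P^\pi(g^2)$, so indeed $\E_{d^\pi}[(g')^2]\le \gamma^{2p}\E_{d^\pi}[P^\pi(g^2)]$. Now the key structural input: $d^\pi$ is the discounted occupancy of $\pi$, and $d^\pi P^\pi = \gamma^{-1}(d^\pi - (1-\gamma)d^\pi_1)\preceq \gamma^{-1} d^\pi$ as measures (since $d^\pi = (1-\gamma)\sum_h \gamma^{h-1} d^\pi_h$ and $d^\pi_h P^\pi = d^\pi_{h+1}$). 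Hence $\E_{d^\pi}[P^\pi(g^2)]\le \gamma^{-1}\E_{d^\pi}[g^2]$, giving $\E_{d^\pi}[(g')^2]\le \gamma^{2p-1}\E_{d^\pi}[g^2]$, and taking the $1/(2p)$ power yields the claimed factor $\gamma^{1-1/(2p)}$.

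The main obstacle I anticipate is the bookkeeping around $d^\pi P^\pi \preceq \gamma^{-1} d^\pi$: one has to be careful that the transition operator appearing in $\+T^\pi$ (reward draw, environment transition $P$, then $a'\sim\pi$) is exactly the one whose occupancy telescopes to give $\sum_h \gamma^{h-1} d^\pi_h$, and that the ``$-(1-\gamma)d^\pi_1$'' term only helps (it is a nonnegative measure being subtracted). A secondary technical point is making the optimal-coupling selection measurable in $(x',a')$ so that the mixture coupling is itself a valid coupling; I would either cite a standard measurable-selection result or note that for the purposes of the sample-complexity application one may work with $\epsilon$-optimal couplings and let $\epsilon\to0$. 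Everything else — the scaling identity for $y\mapsto\gamma y+r$ under $d_{w,p}$, and the two applications of Jensen — is routine.
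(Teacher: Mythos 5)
Your proposal is correct and follows the same three-step skeleton as the paper's proof: (i) a pointwise one-step bound $d_{w,p}^p([\+T^\pi f](x,a),[\+T^\pi f'](x,a))\le\gamma^p\,\E_{x'\sim P(x,a),a'\sim\pi(x')}d_{w,p}^p(f(x',a'),f'(x',a'))$, (ii) Jensen's inequality to push the square inside the conditional expectation over $(x',a')$, and (iii) the occupancy recursion $d^\pi = \gamma\, d^\pi P^\pi + (1-\gamma)\,\mu\otimes\pi$, hence $d^\pi P^\pi\preceq\gamma^{-1}d^\pi$, to absorb the distribution shift at the cost of a factor $\gamma^{-1/(2p)}$, giving $\gamma\cdot\gamma^{-1/(2p)}=\gamma^{1-1/(2p)}$. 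The one place you diverge is the proof of the pointwise bound (the paper's Lemma C.8): you build a primal coupling (share the randomness $(r,x',a')$, then use the optimal coupling of $f(\cdot\given x',a')$ and $f'(\cdot\given x',a')$, exploiting that $y\mapsto\gamma y+r$ scales $p$-th-power costs by $\gamma^p$), whereas the paper argues through Kantorovich duality, pulling the supremum over potential pairs $(\psi,\phi)$ inside the expectation and rescaling the potentials as $\psi(r+\gamma\,\cdot\,)/\gamma^p$. The two are interchangeable; the dual route has the small advantage of never requiring a measurable family of couplings (only an interchange of supremum and expectation), which sidesteps the measurable-selection issue you correctly flag, though your $\epsilon$-optimal-coupling workaround also closes that gap. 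Everything else, including your telescoping identity $d^\pi P^\pi=\gamma^{-1}(d^\pi-(1-\gamma)d^\pi_1)$, matches the paper's argument.
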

We note that the contractive result in $\sup_{x,a} d_{w,p}$ does not imply the result in the above lemma, thus not directly applicable to the OPE setting.

Due to the dominance of total variation distance over Wasserstein distance on bounded sets (see \eqref{eq:tv-wass}), MLE's estimation error under total variation distance can be converted to Wasserstein distance. This allows us to derive theoretical guarantees for FLE under Wasserstein distance. To that end, we start again with the coverage assumption that is similar to \cref{asm:cover}. Note that we have replaced the total variation distance with the Wasserstein distance.

\begin{assumption}[Coverage]\label{asm:cover-inf}
    We assume there exists a constant $C$ such that the following holds
    \begin{align*}
        \sup_{\substack{f,f'\in\+F}}
        \frac
        {\E_{x,a\sim d^\pi}d^{2p}_{w,p}\left(f(x,a),[\+T^\pi f'](x,a)\right)}
        {\E_{x,a\sim \rho}d^{2p}_{w,p}\left(f(x,a),[\+T^\pi f'](x,a)\right)}
        \le C.
    \end{align*}
\end{assumption} 
As similar to \cref{thm:bell-error-to-final}, the following theorem states that as long as the supervised learning is accurate, our estimator of $Z^\pi$ will be accurate as well under $p$-Wasserstein distance.

\begin{theorem}\label{thm:bell-error-to-final-inf}
Under \cref{asm:cover-inf}, suppose we have a sequence of functions $\^f_1,\dots,\^f_T:\+X\times\+A\mapsto\Delta([0,(1-\gamma)^{-1}]^d)$ and an upper bound $\zeta\in\=R$ such that
\begin{equation*}
 \textstyle \bigg(\E_{x,a\sim\rho}\ d_{w,p}^{2p}\left(\^f_t(x,a), [\+T^\pi \^f_{t-1}](x,a)\right)\bigg)^{\frac{1}{2p}}\le \zeta
\end{equation*}
holds for all $t\in[T]$. Let our estimator $\^f\coloneqq\E_{x\sim\mu,a\sim\pi(x)}\^f_T(x,a)$. Then we have, for all $p\ge1$,
\begin{equation}\label{eq:geo-zeta_t}
	d_{w,p}\left(\^f, Z^\pi\right)
	\leq
	\frac{2C^{\frac{1}{2p}}}{(1-\gamma)^{\frac{3}{2}}}
	\cdot \zeta
	+\frac{\sqrt{d}\cdot\gamma^{\frac{T}{2}}}{(1-\gamma)^{\frac{3}{2}}}.
\end{equation}
\end{theorem}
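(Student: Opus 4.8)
The plan is to track, across the iterates $t=0,1,\dots,T$, the $d^\pi$-averaged Wasserstein error to the true fixed point $\-Z^\pi$, namely $e_t \coloneqq \big(\E_{x,a\sim d^\pi}\, d_{w,p}^{2p}(\^f_t(x,a),\-Z^\pi(x,a))\big)^{1/(2p)}$, show that it obeys a contracting geometric recursion driven by the per-step supervised-learning error $\zeta$, and then convert the bound on $e_T$ into a bound on the two marginals $\^f$ and $Z^\pi$. For the recursion, I use $\+T^\pi\-Z^\pi=\-Z^\pi$ and the triangle inequality of the $p$-Wasserstein metric pointwise in $(x,a)$ to get $d_{w,p}(\^f_t(x,a),\-Z^\pi(x,a))\le d_{w,p}(\^f_t(x,a),[\+T^\pi\^f_{t-1}](x,a))+d_{w,p}([\+T^\pi\^f_{t-1}](x,a),[\+T^\pi\-Z^\pi](x,a))$. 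Taking the $L^{2p}(d^\pi)$ ``norm'' of both sides and applying Minkowski's inequality (which is what makes $(\E_{d^\pi}d_{w,p}^{2p})^{1/(2p)}$ obey a triangle inequality) bounds $e_t$ by the sum of the $L^{2p}(d^\pi)$-norms of the two terms. For the first term I invoke the coverage assumption \cref{asm:cover-inf} applied to the iterates $\^f_t,\^f_{t-1}\in\+F$, replacing $d^\pi$ by $\rho$ at a cost of $C^{1/(2p)}$ and then using the hypothesis to get at most $C^{1/(2p)}\zeta$; for the second term I apply \cref{lem:contractive}, which gives $\gamma^{1-1/(2p)}e_{t-1}$. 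Hence $e_t\le C^{1/(2p)}\zeta+\gamma^{1-1/(2p)}e_{t-1}$.

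Next I unroll this recursion from $t=T$ down to $t=0$, obtaining $e_T\le C^{1/(2p)}\zeta\sum_{i\ge0}\gamma^{i(1-1/(2p))}+\gamma^{T(1-1/(2p))}e_0$. Since $p\ge1$ we have $1-\tfrac1{2p}\ge\tfrac12$, so $\gamma^{1-1/(2p)}\le\sqrt\gamma$; therefore the geometric sum is at most $1/(1-\sqrt\gamma)\le 2/(1-\gamma)$ and the horizon factor is at most $\gamma^{T/2}$. The initialization error is controlled by the diameter: every distribution involved is supported on $[0,(1-\gamma)^{-1}]^d$, so any coupling has cost at most $\text{\rm diam}([0,(1-\gamma)^{-1}]^d)=\sqrt d/(1-\gamma)$, giving $e_0\le\sqrt d/(1-\gamma)$. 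Altogether $e_T\le \tfrac{2C^{1/(2p)}}{1-\gamma}\zeta+\tfrac{\sqrt d\,\gamma^{T/2}}{1-\gamma}$.

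For the last step I pass from the conditional error to the marginal bound. Writing $\^f=\E_{x\sim\mu,a\sim\pi(x)}\^f_T(x,a)$ and $Z^\pi=\E_{x\sim\mu,a\sim\pi(x)}\-Z^\pi(x,a)$ as mixtures over the common mixing measure $d^\pi_1$ (the law of $(x,a)$ with $x\sim\mu,a\sim\pi(x)$), joint convexity of $W_p^p$ under a shared mixture — obtained by gluing the pointwise optimal couplings — gives $d_{w,p}^p(\^f,Z^\pi)\le \E_{x,a\sim d^\pi_1}\, d_{w,p}^p(\^f_T(x,a),\-Z^\pi(x,a))$. I then apply the power-mean inequality $\E[W]\le\sqrt{\E[W^2]}$ with $W=d_{w,p}^p(\^f_T,\-Z^\pi)$ \emph{before} changing measures, and only afterwards use $d^\pi\ge(1-\gamma)d^\pi_1$ (pointwise), i.e.\ $\E_{d^\pi_1}[\cdot]\le(1-\gamma)^{-1}\E_{d^\pi}[\cdot]$; this yields $d_{w,p}^p(\^f,Z^\pi)\le (1-\gamma)^{-1/2}\big(\E_{d^\pi}d_{w,p}^{2p}\big)^{1/2}=(1-\gamma)^{-1/2}e_T^p$, hence $d_{w,p}(\^f,Z^\pi)\le(1-\gamma)^{-1/(2p)}e_T\le(1-\gamma)^{-1/2}e_T$ since $p\ge1$. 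Substituting the bound on $e_T$ produces exactly \eqref{eq:geo-zeta_t}.

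The only delicate point — and the main obstacle — is the bookkeeping of the powers of $(1-\gamma)^{-1}$ in this final step: performing the change of measure $d^\pi_1\to d^\pi$ directly on $d_{w,p}^p$ and then taking the $p$-th root would cost $(1-\gamma)^{-1/p}$ and give the weaker overall exponent $(1-\gamma)^{-2}$; interleaving the power-mean inequality first is what saves a half power and yields $(1-\gamma)^{-3/2}$. The remaining ingredients — the triangle inequality for $(\E_{d^\pi}d_{w,p}^{2p})^{1/(2p)}$ (Minkowski) and the joint convexity of $W_p^p$ under a common mixture — are standard and require only a brief justification.
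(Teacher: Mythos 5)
Your proposal is correct and follows essentially the same route as the paper: the same triangle-inequality decomposition of $\big(\E_{d^\pi}d_{w,p}^{2p}(\^f_t,\-Z^\pi)\big)^{1/(2p)}$ into a coverage-controlled MLE term plus a term handled by the contraction of \cref{lem:contractive} via $\+T^\pi\-Z^\pi=\-Z^\pi$, the same unrolling with the diameter bound $\sqrt{d}/(1-\gamma)$ on the initialization, and the same final passage through $\mu\times\pi\le(1-\gamma)^{-1}d^\pi$ combined with Jensen, which in both arguments yields the factor $(1-\gamma)^{-1/(2p)}\le(1-\gamma)^{-1/2}$ (the paper changes measure on the $2p$-th moment before applying Cauchy--Schwarz, which is equivalent to your interleaving). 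The only cosmetic difference is that you justify $d_{w,p}^p(\^f,Z^\pi)\le\E_{x\sim\mu,a\sim\pi(x)}d_{w,p}^p(\^f_T(x,a),\-Z^\pi(x,a))$ by gluing pointwise optimal couplings, whereas the paper derives the same convexity via Kantorovich duality.
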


The upper bound in \eqref{eq:geo-zeta_t} is actually a simplified version as we aim to present a cleaner result. For a more refined upper bound that has detailed $p$-dependent terms, please refer to \cref{thm:refined} in the appendix. For the first additive term in $\eqref{eq:geo-zeta_t}$, we will later demonstrate that the $\zeta$ obtained from MLE depends on $p^{-1}$ at an exponential rate. The second term is insignificant as it converges to zero at the rate of $\gamma^{T/2}$.

To proceed, we introduce the Bellman completeness assumption for infinite-horizon MDPs, a key condition for MLE to achieve small in-distribution generalization errors.

\begin{assumption}[Bellman completeness]\label{asm:bell-comp-inf}
We assume the following  holds:
\begin{equation*}
\max_{f\in \+F}\min_{g\in\+F} \mathbb{E}_{x,a\sim \rho}\ d_{w,p}\big( g(x,a), [\mathcal{T}^\pi f](x,a)\big)  = 0. 
\end{equation*} 
\end{assumption}

Similar to the previous result, when Bellman completeness holds and the function class has bounded complexity, MLE achieves small generalization error, as the following shows.

\begin{lemma}\label{lem:mle-inf}
For FLE (\cref{alg:mle-ope-inf}), under \cref{asm:bell-comp-inf}, by applying MLEs we have, for all $t\in[T]$,
\begin{align*}
	\E_{x,a\sim\rho}d_{w,p}^{2p}\bigg(\^f_t&(x,a),[\+T^\pi \^f_{t-1}](x,a)\bigg)\\
	&\leq\left(\frac{\sqrt{d}}{1-\gamma}\right)^{2p}\frac{4T}{n}\log(|\+F|T/\delta)
\end{align*}
when $|\+F|<\infty$, and
\begin{align*}
	&\E_{x,a\sim\rho}d_{w,p}^{2p}\left(\^f_t(x,a),[\+T^\pi \^f_{t-1}](x,a)\right)\\
	&\leq\left(\frac{\sqrt{d}}{1-\gamma}\right)^{2p}\frac{10T}{n}\log\left(N_{[]}\left(\frac{(1-\gamma)^d}{n},\+F,\|\cdot\|_\infty\right)T\Big/\delta\right)
\end{align*}
when $|\+F|=\infty$, with probability at least $1-\delta$. 
\end{lemma}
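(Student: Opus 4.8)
The plan is to treat each of the $T$ rounds of \cref{alg:mle-ope-inf} as a single, fully realizable supervised-learning (MLE) problem, apply an off-the-shelf MLE in-distribution generalization bound to it in total variation distance, and then convert to $p$-Wasserstein distance via the diameter inequality \eqref{eq:tv-wass}. First I would pin down the conditional i.i.d.\ structure at round $t$: since $\^f_{t-1}$ is a measurable function of $\+D_1,\dots,\+D_{t-1}$ alone, it is independent of the fresh subset $\+D_t$ (which has $m\coloneqq n/T$ samples). Condition on $\^f_{t-1}$. For each $(x,a,r,x')\in\+D_t$ we have $x,a\sim\rho$ and $z=r+\gamma y$ with $r\sim r(x,a)$, $x'\sim P(x,a)$, $a'\sim\pi(x')$, $y\sim\^f_{t-1}(\cdot\given x',a')$, so by the definition of the distributional Bellman operator the conditional law of $z$ given $(x,a)$ is exactly $[\+T^\pi\^f_{t-1}](\cdot\given x,a)$. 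Hence, conditionally on $\^f_{t-1}$, the triples in $\+D_t'$ are i.i.d.\ from the joint law $\rho(x,a)\,[\+T^\pi\^f_{t-1}](z\given x,a)$. Moreover, \cref{asm:bell-comp-inf} applied to $f=\^f_{t-1}\in\+F$ produces some $g\in\+F$ with $\E_{x,a\sim\rho}d_{w,p}(g(x,a),[\+T^\pi\^f_{t-1}](x,a))=0$, i.e.\ $g$ equals the MLE target $\rho$-almost surely, and this realizability holds for every possible value of $\^f_{t-1}$.

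Next I would invoke the standard MLE generalization bound --- the same one underlying \cref{lem:mle} and \cref{lem:mle-general}, cf.\ \citet{van2000empirical,zhang2006} --- which, for a realizable conditional model and $m$ i.i.d.\ samples, gives with probability at least $1-\delta'$ the estimate $\E_{x,a\sim\rho}\,d_{tv}^2(\^f_t(x,a),[\+T^\pi\^f_{t-1}](x,a))\le (4/m)\log(|\+F|/\delta')$ in the finite case, and the bracketing-number analogue $(10/m)\log(N_{[]}(\epsilon,\+F,\|\cdot\|_\infty)/\delta')$ with $\epsilon=(1-\gamma)^d/n$ in the infinite case (the choice of $\epsilon$ ensuring the residual bracketing term is lower order). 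Since the target is realizable for every realization of $\^f_{t-1}$, this bound can be applied conditionally on $\^f_{t-1}$ and then integrated out, so it holds unconditionally. Plugging in $m=n/T$ and $\delta'=\delta/T$ yields, for each $t$, $\E_{x,a\sim\rho}\,d_{tv}^2(\^f_t(x,a),[\+T^\pi\^f_{t-1}](x,a))\le (4T/n)\log(|\+F|T/\delta)$, respectively $(10T/n)\log(N_{[]}((1-\gamma)^d/n,\+F,\|\cdot\|_\infty)\,T/\delta)$, each with probability at least $1-\delta/T$.

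Finally I would convert to Wasserstein distance and union bound. The return lives in $\+S=[0,(1-\gamma)^{-1}]^d$, so $\mathrm{diam}(\+S)=\sqrt d/(1-\gamma)$, and \eqref{eq:tv-wass} gives $d_{w,p}^p(P_1,P_2)\le (\sqrt d/(1-\gamma))^p\,d_{tv}(P_1,P_2)$, hence $d_{w,p}^{2p}\le (\sqrt d/(1-\gamma))^{2p}\,d_{tv}^2$. Substituting the previous displays turns the total-variation bounds into the claimed $d_{w,p}^{2p}$ bounds for each fixed $t$; a union bound over $t\in[T]$, already absorbed into the $\delta/T$ above, delivers the statement simultaneously for all $t\in[T]$ with probability at least $1-\delta$.

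The one genuinely delicate point --- beyond mechanical bookkeeping --- is that the MLE target $\+T^\pi\^f_{t-1}$ is data-dependent, so one cannot directly invoke an MLE bound stated for a fixed target. The sample-splitting design of \cref{alg:mle-ope-inf} (so that $\^f_{t-1}\perp\+D_t$) together with the fact that \cref{asm:bell-comp-inf} is a \emph{uniform} statement over all $f\in\+F$ is exactly what turns each round into a legitimate realizable MLE problem after conditioning; the rest is the off-the-shelf MLE bound plus the diameter inequality \eqref{eq:tv-wass}.
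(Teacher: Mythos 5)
Your proposal is correct and follows essentially the same route as the paper: exploit the data split so that $\^f_{t-1}$ is independent of $\+D_t$, use \cref{asm:bell-comp-inf} (noting $d_{w,p}=0$ forces the distributions to agree $\rho$-a.s.) to make each round a realizable MLE problem to which \cref{lem:mle-error} applies with $m=n/T$ samples and confidence $\delta/T$, then convert the total-variation bound to $d_{w,p}^{2p}$ via \cref{lem:tv-wass} with diameter $\sqrt{d}/(1-\gamma)$ and union bound over $t$. Your treatment of the data-dependent target via conditioning is exactly the point the paper's proof relies on, just spelled out more explicitly.
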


The multiplicative term $T$ in the upper bounds above comes from the data splitting (recall that we have split the dataset $\+D$ into $T$ subsets: $\+D_1,\dots,\+D_T$). A more careful analysis may be able to get rid of it, leading to a slightly better polynomial dependence on the effective horizon $1/(1-\gamma)$ in the final sample complexity bound. We leave this for future work.

In view of the above result, to derive the specific error bound of FLE, we need to choose an appropriate $T$ to make a good balance. The $T$ we choose is of the logarithmic order. It is shown in the corollary below.
\begin{corollary}\label{cor:mle-main-inf}
We define 
\begin{align*}
	\iota=
	\begin{cases}
	\log(|\+F|/\delta),
	&\text{if\quad}|\+F|<\infty;\\
	\log\left(N_{[]}\left(\frac{(1-\gamma)^d}{n},\+F,\|\cdot\|_\infty\right)\Big/\delta\right),
	&\text{if\quad}|\+F|=\infty.
	\end{cases}
\end{align*}
Then under Assumption~\ref{asm:cover-inf} and \ref{asm:bell-comp-inf}, for FLE (Algorithm~\ref{alg:mle-ope-inf}), if we pick 
\begin{equation*}
	T=\log\left(C^{\frac{1}{2p}}\cdot\iota^{\frac{1}{2p}}\cdot\left(1-\gamma^{\frac{1}{2}}\right)^{-1}\cdot n^{-\frac{1}{2p}}\right)\Big/\log\left(\gamma^{1-\frac{1}{2p}}\right)
\end{equation*}
then with probability at least $1-\delta$, we have
\begin{align*}
 	d_{w,p}\left(\^f,Z^\pi\right)\le 
	\w~O\left(
	\frac
	{C^{\frac{1}{2p}}\cdot \iota^{\frac{1}{2p}}\cdot \sqrt{d}}
	{(1-\gamma)^{\frac{5}{2}}}
	\cdot n^{-\frac{1}{2p}}
	\right)
\end{align*}
where $\^f\coloneqq\E_{x\sim\mu,a\sim\pi(x)}\^f_T(x,a)$.
\end{corollary}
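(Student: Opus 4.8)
\emph{Proof proposal.} The plan is to derive \cref{cor:mle-main-inf} by composing the MLE generalization bound \cref{lem:mle-inf} with the reduction \cref{thm:bell-error-to-final-inf} (in its refined form \cref{thm:refined}), and then calibrating the iteration count $T$ so that the resulting statistical error and the geometric bias term have the same order. First, on the event of probability at least $1-\delta$ provided by \cref{lem:mle-inf}, taking the $(\cdot)^{1/(2p)}$ power of its conclusion and writing $c\in\{4,10\}$ for the finite/infinite-class constant, every iterate obeys, for all $t\in[T]$,
\[
\Big(\E_{x,a\sim\rho}\,d_{w,p}^{2p}\big(\^f_t(x,a),[\+T^\pi\^f_{t-1}](x,a)\big)\Big)^{1/(2p)}\ \le\ \zeta\ :=\ \frac{\sqrt d}{1-\gamma}\Big(\frac{c\,T\,(\iota+\log T)}{n}\Big)^{1/(2p)},
\]
where I used $\log(|\+F|T/\delta)=\iota+\log T$ in the finite case and $\log(N_{[]}((1-\gamma)^d/n,\+F,\|\cdot\|_\infty)\,T/\delta)=\iota+\log T$ in the infinite case. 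This $\zeta$ verifies the hypothesis of \cref{thm:bell-error-to-final-inf}, and hence
\[
d_{w,p}(\^f,Z^\pi)\ \le\ \frac{2C^{1/(2p)}}{(1-\gamma)^{3/2}}\,\zeta\ +\ \frac{\sqrt d\,\big(\gamma^{1-1/(2p)}\big)^{T}}{(1-\gamma)^{3/2}},
\]
where I retain the exact contraction factor $\gamma^{1-1/(2p)}$ of \cref{lem:contractive} rather than loosening it to $\gamma^{1/2}$ --- this is precisely what \cref{thm:refined} records.

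It remains to pick $T$. With $T=\log(C^{1/(2p)}\iota^{1/(2p)}(1-\gamma^{1/2})^{-1}n^{-1/(2p)})/\log(\gamma^{1-1/(2p)})$ one gets $\big(\gamma^{1-1/(2p)}\big)^{T}=C^{1/(2p)}\iota^{1/(2p)}(1-\gamma^{1/2})^{-1}n^{-1/(2p)}$ exactly, so the bias term equals $\tfrac{\sqrt d\,C^{1/(2p)}\iota^{1/(2p)}}{(1-\gamma)^{3/2}(1-\gamma^{1/2})}n^{-1/(2p)}$, which is at most $\tfrac{2\sqrt d\,C^{1/(2p)}\iota^{1/(2p)}}{(1-\gamma)^{5/2}}n^{-1/(2p)}$ because $1-\gamma^{1/2}=\tfrac{1-\gamma}{1+\gamma^{1/2}}\ge\tfrac{1-\gamma}{2}$. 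For the statistical term, substituting $\zeta$ gives $\tfrac{2\sqrt d\,C^{1/(2p)}(cT(\iota+\log T))^{1/(2p)}}{(1-\gamma)^{5/2}}n^{-1/(2p)}$; the chosen $T$ is polylogarithmic in $n$ and $1/\delta$ (and at most polynomial in the effective horizon $1/(1-\gamma)$ --- the same data-splitting overhead remarked after \cref{lem:mle-inf}), which one checks using $|\log\gamma|\ge 1-\gamma$ and $1/(1-\tfrac1{2p})\le 2$, so $(cT(\iota+\log T))^{1/(2p)}=\iota^{1/(2p)}\cdot\mathrm{polylog}$ and this term is also $\w~O\big(\tfrac{C^{1/(2p)}\iota^{1/(2p)}\sqrt d}{(1-\gamma)^{5/2}}n^{-1/(2p)}\big)$. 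Summing the two terms on the same $1-\delta$ event yields the claim.

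The proof carries essentially no conceptual difficulty --- it is bookkeeping on top of \cref{lem:mle-inf}, \cref{thm:bell-error-to-final-inf}, and \cref{lem:contractive} --- and the only step needing real care is the calibration of $T$. The subtlety is that the stated $T$ is tuned to the \emph{refined} contraction factor $(\gamma^{1-1/(2p)})^{T}$: plugging it into the looser bound that uses $\gamma^{T/2}$ would over-shrink the exponent of $\gamma$ and leave an $n^{-1/(4p)}$-type residual bias, so one must carry \cref{thm:refined} (not the simplified \cref{thm:bell-error-to-final-inf}) through the argument. The remaining points --- rounding $T$ up to an integer and verifying $T\ge1$ (the bound is vacuous when $n$ is too small, so this is harmless, and rounding only multiplies the bias by a factor $\gamma^{1-1/(2p)}\le 1$ while keeping $T$ polylogarithmic), the union bound, the $\iota+\log T$ split, and the $(1-\gamma^{1/2})^{-1}\le 2(1-\gamma)^{-1}$ estimate --- are all routine.
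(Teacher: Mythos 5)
Your proposal is correct and follows the same overall strategy as the paper: plug the MLE generalization bound of \cref{lem:mle-inf} into the reduction theorem, then tune $T$ so that the geometric bias term matches the statistical term. The one place where you genuinely diverge from the paper is instructive. The paper's own proof starts from the \emph{simplified} bound of \cref{thm:bell-error-to-final-inf} (bias term $\gamma^{T/2}$) and, to make that work, quietly substitutes a different $T$ from the one in the corollary statement --- namely $T=\log\bigl(C^{1/(2p)}\iota^{1/(2p)}(1-\gamma)^{-1}n^{-1/(2p)}\bigr)/\log(\gamma^{1/2})$, with $\log(\gamma^{1/2})$ in the denominator and $(1-\gamma)^{-1}$ rather than $(1-\gamma^{1/2})^{-1}$ inside the logarithm. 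You instead keep the $T$ exactly as stated and observe, correctly, that this $T$ is calibrated to the refined contraction factor $\gamma^{T(1-1/(2p))}$ of \cref{thm:refined}: feeding it into the $\gamma^{T/2}$ version would give $\gamma^{T/2}=X^{1/(2-1/p)}$ with $X\propto n^{-1/(2p)}$, i.e.\ a residual of order $n^{-1/(4p-2)}$, which is strictly worse than $n^{-1/(2p)}$ for $p>1$. Your hybrid bound (statistical prefactor $2C^{1/(2p)}(1-\gamma)^{-3/2}$, bias exponent $T(1-1/(2p))$, denominator $(1-\gamma)^{3/2}$) is a legitimate consequence of \cref{thm:refined} via $1-\gamma^{1-1/(2p)}\ge 1-\gamma^{1/2}\ge(1-\gamma)/2$ and $(1-\gamma)^{1+1/(2p)}\ge(1-\gamma)^{3/2}$, and both routes land on the same $\widetilde O$ bound. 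In short: your argument is the one that actually proves the corollary as literally stated, whereas the paper's proof proves it for a slightly different (but asymptotically equivalent) choice of $T$; the remaining bookkeeping --- the $\iota+\log T$ split, absorbing $T^{1/(2p)}$ into $\widetilde O$, integrality and positivity of $T$ --- is handled correctly.
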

The above upper bound depends on $n^{-1/(2p)}$, which seems unsatisfactory, especially when $p$ is large. However, we believe that it is actually tight since the previous study has shown that the minimax rate of estimating $d_{w,p}$ using i.i.d samples from the given distribution is around $O(n^{-1/(2p)})$~\citep{singh2018minimax}. More formally, given a distribution $Q$ and $n$ i.i.d samples from $Q$, any algorithm that maps the $n$ i.i.d samples to a distribution $\hat Q$, must have $d_{w,p}(\hat Q, Q) = \widetilde\Omega( n^{-1/(2p)})$ in the worst case. Note that distributional OPE is strictly harder than this problem. 
\section{Simulation}

\begin{figure}[htb]
\begin{center}
\centerline{\includegraphics[width=0.97\columnwidth]{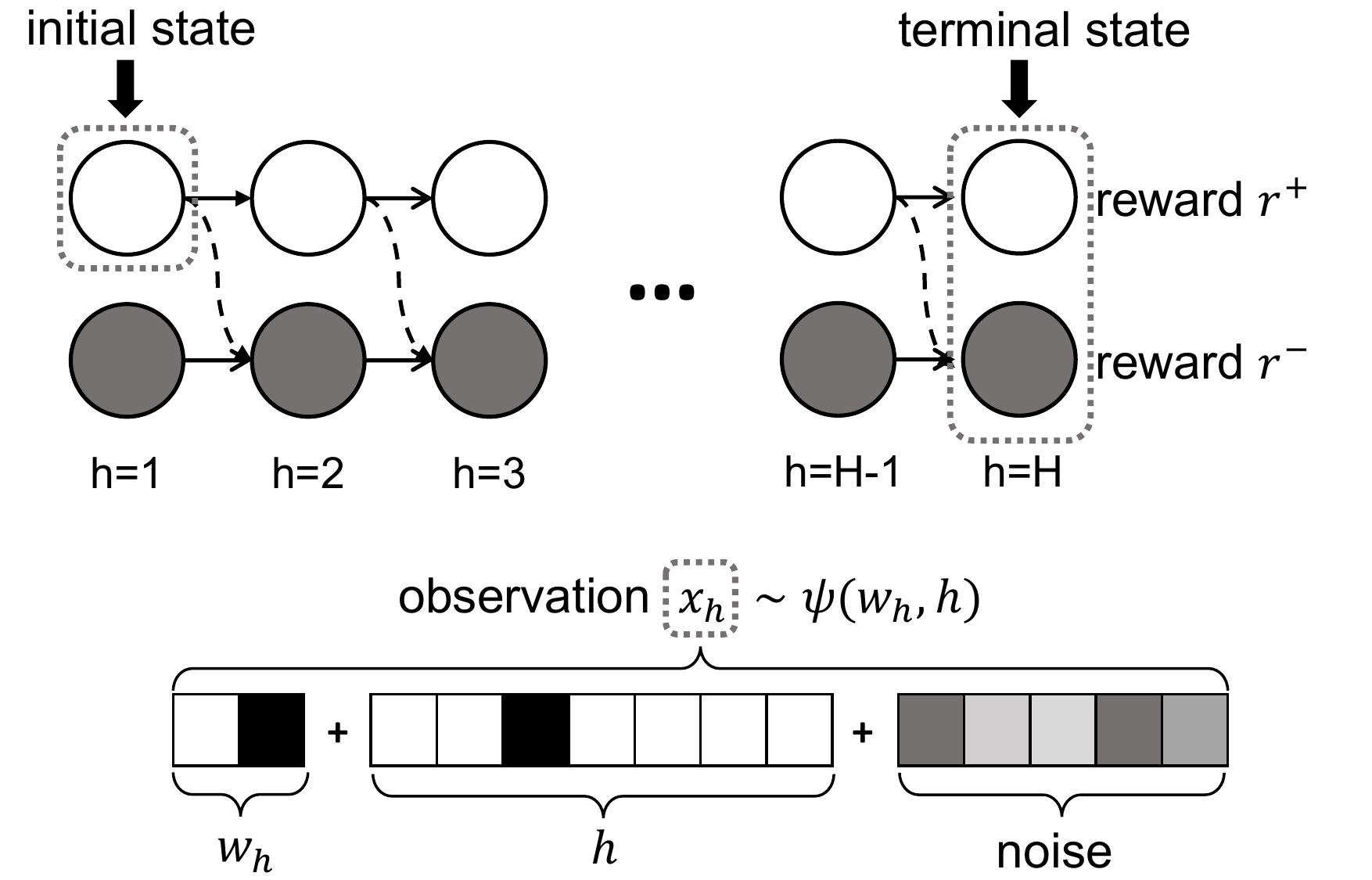}}
\caption{Visualization of the combination lock. The dotted lines denote transiting from good states (white) to bad states (gray). Once the agent transits to a bad state, it stays there forever. The observation is composed of three parts: one-hot encoding of the latent state $w_h$, one-hot encoding of the step $h$, and random noise.}
\label{fig:env}
\end{center}
\vspace{-20pt}
\end{figure}
In this section, we show the empirical performance of two instances of FLE: \emph{GMM-FLE} and \emph{Diff-FLE}.  
The GMM-FLE uses conditional Gaussian mixture models for $\mathcal{F}$,  for which the weights and the mean and covariance of Gaussians are all learnable. 
For Diff-FLE, we model the distribution $f(\cdot\,|\,x,a)$ as a conditional diffusion probabilistic model \citep{sohl2015deep}. The implementation is based on DDPM~\citep{ho2020denoising}. We elaborate on other components of the experiments below. See \cref{app:exp-details} for implementation details and a full list of results.

\textbf{The combination lock environment.}  
The combination lock consists of two chains. One of the chains is good, while the other is bad. The agent wants to stay on the good chain, for which the only approach is to take the unique optimal action at all time steps. See \cref{fig:env} for an illustration. Mathematically, the combination lock is a finite-horizon MDP of horizon $H$. There are two latent states $w_h\in\{0,1\}$. At any time step $h\in[H]$, there is only one optimal action $a^\star_h$ among $A$ actions. If the agent is in the latent state $w_h=0$ and takes $a^\star_h$, it transits to $w_{h+1}=0$, and otherwise transits to $w_{h+1}=1$. If it is already in $w_h=1$, no matter what action it takes, it transits to $w_{h+1}=1$. When $h=H$, it receives a random reward $r^+$ if $w_H=0$; otherwise, it gets $r^-$. The agent cannot observe the latent state $w_h$ directly. Instead, the observation it receives, $\psi(w_h,h)$, is the concatenation of one-hot coding of the latent state $w_h$ and the current time step $h$, appended with Gaussian noise. This environment has been used in prior works \citep{misra2020kinematic,zhang2022efficient} where it was shown that standard deep RL methods struggle due to the challenges from exploration and high-dimensional observation.

\textbf{Test policy.} 
The test policy is stochastic: it takes a random action with probability $\epsilon$ and takes the optimal policy otherwise. In all experiments, we set $\epsilon=1/7$. 

\textbf{Offline data generation.} 
The offline dataset is generated uniformly. Specifically, for each time step $h\in[H]$ and each latent state $w_h\in\{0,1\}$, we first randomly sample 10000 observable state $\phi(w_h)$. Then for each of them, we uniformly randomly sample action and perform one step simulation. It is clear that the offline data distribution here satisfies the coverage assumption (Assumption~\ref{asm:cover}). %

\subsection{One-Dimensional Reward}\label{sec:exp-1d}

To compare to classic methods such as the categorical algorithm~\citep{bellemare2017distributional} and quantile TD~\citep{dabney2018distributional}, we first run experiments with a 1-d reward. Specifically, we have $r^+\sim\+N(1,0.1^2)$ and $r^-\sim\+N(-1,0.1^2)$. The horizon is $H=20$. 

The categorical algorithm discretizes the range $[-1.5,1.5]$ using 100 atoms. For quantile TD, we set the number of quantiles to 100 as well. The GMM-FLE uses 10 atomic Gaussian distributions, although eventually, only two are significant. See Appendix~\ref{app:exp-details} for a detailed description of implementations. We plot the PDFs  $\E_{x\sim\psi(0,h)} \hat f_h(x,a^\star_h)$ (here $0$ denotes the good latent state in $h$) learned by different methods in \cref{fig:1d}, at three different time steps. As we can see, GMM-FLE in general fits the ground truth the best. 

We compute the approximated $d_{tv}$ between the learned distribution and the true one. Ideally, we want to compute
$d_{tv}(
	\E_{x\sim\psi(0,h)}\^f_h(x,a^\star_h),
	\E_{x\sim\psi(0,h)}Z^\pi_h(x,a^\star_h))$.
However, since obtaining the density of certain models is impossible (e.g., Diff-FLE) and certain other models have only discrete supports, we use an approximated version: we sample $20k$ points from each distribution, construct two histograms, and calculate $d_{tv}$ between the two histograms. %
The results are shown in \cref{tab:1d}. Again, GMM-FLE achieves the smallest total variation distance. This intuitively makes sense since the ground truth return is a mixture of Gaussians. Moreover, we notice that GMM-FLE, Diff-FLE, and categorical algorithms achieve significantly better performance than the quantile regression TD algorithm. This perhaps is not surprising because our theory has provided performance guarantees for those three algorithms under $d_{tv}$ (recall that the categorical algorithm can be roughly considered a specification of FLE, see Remark~\ref{rmk:cate}), while it is unclear if quantile regression TD can achieve similar guarantees in this setting. \edit{In addition, we also compute the approximated $d_{w,1}$ ($1$-Wasserstein distance) between the learned distribution and the true one. Please refer to Appendix~\ref{app:full-exp-results} and Table~\ref{tab:1d-wass-full} for details.}
\looseness=-1

\begin{table}[htb]
\begin{center}
\begin{small}
\resizebox{\columnwidth}{!}{
\begin{tabular}{ccccc}
\toprule
$h$ & Cate Alg & Quan Alg & Diff-FLE & GMM-FLE \\
\midrule
1	&	0.071 	$\pm$	0.015 	&		0.603 	$\pm$	0.011 	&		0.292 	$\pm$	0.073 	&		\textbf{0.039 	$\pm$	0.004} 	\\
10	&	0.079 	$\pm$	0.017 	&		0.494 	$\pm$	0.018 	&		0.234 	$\pm$	0.043 	&		\textbf{0.044 	$\pm$	0.012} 	\\
19	&	0.078 	$\pm$	0.011 	&		0.167 	$\pm$	0.019 	&		0.109 	$\pm$	0.031 	&		\textbf{0.018 	$\pm$	0.008} 	\\
\bottomrule
\vspace{-10pt}
\end{tabular}
}
\caption{Approximated $d_{tv}$ between $\E_{x\sim\psi(0,h)}\^f_h(x,a^\star_h)$ and $\E_{x\sim\psi(0,h)}Z^\pi_h(x,a^\star_h)$ in the 1-d case. The means and standard errors are computed via five independent runs.}
\label{tab:1d}
\end{small}
\end{center}
\vspace{-28pt}
\end{table}

\noindent
\begin{figure}[t]
\begin{center}
\centerline{\includegraphics[width=0.98\columnwidth]{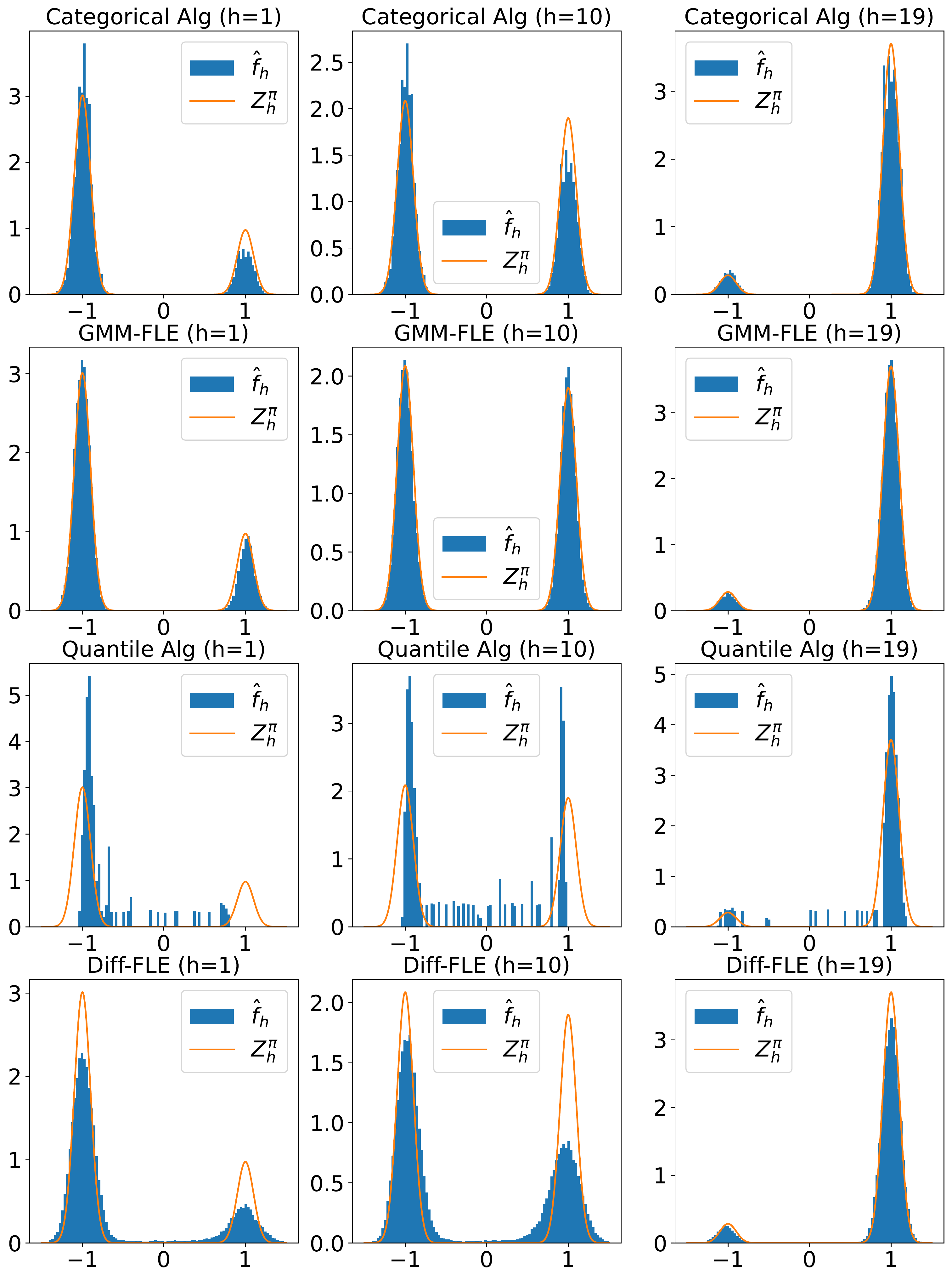}}
\caption{Plots  of $\E_{x\sim\psi(0,h)}\^f_h(x,a^\star_h)$ and $\E_{x\sim\psi(0,h)}Z^\pi_h(x,a^\star_h)$.  The histograms are generated via 50k samples.}
\label{fig:1d}
\end{center}
\vspace{-23pt}
\end{figure}
\begin{figure}[t]
\begin{center}
\centerline{\includegraphics[width=\columnwidth]{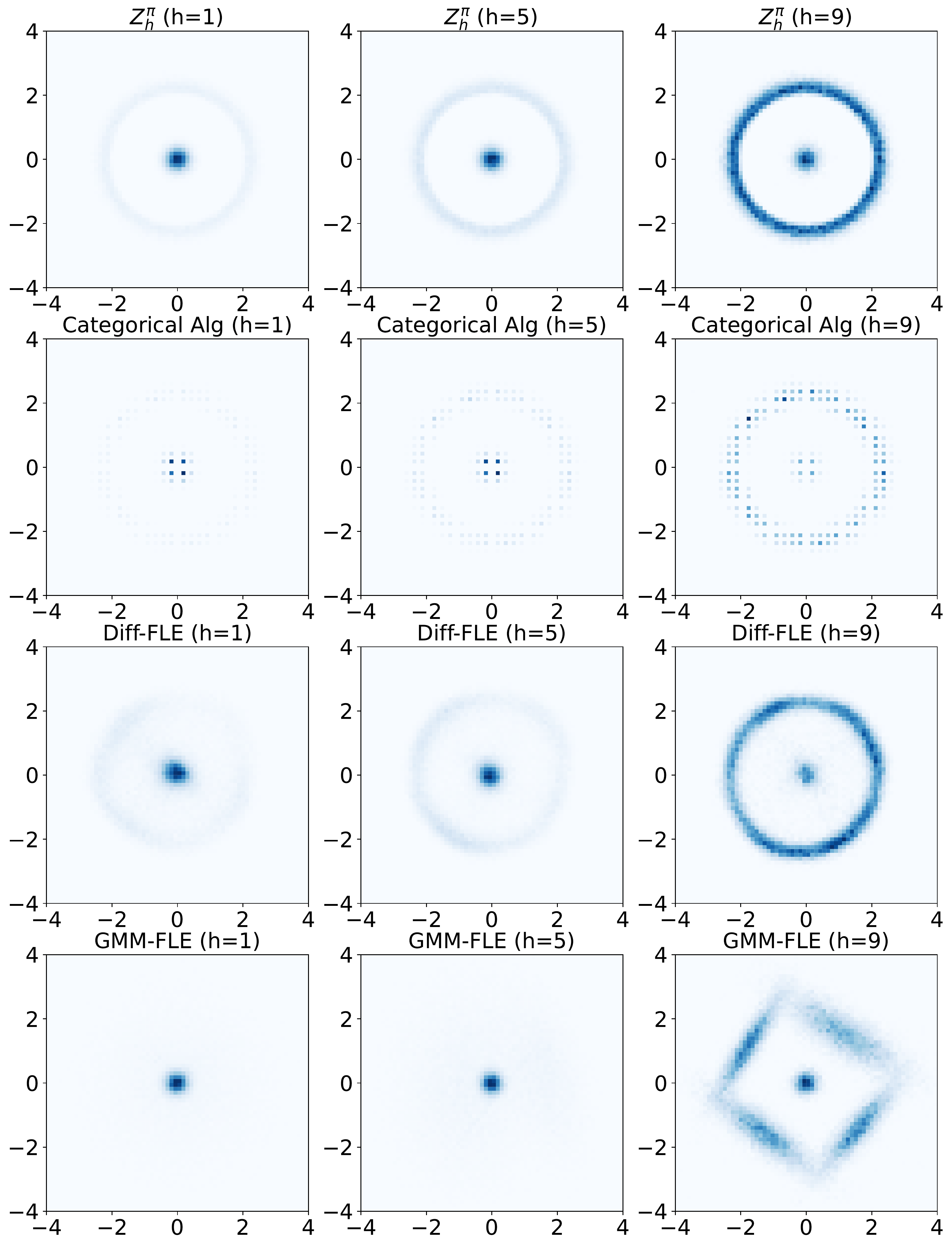}}
\caption{Plots of $\E_{x\sim\psi(0,h)}\^f_h(x,a^\star_h)$ (generated via 50k samples), and the ground truth 
$\E_{x\sim\psi(0,h)}Z^\pi_h(x,a^\star_h)$ (top row).}
\label{fig:2d}
\end{center}
\vspace{-31pt}
\end{figure}
\subsection{Two-Dimensional Reward}\label{sec:exp-2d}

We also conducted experiments on two-dimensional rewards where $r^+$ is sampled from a ring in $\=R^2$ of radius 2 and $r^-$ follows a Gaussian centered at the origin. The horizon is $H=10$. The categorical algorithm discretizes the range $[-4,4]^2$  into $30$ atoms per dimension (totaling $900$ atoms). Although the 2-d version of the categorical algorithm is not introduced in the original paper~\citep{bellemare2017distributional}, the extension is intuitive. The GMM-FLE employs 30 atomic Gaussian distributions, but only up to six prove significant in the end. We note that extending quantile regression TD to multi-dimensional rewards is not straightforward.

\begin{table}[b]
\vspace{-5pt}
\begin{center}
\begin{small}
\begin{sc}
\begin{tabular}{ccccc}
\toprule
$h$ & Cate Alg & Diff-FLE & GMM-FLE \\
\midrule
1	&	0.483 	$\pm$	0.003 		&	\textbf{0.357 	$\pm$	0.031} 		&	0.438 	$\pm$	0.008 	\\
5	&	0.466 	$\pm$	0.001 		&	\textbf{0.310 	$\pm$	0.019} 		&	0.493 	$\pm$	0.050 	\\
9	&	0.453 	$\pm$	0.001 		&	\textbf{0.207 	$\pm$	0.014} 		&	0.502 	$\pm$	0.094 	\\
\bottomrule
\end{tabular}
\end{sc}
\end{small}
\end{center}
\caption{Approximated $d_{tv}$ between $\E_{x\sim\psi(0,h)}\^f_h(x,a^\star_h)$ and $\E_{x\sim\psi(0,h)}Z^\pi_h(x,a^\star_h)$ in the 2-d case. The means and standard errors are computed via five independent runs.}
\label{tab:2d}
\end{table}

We plotted the 2-d visualization of the learned distribution in \cref{fig:2d} and computed the approximated TV distance using the same method as in the 1-d case, which is shown in \cref{tab:2d}.
Diff-FLE achieves the smallest TV error (\cref{tab:2d}) and captures the correlation among dimensions (i.e., see \cref{fig:2d} where Diff-FLE captures the ring structures in all steps).  
However, the GMM-FLE also doesn't perform well since it is hard for vanilla GMM with a finite number of mixtures to capture a ring-like data distribution. 
The two-dimensional categorical algorithm performed badly as well, even though it uses a larger number of atoms (recall that for the 1-d case it only uses 100 atoms and already achieves excellent performance), implying that it suffers from the curse of dimensionality statistically, i.e., explicitly discretizing the 2-d return space evenly can fail to capture the underlying data structure (e.g., in our ring example, data actually approximately lives in a sub-manifold). Moreover, the training is also significantly slower. In our implementation, we found that running the 2-d categorical algorithm with $100^2$ atoms is about 100 times slower than running the 1-d algorithm with $100$ atoms, while the training time of Diff-FLE and GMM-FLE does not change too much.

\section{Discussion and Future Work}

We proposed Fitted Likelihood Estimation (FLE), a simple algorithm for distributional OPE with multi-dimensional rewards. FLE conducts a sequence of MLEs and can incorporate any state-of-the-art generative models trained via MLE. Thus, FLE is scalable to the setting where reward vectors are high-dimensional. Theoretically, we showed that the learned distribution is accurate under total variation distance and $p$-Wasserstein distance for the finite-horizon and infinite-horizon discounted setting, respectively. In practice, we demonstrated its flexibility in utilizing generative models such as GMMs and diffusion models. 

Our work may offer several promising avenues for future research in distributional RL. One immediate direction is to adapt our algorithms to the policy optimization. Another direction is the development of more efficient algorithms that can work in more complex environments.

\section*{Acknowledgement}
WS acknowledges funding support from NSF IIS-2154711. 
\edit{We thank Mark Rowland for the useful discussion on the Bellman completeness.}

\bibliography{references.bib}

\begin{thebibliography}{48}
\providecommand{\natexlab}[1]{#1}
\providecommand{\url}[1]{\texttt{#1}}
\expandafter\ifx\csname urlstyle\endcsname\relax
  \providecommand{\doi}[1]{doi: #1}\else
  \providecommand{\doi}{doi: \begingroup \urlstyle{rm}\Url}\fi

\bibitem[Agarwal et~al.(2020{\natexlab{a}})Agarwal, Henaff, Kakade, and
  Sun]{agarwal2020pc}
Agarwal, A., Henaff, M., Kakade, S., and Sun, W.
\newblock Pc-pg: Policy cover directed exploration for provable policy gradient
  learning.
\newblock \emph{Advances in neural information processing systems},
  33:\penalty0 13399--13412, 2020{\natexlab{a}}.

\bibitem[Agarwal et~al.(2020{\natexlab{b}})Agarwal, Kakade, Krishnamurthy, and
  Sun]{agarwal2020flambe}
Agarwal, A., Kakade, S., Krishnamurthy, A., and Sun, W.
\newblock Flambe: Structural complexity and representation learning of low rank
  mdps.
\newblock \emph{Advances in neural information processing systems},
  33:\penalty0 20095--20107, 2020{\natexlab{b}}.

\bibitem[Bellemare et~al.(2017)Bellemare, Dabney, and
  Munos]{bellemare2017distributional}
Bellemare, M.~G., Dabney, W., and Munos, R.
\newblock A distributional perspective on reinforcement learning.
\newblock In \emph{International Conference on Machine Learning}, pp.\
  449--458. PMLR, 2017.

\bibitem[Bellemare et~al.(2023)Bellemare, Dabney, and Rowland]{bdr2022}
Bellemare, M.~G., Dabney, W., and Rowland, M.
\newblock \emph{Distributional Reinforcement Learning}.
\newblock MIT Press, 2023.
\newblock \url{http://www.distributional-rl.org}.

\bibitem[Chandak et~al.(2021)Chandak, Niekum, da~Silva, Learned-Miller,
  Brunskill, and Thomas]{chandak2021universal}
Chandak, Y., Niekum, S., da~Silva, B., Learned-Miller, E., Brunskill, E., and
  Thomas, P.~S.
\newblock Universal off-policy evaluation.
\newblock \emph{Advances in Neural Information Processing Systems},
  34:\penalty0 27475--27490, 2021.

\bibitem[Chang et~al.(2022)Chang, Wang, Kallus, and Sun]{chang2022learning}
Chang, J., Wang, K., Kallus, N., and Sun, W.
\newblock Learning bellman complete representations for offline policy
  evaluation.
\newblock In \emph{International Conference on Machine Learning}, pp.\
  2938--2971. PMLR, 2022.

\bibitem[Dabney et~al.(2018)Dabney, Rowland, Bellemare, and
  Munos]{dabney2018distributional}
Dabney, W., Rowland, M., Bellemare, M., and Munos, R.
\newblock Distributional reinforcement learning with quantile regression.
\newblock In \emph{Proceedings of the AAAI Conference on Artificial
  Intelligence}, volume~32, 2018.

\bibitem[Dinh et~al.(2014)Dinh, Krueger, and Bengio]{dinh2014nice}
Dinh, L., Krueger, D., and Bengio, Y.
\newblock Nice: Non-linear independent components estimation.
\newblock \emph{arXiv preprint arXiv:1410.8516}, 2014.

\bibitem[Doan et~al.(2018)Doan, Mazoure, and Lyle]{doan2018gan}
Doan, T., Mazoure, B., and Lyle, C.
\newblock Gan q-learning.
\newblock \emph{arXiv preprint arXiv:1805.04874}, 2018.

\bibitem[Ernst et~al.(2005)Ernst, Geurts, and Wehenkel]{ernst2005tree}
Ernst, D., Geurts, P., and Wehenkel, L.
\newblock Tree-based batch mode reinforcement learning.
\newblock \emph{Journal of Machine Learning Research}, 6, 2005.

\bibitem[Feng et~al.(2019)Feng, Li, and Liu]{feng2019kernel}
Feng, Y., Li, L., and Liu, Q.
\newblock A kernel loss for solving the bellman equation.
\newblock \emph{Advances in Neural Information Processing Systems}, 32, 2019.

\bibitem[Freirich et~al.(2019)Freirich, Shimkin, Meir, and
  Tamar]{freirich2019distributional}
Freirich, D., Shimkin, T., Meir, R., and Tamar, A.
\newblock Distributional multivariate policy evaluation and exploration with
  the bellman gan.
\newblock In \emph{International Conference on Machine Learning}, pp.\
  1983--1992. PMLR, 2019.

\bibitem[Fu et~al.(2021)Fu, Norouzi, Nachum, Tucker, Wang, Novikov, Yang,
  Zhang, Chen, Kumar, et~al.]{fu2021benchmarks}
Fu, J., Norouzi, M., Nachum, O., Tucker, G., Wang, Z., Novikov, A., Yang, M.,
  Zhang, M.~R., Chen, Y., Kumar, A., et~al.
\newblock Benchmarks for deep off-policy evaluation.
\newblock \emph{arXiv preprint arXiv:2103.16596}, 2021.

\bibitem[Ho et~al.(2020)Ho, Jain, and Abbeel]{ho2020denoising}
Ho, J., Jain, A., and Abbeel, P.
\newblock Denoising diffusion probabilistic models.
\newblock \emph{Advances in Neural Information Processing Systems},
  33:\penalty0 6840--6851, 2020.

\bibitem[Huang et~al.(2021)Huang, Leqi, Lipton, and
  Azizzadenesheli]{huang2021off}
Huang, A., Leqi, L., Lipton, Z., and Azizzadenesheli, K.
\newblock Off-policy risk assessment in contextual bandits.
\newblock \emph{Advances in Neural Information Processing Systems},
  34:\penalty0 23714--23726, 2021.

\bibitem[Huang et~al.(2022)Huang, Leqi, Lipton, and
  Azizzadenesheli]{huang2022off}
Huang, A., Leqi, L., Lipton, Z., and Azizzadenesheli, K.
\newblock Off-policy risk assessment for markov decision processes.
\newblock In \emph{International Conference on Artificial Intelligence and
  Statistics}, pp.\  5022--5050. PMLR, 2022.

\bibitem[Jiang \& Li(2016)Jiang and Li]{jiang2016doubly}
Jiang, N. and Li, L.
\newblock Doubly robust off-policy value evaluation for reinforcement learning.
\newblock In \emph{International Conference on Machine Learning}, pp.\
  652--661. PMLR, 2016.

\bibitem[Keramati et~al.(2020)Keramati, Dann, Tamkin, and
  Brunskill]{keramati2020being}
Keramati, R., Dann, C., Tamkin, A., and Brunskill, E.
\newblock Being optimistic to be conservative: Quickly learning a cvar policy.
\newblock In \emph{Proceedings of the AAAI Conference on Artificial
  Intelligence}, volume~34, pp.\  4436--4443, 2020.

\bibitem[Kolter(2011)]{kolter2011fixed}
Kolter, J.
\newblock The fixed points of off-policy td.
\newblock \emph{Advances in Neural Information Processing Systems}, 24, 2011.

\bibitem[Levin \& Peres(2017)Levin and Peres]{levin2017markov}
Levin, D.~A. and Peres, Y.
\newblock \emph{Markov chains and mixing times}, volume 107.
\newblock American Mathematical Soc., 2017.

\bibitem[Li \& Faisal(2021)Li and Faisal]{li2021bayesian}
Li, L. and Faisal, A.~A.
\newblock Bayesian distributional policy gradients.
\newblock In \emph{Proceedings of the AAAI Conference on Artificial
  Intelligence}, volume~35, pp.\  8429--8437, 2021.

\bibitem[Ma et~al.(2021)Ma, Jayaraman, and Bastani]{ma2021conservative}
Ma, Y., Jayaraman, D., and Bastani, O.
\newblock Conservative offline distributional reinforcement learning.
\newblock \emph{Advances in Neural Information Processing Systems},
  34:\penalty0 19235--19247, 2021.

\bibitem[Misra et~al.(2020)Misra, Henaff, Krishnamurthy, and
  Langford]{misra2020kinematic}
Misra, D., Henaff, M., Krishnamurthy, A., and Langford, J.
\newblock Kinematic state abstraction and provably efficient rich-observation
  reinforcement learning.
\newblock In \emph{International conference on machine learning}, pp.\
  6961--6971. PMLR, 2020.

\bibitem[Morimura et~al.(2012)Morimura, Sugiyama, Kashima, Hachiya, and
  Tanaka]{morimura2012parametric}
Morimura, T., Sugiyama, M., Kashima, H., Hachiya, H., and Tanaka, T.
\newblock Parametric return density estimation for reinforcement learning.
\newblock \emph{arXiv preprint arXiv:1203.3497}, 2012.

\bibitem[Munos(2003)]{munos2003error}
Munos, R.
\newblock Error bounds for approximate policy iteration.
\newblock In \emph{ICML}, volume~3, pp.\  560--567. Citeseer, 2003.

\bibitem[Munos \& Szepesv{\'a}ri(2008)Munos and
  Szepesv{\'a}ri]{munos2008finite}
Munos, R. and Szepesv{\'a}ri, C.
\newblock Finite-time bounds for fitted value iteration.
\newblock \emph{Journal of Machine Learning Research}, 9\penalty0 (5), 2008.

\bibitem[Prashanth \& Bhat(2022)Prashanth and Bhat]{prashanth2022wasserstein}
Prashanth, L. and Bhat, S.~P.
\newblock A wasserstein distance approach for concentration of empirical risk
  estimates.
\newblock \emph{The Journal of Machine Learning Research}, 23\penalty0
  (1):\penalty0 10830--10890, 2022.

\bibitem[Precup et~al.(2000)Precup, Sutton, and Singh]{precup2000eligibility}
Precup, D., Sutton, R.~S., and Singh, S.~P.
\newblock Eligibility traces for off-policy policy evaluation.
\newblock In \emph{ICML}, 2000.

\bibitem[Rowland et~al.(2018)Rowland, Bellemare, Dabney, Munos, and
  Teh]{rowland2018analysis}
Rowland, M., Bellemare, M., Dabney, W., Munos, R., and Teh, Y.~W.
\newblock An analysis of categorical distributional reinforcement learning.
\newblock In \emph{International Conference on Artificial Intelligence and
  Statistics}, pp.\  29--37. PMLR, 2018.

\bibitem[Rowland et~al.(2023)Rowland, Munos, Azar, Tang, Ostrovski,
  Harutyunyan, Tuyls, Bellemare, and Dabney]{rowland2023analysis}
Rowland, M., Munos, R., Azar, M.~G., Tang, Y., Ostrovski, G., Harutyunyan, A.,
  Tuyls, K., Bellemare, M.~G., and Dabney, W.
\newblock An analysis of quantile temporal-difference learning.
\newblock \emph{arXiv preprint arXiv:2301.04462}, 2023.

\bibitem[Scherrer(2010)]{scherrer2010should}
Scherrer, B.
\newblock Should one compute the temporal difference fix point or minimize the
  bellman residual? the unified oblique projection view.
\newblock \emph{arXiv preprint arXiv:1011.4362}, 2010.

\bibitem[Singh \& P{\'o}czos(2018)Singh and P{\'o}czos]{singh2018minimax}
Singh, S. and P{\'o}czos, B.
\newblock Minimax distribution estimation in wasserstein distance.
\newblock \emph{arXiv preprint arXiv:1802.08855}, 2018.

\bibitem[Sohl-Dickstein et~al.(2015)Sohl-Dickstein, Weiss, Maheswaranathan, and
  Ganguli]{sohl2015deep}
Sohl-Dickstein, J., Weiss, E., Maheswaranathan, N., and Ganguli, S.
\newblock Deep unsupervised learning using nonequilibrium thermodynamics.
\newblock In \emph{International Conference on Machine Learning}, pp.\
  2256--2265. PMLR, 2015.

\bibitem[Tsitsiklis \& Van~Roy(1996)Tsitsiklis and
  Van~Roy]{tsitsiklis1996analysis}
Tsitsiklis, J. and Van~Roy, B.
\newblock Analysis of temporal-diffference learning with function
  approximation.
\newblock \emph{Advances in neural information processing systems}, 9, 1996.

\bibitem[Uehara \& Sun(2021)Uehara and Sun]{uehara2021pessimistic}
Uehara, M. and Sun, W.
\newblock Pessimistic model-based offline reinforcement learning under partial
  coverage.
\newblock \emph{arXiv preprint arXiv:2107.06226}, 2021.

\bibitem[Uehara et~al.(2020)Uehara, Huang, and Jiang]{uehara2020minimax}
Uehara, M., Huang, J., and Jiang, N.
\newblock Minimax weight and q-function learning for off-policy evaluation.
\newblock In \emph{International Conference on Machine Learning}, pp.\
  9659--9668. PMLR, 2020.

\bibitem[Uehara et~al.(2021)Uehara, Zhang, and Sun]{uehara2021representation}
Uehara, M., Zhang, X., and Sun, W.
\newblock Representation learning for online and offline rl in low-rank mdps.
\newblock In \emph{International Conference on Learning Representations}, 2021.

\bibitem[Van~de Geer(2000)]{van2000empirical}
Van~de Geer, S.
\newblock \emph{Empirical Processes in M-estimation}, volume~6.
\newblock Cambridge university press, 2000.

\bibitem[Van~der Vaart(2000)]{van2000asymptotic}
Van~der Vaart, A.~W.
\newblock \emph{Asymptotic statistics}, volume~3.
\newblock Cambridge university press, 2000.

\bibitem[Villani(2021)]{villani2021topics}
Villani, C.
\newblock \emph{Topics in optimal transportation}, volume~58.
\newblock American Mathematical Soc., 2021.

\bibitem[Villani et~al.(2009)]{villani2009optimal}
Villani, C. et~al.
\newblock \emph{Optimal transport: old and new}, volume 338.
\newblock Springer, 2009.

\bibitem[Xie et~al.(2021)Xie, Cheng, Jiang, Mineiro, and
  Agarwal]{xie2021bellman}
Xie, T., Cheng, C.-A., Jiang, N., Mineiro, P., and Agarwal, A.
\newblock Bellman-consistent pessimism for offline reinforcement learning.
\newblock \emph{Advances in neural information processing systems},
  34:\penalty0 6683--6694, 2021.

\bibitem[Yang et~al.(2020)Yang, Nachum, Dai, Li, and Schuurmans]{yang2020off}
Yang, M., Nachum, O., Dai, B., Li, L., and Schuurmans, D.
\newblock Off-policy evaluation via the regularized lagrangian.
\newblock \emph{Advances in Neural Information Processing Systems},
  33:\penalty0 6551--6561, 2020.

\bibitem[Zhan et~al.(2022)Zhan, Uehara, Sun, and Lee]{zhan2022pac}
Zhan, W., Uehara, M., Sun, W., and Lee, J.~D.
\newblock Pac reinforcement learning for predictive state representations.
\newblock \emph{arXiv preprint arXiv:2207.05738}, 2022.

\bibitem[Zhang et~al.(2021)Zhang, Chen, Zhao, Xiong, Qin, and
  Liu]{zhang2021distributional}
Zhang, P., Chen, X., Zhao, L., Xiong, W., Qin, T., and Liu, T.-Y.
\newblock Distributional reinforcement learning for multi-dimensional reward
  functions.
\newblock \emph{Advances in Neural Information Processing Systems},
  34:\penalty0 1519--1529, 2021.

\bibitem[Zhang et~al.(2022{\natexlab{a}})Zhang, Makur, and
  Azizzadenesheli]{zhang2022functional}
Zhang, Q., Makur, A., and Azizzadenesheli, K.
\newblock Functional linear regression of cdfs.
\newblock \emph{arXiv preprint arXiv:2205.14545}, 2022{\natexlab{a}}.

\bibitem[Zhang(2006)]{zhang2006}
Zhang, T.
\newblock From $\epsilon$-entropy to kl-entropy: Analysis of minimum
  information complexity density estimation.
\newblock \emph{The Annals of Statistics}, 34\penalty0 (5):\penalty0
  2180--2210, 2006.

\bibitem[Zhang et~al.(2022{\natexlab{b}})Zhang, Song, Uehara, Wang, Agarwal,
  and Sun]{zhang2022efficient}
Zhang, X., Song, Y., Uehara, M., Wang, M., Agarwal, A., and Sun, W.
\newblock Efficient reinforcement learning in block mdps: A model-free
  representation learning approach.
\newblock In \emph{International Conference on Machine Learning}, pp.\
  26517--26547. PMLR, 2022{\natexlab{b}}.

\end{thebibliography}
\bibliographystyle{icml2023}

\newpage
\appendix
\onecolumn

\section{Offline CVaR Evaluation}\label{sec:extension}

We consider estimating the CVaR of $Z^\pi$ with $d=1$. %
Given a threshold $\tau \in (0,1)$, the $\text{CVaR}_\tau$ of $Z^\pi$ is defined as (assuming finite-horizon MDPs): \looseness=-1
\begin{align*}
\text{CVaR}_\tau(Z^\pi) := \max_{b\in [0,H]} \left( b - \frac{1}{\tau} \mathbb{E}_{z\sim Z^\pi} \max\left\{ b - z, 0\right\} \right).
\end{align*} 
CVaR intuitively measures the expected value of the random variable belonging to the tail part of the distribution and is often used as a risk-sensitive measure. The following lemma shows that $\text{CVaR}_{\tau}(Z^{\pi})$ is Lipschitz continuous with respect to metric $d_{tv}$ and the Lipschitz constant is $2H / \tau$. 
\begin{lemma}\label{lem:cvar-lip}
	Let $f, f'\in\Delta([0,H])$ be two densities. Then we have
	\begin{align*}
		\text{\rm CVaR}_\tau(f)-\text{\rm CVaR}_\tau(f')
		\leq
		\frac{2H}{\tau}\cdot d_{tv}(f,f').
	\end{align*}
\end{lemma}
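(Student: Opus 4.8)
\textbf{Proof proposal for \cref{lem:cvar-lip}.}

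The plan is to exploit the fact that $\text{CVaR}_\tau$ is defined as a maximum over $b\in[0,H]$ of a linear (in the density) functional, so that Lipschitzness in $d_{tv}$ reduces to a uniform (in $b$) bound on the difference of the inner expressions. Concretely, write $g_b(f) := b - \frac{1}{\tau}\E_{z\sim f}\max\{b-z,0\}$, so that $\text{CVaR}_\tau(f) = \max_{b\in[0,H]} g_b(f)$. First I would use the elementary inequality $\max_b g_b(f) - \max_b g_b(f') \le \sup_b \big( g_b(f) - g_b(f') \big)$: if $b^\star$ attains the max for $f$, then $\text{CVaR}_\tau(f) - \text{CVaR}_\tau(f') = g_{b^\star}(f) - \max_b g_b(f') \le g_{b^\star}(f) - g_{b^\star}(f')$, which is at most the sup over $b$.

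Next I would bound $g_b(f) - g_b(f') = \frac{1}{\tau}\big( \E_{z\sim f'}\max\{b-z,0\} - \E_{z\sim f}\max\{b-z,0\}\big)$ for a fixed $b$. The key observation is that $z\mapsto \max\{b-z,0\}$ is a bounded function on the support $[0,H]$: since $b\in[0,H]$ and $z\in[0,H]$, we have $0 \le \max\{b-z,0\} \le b \le H$. Therefore, writing the expectation difference as $\int \max\{b-z,0\}\,(f'(z)-f(z))\,\d z$, I would bound it by $\|{\max\{b-\cdot,0\}}\|_\infty \cdot \|f' - f\|_1 \le H \cdot \|f-f'\|_1 = 2H\cdot d_{tv}(f,f')$, using the definition $d_{tv}(f,f') = \|f-f'\|_1/2$ from the preliminaries. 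Dividing by $\tau$ and taking the sup over $b$ yields $g_b(f) - g_b(f') \le \frac{2H}{\tau} d_{tv}(f,f')$ uniformly in $b$, and combining with the first step gives the claim.

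There is essentially no hard step here — this is a routine "maximum of linear functionals is Lipschitz" argument, with the only mild care needed being (i) the direction of the inequality in the $\max$-of-$\max$ bound (evaluate the optimal $b^\star$ of the first density against the suboptimal value of the second), and (ii) verifying the uniform bound $\|\max\{b-\cdot,0\}\|_\infty \le H$ on the relevant support, which is where the Lipschitz constant $2H/\tau$ comes from (the factor $2$ being the conversion between $\|\cdot\|_1$ and $d_{tv}$, and $1/\tau$ from the definition). If one wanted the symmetric two-sided bound $|\text{CVaR}_\tau(f) - \text{CVaR}_\tau(f')| \le \frac{2H}{\tau} d_{tv}(f,f')$ it follows immediately by swapping the roles of $f$ and $f'$, but the one-sided statement as written already suffices for the application in \cref{rmk:cvar}, where it is combined with \cref{cor:mle-main} to transfer the total-variation error bound of FLE into a CVaR estimation error bound.
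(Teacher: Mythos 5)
Your proposal is correct and follows essentially the same route as the paper's proof: pick the maximizing $b^\star$ for $f$, compare both CVaR expressions at that same $b^\star$, and bound the resulting integral $\frac{1}{\tau}\int(f'(z)-f(z))\max\{b^\star-z,0\}\,\d z$ using $\|\max\{b^\star-\cdot,0\}\|_\infty\le H$ together with $\|f-f'\|_1=2\,d_{tv}(f,f')$. The only cosmetic difference is that you phrase the first step as a uniform-in-$b$ bound before specializing, while the paper works directly with the single $b_0$.
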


\begin{proof}
Let $f, f'\in\Delta([0,H])$ denote two densities. Then we have
\begin{align*}
&\text{CVaR}_\tau(f)-\text{CVaR}_\tau(f')\\
=&\max_{b\in [0,H]} \left( b - \frac{1}{\tau} \mathbb{E}_{z\sim f} \max\left\{ b - z, 0\right\} \right)-\max_{b\in [0,H]} \left( b - \frac{1}{\tau} \mathbb{E}_{z\sim f'} \max\left\{ b - z, 0\right\} \right)\\
\le& \left( b_0 - \frac{1}{\tau} \mathbb{E}_{z\sim f} \max\left\{ b_0 - z, 0\right\} \right)-\left( b_0 - \frac{1}{\tau} \mathbb{E}_{z\sim f'} \max\left\{ b_0 - z, 0\right\} \right)\\
=&\frac{1}{\tau}\Big(\mathbb{E}_{z\sim f'} \max\left\{ b_0 - z, 0\right\}-\mathbb{E}_{z\sim f} \max\left\{ b_0 - z, 0\right\} \Big)\\
=&\frac{1}{\tau}\int_{[0,H]} \big(f'(z)-f(z)\big) \max\{b_0-z,0\}\d z\\
\le&\frac{H}{\tau}\int_{[0,H]} \big|f'(z)-f(z)\big|\d z\\
\le&\frac{2H}{\tau} d_{tv}(f,f')
\end{align*}
where the first inequality holds by picking $b_0=\argmax_{b\in [0,H]} \left( b - \frac{1}{\tau} \mathbb{E}_{z\sim f} \max\left\{ b - z, 0\right\} \right)$.
\end{proof}

Thus using our bound from \cref{cor:mle-main}, we get:
\begin{align*}
    \left\lvert \text{CVaR}_{\tau}( Z^{\pi} ) - \text{CVaR}_{\tau}( \hat f)  \right\rvert
    \leq \frac{4 C^{1/2}H^{2.5}}{\tau} \sqrt{ \frac{ \log(\max_h |\mathcal{F}|_h / \delta) }{ n  }  }, 
\end{align*} with probability at least $1-\delta$. %

\section{Examples}\label{sec:example}

In this section, we discuss two examples: one is tabular MDPs, and the other is Linear Quadratic Regulators. For simplicity of presentation, we focus on scalar rewards and finite horizon. %

\subsection{Tabular MDPs}
We consider tabular MDP (i.e., $|\mathcal{X}|$ and $|\mathcal{A}|$ are finite) with continuous known reward distributions. Specifically, we consider the sparse reward case where we only have a reward at the last time step $H$ and have zero rewards at time step $h<H$. For each $(x,a)$, Denote $r_H(x,a) \in \Delta([0,1])$. %

Note that in this setup, via induction, it is easy to verify that for any $h, x,a $, $Z^\pi_h(\cdot | x,a)$ is a mixture of the distributions $\{r_H(x,a): x\in \mathcal{X}, a\in \mathcal{A}\}$, i.e., for any $h, x,a$, there exists a probability weight vector $w\in \Delta(|\mathcal{X}| |\mathcal{A}|)$, such that $Z^\pi_h(\cdot | x, a) = \sum_{x',a' \in \mathcal{X}\times\mathcal{A}} w(x',a') r_H(\cdot|x',a')$. Note that the parameters $w(x,a)$ are unknown due to the unknown transition operator $P$, and need to be learned. Thus, in this case, we can design function class $\mathcal{F}_h$ as follows:  \looseness=-1
\begin{align*}
	\mathcal{F}_h = \bigg\{f(\cdot|x,a)=\sum_{x',a' \in \mathcal{X}\times\mathcal{A}} w_{x,a}(x',a') r_H(\cdot|x',a'):&\\
	\big\{w_{x,a}\in \Delta(|\mathcal{X}| |\mathcal{A}|)\big\}_{x,a\in\+X\times\+A} \bigg\}.&
\end{align*}
It is not hard to verify that $\{\mathcal{F}_h\}_{h=1}^H$ does satisfy the Bellman complete condition. The log of the bracket number of $\mathcal{F}_h$ is polynomial with respect to $|\mathcal{X}||\mathcal{A}|$.
\begin{lemma}\label{lem:tabular}
In the above example, the complexity of $\+F_h$ in bounded: $\log N_{[]}(\epsilon,\+F_h,\|\cdot\|_\infty) \le O(|\mathcal{X}|^2|\mathcal{A}|^2\log(r_\infty|\+X||\+A|/\epsilon))$ where $r_\infty\coloneqq\|r_H\|_\infty$. 
\end{lemma}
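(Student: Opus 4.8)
The plan is to bound the bracketing number of $\+F_h$ by reducing it to bracketing the space of mixture weights, since every $f\in\+F_h$ is an affine combination of the \emph{fixed, known} densities $\{r_H(\cdot\,|\,x',a')\}_{x',a'}$. Concretely, each $f\in\+F_h$ is indexed by the collection of weight vectors $W = \{w_{x,a}\}_{x,a\in\+X\times\+A}$, where each $w_{x,a}$ lives in the simplex $\Delta(|\+X||\+A|)$. The total number of free scalar parameters is $m := |\+X||\+A|\cdot(|\+X||\+A|-1) = O(|\+X|^2|\+A|^2)$ (one fewer than $|\+X|^2|\+A|^2$ because each simplex has one linear constraint, but for an upper bound we can just use $|\+X|^2|\+A|^2$). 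So $\+F_h$ is parametrized by a bounded subset of $\=R^m$, and the map from parameters to functions is Lipschitz in the right norms.

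First I would make the Lipschitz claim precise: if $W$ and $\tilde W$ are two weight collections with $\|w_{x,a} - \tilde w_{x,a}\|_1 \le \eta$ for every $(x,a)$, then for the corresponding $f, \tilde f$ we have, for every $(x,a,z)$,
\[
\big|f(z\,|\,x,a) - \tilde f(z\,|\,x,a)\big|
= \Big|\sum_{x',a'} (w_{x,a}(x',a') - \tilde w_{x,a}(x',a'))\, r_H(z\,|\,x',a')\Big|
\le \eta\,\|r_H\|_\infty = \eta\, r_\infty,
\]
hence $\|f - \tilde f\|_\infty \le \eta\, r_\infty$. Next I would build the brackets: take a minimal $\eta$-cover (in $\ell_1$) of each simplex $\Delta(|\+X||\+A|)$; a standard volumetric argument gives such a cover of size $(C/\eta)^{|\+X||\+A|}$ for an absolute constant $C$ (e.g.\ $(3/\eta)^{|\+X||\+A|}$). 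Taking the product over all $|\+X||\+A|$ pairs $(x,a)$ yields a cover of the whole parameter space of size $(C/\eta)^{|\+X|^2|\+A|^2}$. For each grid point $W$ in this product cover, define the bracket $[l_W, u_W]$ with $l_W = f_W - \eta r_\infty$ and $u_W = f_W + \eta r_\infty$ (truncating $l_W$ at $0$ if desired); by the Lipschitz bound every $f\in\+F_h$ whose parameters are within $\eta$ of $W$ lies in this bracket, and $\|u_W - l_W\|_\infty \le 2\eta r_\infty$. Choosing $\eta = \epsilon/(2 r_\infty)$ makes these $\epsilon$-brackets in $\|\cdot\|_\infty$, and the number of them is $(2 C r_\infty/\epsilon)^{|\+X|^2|\+A|^2}$. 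Taking logarithms gives $\log N_{[]}(\epsilon,\+F_h,\|\cdot\|_\infty) \le |\+X|^2|\+A|^2 \log(2C r_\infty/\epsilon)$, which after absorbing constants is $O(|\+X|^2|\+A|^2 \log(r_\infty |\+X||\+A|/\epsilon))$ as claimed (the extra $|\+X||\+A|$ inside the log is harmless slack, and one can get it by being slightly more generous with the simplex cover constant or by a union-bound-style accounting).

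I do not expect a serious obstacle here — the only mildly delicate points are (i) getting the covering-number bound for the simplex with the right exponent (standard: a $\delta$-net of the $(k-1)$-simplex in $\ell_1$ has size at most $(1+2/\delta)^{k-1}$, proved by a volume ratio argument in the hyperplane), and (ii) making sure the bracket functions are legitimate (the bound $\|f-\tilde f\|_\infty \le \eta r_\infty$ is uniform in $z$, so the brackets genuinely sandwich $\+F_h$; whether $l_W$ needs to be a density is irrelevant for the bracketing-number definition as stated in the excerpt, which only asks $l \le f \le u$). If one wants the cleaner form with $|\+X||\+A|$ rather than $r_\infty$ under a single log, I would just note $r_\infty \ge 1$ typically or fold it in; this is cosmetic. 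The heart of the argument is the observation that the known reward densities are fixed, so the only thing to cover is a product of simplices of dimension $O(|\+X|^2|\+A|^2)$.
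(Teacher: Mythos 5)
Your proposal is correct and follows essentially the same route as the paper: both reduce the problem to covering/bracketing the product of $|\+X||\+A|$ probability simplices and push the resulting grid through the fixed mixture components $r_H(\cdot\,|\,x',a')$ to obtain $\epsilon$-brackets for $\+F_h$. The only difference is cosmetic --- you use an $\ell_1$-cover of each simplex plus a Lipschitz bound (giving width $2\eta r_\infty$ per bracket), while the paper uses $\|\cdot\|_\infty$-brackets on the weights and pays an extra $|\+X||\+A|$ factor inside the logarithm; both land on the same $O(|\+X|^2|\+A|^2\log(r_\infty|\+X||\+A|/\epsilon))$ bound.
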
 

Thus \cref{alg:mle-ope} is capable of finding an accurate estimator of $Z^\pi$ with sample complexity scaling polynomially with respect to the size of the state and action spaces and horizon.

\subsection{Linear Quadratic Regulator}\label{sec:lqr}

The second example is LQR. We have $\mathcal{X} \subset \mathbb{R}^{d_x}, \mathcal{A} \subset \mathbb{R}^{d_a}$. 
\begin{align*}
&x_{h+1} = A x_h + B a_h , \\
&r(x_h,a_h) = - ( x_h^\top Q x_h + a_h^\top R a_h) + \varepsilon
\end{align*}
where $\varepsilon \sim \mathcal{N}(0, \sigma^2)$.  %
Since the optimal policy for LQR is a linear policy, we consider evaluating a linear policy $\pi(x) := K x$ where $K\in \mathbb{R}^{d_a\times d_x}$.  \looseness=-1
For  this linear policy, $Z^\pi_h(\cdot | x,a)$ is a Gaussian distribution, i.e., $Z^\pi_h(\cdot | x,a) = \mathcal{N}(\mu_h(x,a), \sigma_h(x,a))$, where $\mu_h(x,a)$ and $\sigma_h(x,a)$ has closed form solutions.

\begin{lemma}\label{lem:lqr-closed-form}
For LQR defined above, $\mu_h(x,a)$ and $\sigma_h(x,a)$ has the following closed form solutions
\begin{align*}
	\mu_h(x,a)=&-(Ax+Ba)^\top U_{h+1}(Ax+Ba)\\
                    &-x^\top Q x-a^\top R a,\\
	\sigma^2_h(x,a)=&(H-h+1)\sigma^2
\end{align*}
where we denote $U_h=\sum_{i=h}^H((A+BK)^{i-h-1})^\top(Q+K^\top RK)(A+BK)^{i-h-1}$. 
\end{lemma}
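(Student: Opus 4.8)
The plan is to exploit the fact that in this LQR instance both the transition map $x_{h+1}=Ax_h+Ba_h$ and the evaluation policy $\pi(x)=Kx$ are deterministic, so the return $z$ generated from $(x_h,a_h)=(x,a)$ at step $h$ is a fixed quadratic function of $(x,a)$ plus an independent sum of the per-step Gaussian noises. First I would unroll the closed-loop dynamics: with $x_h=x$ and $a_h=a$ we get $x_{h+1}=Ax+Ba$, and for every $t\ge h+1$ the policy forces $a_t=Kx_t$, hence $x_{t+1}=(A+BK)x_t$ and therefore $x_t=(A+BK)^{t-h-1}(Ax+Ba)$ for all $t\ge h+1$.

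Next I would write $z=-\sum_{t=h}^{H}\bigl(x_t^\top Q x_t+a_t^\top R a_t\bigr)+\sum_{t=h}^{H}\varepsilon_t$. The first sum is deterministic given $(x,a)$: the $t=h$ term equals $-(x^\top Q x+a^\top R a)$ (kept separate because $a_h=a$ is the given action, not $Kx$), while for $t\ge h+1$, substituting $a_t=Kx_t$ turns each term into $x_t^\top(Q+K^\top R K)x_t=(Ax+Ba)^\top\bigl[(A+BK)^{t-h-1}\bigr]^\top(Q+K^\top R K)(A+BK)^{t-h-1}(Ax+Ba)$, and summing over $t$ from $h+1$ to $H$ collapses to $(Ax+Ba)^\top U_{h+1}(Ax+Ba)$ by the definition of $U_{h+1}$. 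This identifies the deterministic part of $z$ with the claimed $\mu_h(x,a)$. The noise part $\sum_{t=h}^{H}\varepsilon_t$ is a sum of $H-h+1$ i.i.d.\ $\mathcal{N}(0,\sigma^2)$ variables, hence $\mathcal{N}(0,(H-h+1)\sigma^2)$; since $z$ is a deterministic quantity plus this Gaussian, $Z^\pi_h(x,a)=\mathcal{N}(\mu_h(x,a),(H-h+1)\sigma^2)$, which is exactly the statement.

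An equivalent and perhaps cleaner presentation is backward induction on $h$ via the distributional Bellman equation $Z^\pi_h=\mathcal{T}^\pi Z^\pi_{h+1}$: the base case $h=H$ gives $Z^\pi_H(x,a)=\mathcal{N}(-(x^\top Q x+a^\top R a),\sigma^2)$, matching the empty-sum convention $U_{H+1}=0$; and in the inductive step, since $x'=Ax+Ba$ and $a'=Kx'$ are deterministic and $\varepsilon\sim\mathcal{N}(0,\sigma^2)$ is independent of the future noise, $\mathcal{T}^\pi$ acts as a convolution of two Gaussians, producing mean $-(x^\top Q x+a^\top R a)+\mu_{h+1}(Ax+Ba,K(Ax+Ba))$ and variance $\sigma^2+\sigma^2_{h+1}$; one then checks this recursion is solved by the stated closed forms (the mean recursion telescopes into the $U_h$ sum, the variance recursion yields $(H-h+1)\sigma^2$). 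The only real work is index bookkeeping — lining up the $(A+BK)$-powers with the range in the definition of $U_h$ and keeping the $t=h$ term separate — so I expect no conceptual obstacle, only a short and careful calculation.
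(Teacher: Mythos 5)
Your proposal is correct and follows essentially the same route as the paper: unroll the deterministic closed-loop dynamics $x_t=(A+BK)^{t-h-1}(Ax+Ba)$, substitute $a_t=Kx_t$, keep the $t=h$ term separate, and collapse the remaining quadratic forms into $U_{h+1}$; the paper's proof is exactly this computation for the mean (it leaves the variance part, a sum of $H-h+1$ i.i.d.\ $\mathcal{N}(0,\sigma^2)$ noises, implicit, which you supply explicitly). Your parenthetical alternative via the distributional Bellman recursion is a valid repackaging but not a genuinely different argument.
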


Thus our function class $\mathcal{F}_h$ can be designed as follows:
\begin{align*}
	\+F_h=\Big\{
	f(\cdot|x,a)=
	\+N\big(\cdot\,\big|\,
	x^\top M_1 x
	+
	a^\top M_2 x
	+
	a^\top M_3 a
	, &\\
	(H-h+1)\sigma^2
	\big),\ 
	\forall M_1,M_2,M_3
	\Big\}&
\end{align*}
We can show that this function class satisfies Bellman completeness. {Furthermore, here, we can refine $C$ in Assumption~\ref{asm:cover} to a relative condition number following the derivation in \citet{uehara2021pessimistic}. More specifically, $C$ is %
    $\sup_{w\neq 0,h }\frac{w^{\top}\mathbb{E}_{d^{\pi}_h}[\phi(x,a)\phi^{\top}(x,a) ]w}{w^{\top}\mathbb{E}_{\rho}[\phi(x,a)\phi^{\top}(x,a)]w}$ 
where $\phi(x,a)=(x^{\top},a^{\top})^{\top}\otimes (x^{\top},a^{\top})^{\top} $ is a quadratic feature and $\otimes$ is the Kronecker product.} Under some regularity assumption (i.e., the norms of $M_1, M_2, M_3$ are bounded, which is the case when the dynamical system induced by the linear policy is stable), this function class has bounded statistical complexity. 

\begin{lemma}\label{lem:lqr-complextiy}
We assume there exist parameters $m_x,m_a,m_1,m_2,m_3$ for which $\|x\|_2\leq m_x$ for all $x\in\+X$ and $\|a\|_2\leq m_a$ for all $a\in\+A$, and $\|M_i\|_{\rm F}\leq m_i$ for $i=1,2,3$. Then we have
\begin{equation*}
	\log N_{[]}(\epsilon,\+F_h,\|\cdot\|_\infty)
	\leq
	{\rm Poly}\left(d_x,d_a,\log \frac{m_xm_am_1m_2m_3}{\epsilon\sigma}\right).
\end{equation*}
	
\end{lemma}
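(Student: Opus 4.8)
The plan is to build an explicit finite $\epsilon$-bracket cover of $\+F_h$ by discretizing the parameters $(M_1, M_2, M_3)$ and controlling how perturbations in these parameters propagate to perturbations in the Gaussian densities under $\|\cdot\|_\infty$. The variance $(H-h+1)\sigma^2$ is fixed and known, so only the mean $m(x,a) := x^\top M_1 x + a^\top M_2 x + a^\top M_3 a$ varies across the class. First I would observe that the map $(M_1,M_2,M_3)\mapsto m(x,a)$ is linear in the parameters, and for $\|x\|_2\le m_x$, $\|a\|_2\le m_a$, $\|M_i\|_{\rm F}\le m_i$, the mean is uniformly bounded: $|m(x,a)|\le m_x^2 m_1 + m_x m_a m_2 + m_a^2 m_3 =: R$. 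Moreover, for two parameter tuples $(M_1,M_2,M_3)$ and $(M_1',M_2',M_3')$, Cauchy--Schwarz gives $|m(x,a)-m'(x,a)|\le m_x^2\|M_1-M_1'\|_{\rm F} + m_xm_a\|M_2-M_2'\|_{\rm F} + m_a^2\|M_3-M_3'\|_{\rm F}$, so closeness of parameters in Frobenius norm implies uniform closeness of the mean functions.

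Next I would translate a uniform bound on the mean difference into a uniform bound on the density difference. For a fixed-variance Gaussian $\+N(m,v)$ with $v=(H-h+1)\sigma^2 \ge \sigma^2$, the density as a function of its mean parameter is Lipschitz: $\sup_z |\phi_{m,v}(z) - \phi_{m',v}(z)| \le \frac{C'}{v}|m-m'|$ for an absolute constant $C'$ (differentiate the Gaussian density in $m$ and bound; the worst case is bounded by $(2\pi v^3)^{-1/2}\cdot$const, and one can also just use that the total-variation / sup-distance between two Gaussians with common variance is $O(|m-m'|/\sqrt v)$). Composing, if the Frobenius distances between corresponding parameter matrices are each at most $\eta$, then $\|f - f'\|_\infty \le \frac{C'}{\sigma^2}(m_x^2+m_xm_a+m_a^2)\,\eta$. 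So choosing $\eta = \frac{\epsilon \sigma^2}{2C'(m_x^2+m_xm_a+m_a^2)}$ makes the densities $\epsilon/2$-close in sup norm, and a standard $\eta$-net of the Frobenius balls $\{\|M_i\|_{\rm F}\le m_i\}$ in dimensions $d_x^2$, $d_a d_x$, $d_a^2$ respectively has cardinality $(3m_1/\eta)^{d_x^2}(3m_2/\eta)^{d_ad_x}(3m_3/\eta)^{d_a^2}$. To turn a covering of $\+F_h$ into a bracketing, I would form brackets $[l,u] = [f-\epsilon/2, f+\epsilon/2]$ around each net center; these are honest brackets (since each is within $\epsilon/2$ of its center in sup norm, hence sandwiched), they have width $\epsilon$ in $\|\cdot\|_\infty$, and they cover $\+F_h$. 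Taking logs gives $\log N_{[]}(\epsilon,\+F_h,\|\cdot\|_\infty) \le (d_x^2 + d_xd_a + d_a^2)\log\!\big(\tfrac{3\max_i m_i}{\eta}\big) = {\rm Poly}\big(d_x,d_a,\log\frac{m_xm_am_1m_2m_3}{\epsilon\sigma}\big)$, as claimed.

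The main obstacle — really the only nontrivial point — is getting the dependence on $\sigma$ (and on $H-h+1$) in the density-perturbation step right, since the Gaussian densities can be large and sharply peaked when $\sigma$ is small, so naive Lipschitz constants blow up like $1/\sigma^3$ if one bounds $\sup_z |\partial_m \phi_{m,v}(z)|$ carelessly. The clean fix is to bound the \emph{sup-norm} distance between the two densities by their total variation distance scaled by the peak density: $\|\phi_{m,v}-\phi_{m',v}\|_\infty \le 2(2\pi v)^{-1/2} d_{tv}(\+N(m,v),\+N(m',v))$ is not quite what we want (it controls the integral not the sup), so instead I use the direct estimate $\sup_z|\phi_{m,v}(z)-\phi_{m',v}(z)| \le |m-m'|\cdot\sup_z|\partial_\mu \phi_{\mu,v}(z)| \le |m-m'|\cdot (2\pi)^{-1/2} v^{-1} e^{-1/2}$, which only costs a $1/v \le 1/\sigma^2$ factor — this is absorbed into the $\log(1/\sigma)$ term after taking the net resolution $\eta \propto \epsilon\sigma^2$. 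Everything else — the boundedness of the mean, the linearity in the parameters, and the standard volumetric net bound — is routine, so once the Gaussian-perturbation lemma is stated with the correct $\sigma^{-2}$ scaling the rest follows immediately.
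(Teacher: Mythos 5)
Your proposal is correct and follows essentially the same route as the paper: discretize the matrix parameters with a Frobenius-norm net, propagate the perturbation to the mean via Cauchy--Schwarz (picking up the factor $m_x^2+m_xm_a+m_a^2$), bound the sup-norm change of a fixed-variance Gaussian density by its mean perturbation times $\bigl((H-h+1)\sigma^2\sqrt{2\pi e}\bigr)^{-1}$ (exactly the constant the paper derives in its Lemma on Gaussian differences), and convert the resulting cover into $\epsilon$-brackets. The only cosmetic difference is that you obtain the Gaussian Lipschitz constant by bounding $\sup_z|\partial_\mu\phi_{\mu,v}(z)|$ directly, whereas the paper proves a monotonicity statement for $\exp(-(x-a)^2)-\sqrt{2/e}\,a$; the resulting bound is identical.
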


\edit{
It is unclear if quantile regression TD or categorical TD can achieve meaningful guarantees on LQR in general, because it is unclear how to design a function class that has bounded complexity and satisfies Bellman completeness. To be specific, the function class for quantile/categorical TD needs to satisfy the following two conditions on Bellman completeness:
(1) the errors incurred in the projection step is bounded, i.e., $\max_{f\in\+F} d (\+T^\pi f, \prod \+T^\pi f)$ is bounded (where $\prod$ denotes the projection onto the desired categorical/quantile finite support and $\+F$ is a subset of the space of return-distribution functions with the categorical/quantile support), and (2) the projected function class has zero (or low) inherent Bellman error, i.e., $\max_{f\in\+F} \min_{g\in\+F}  d(\prod \+T^\pi f, g) \approx0$. Under these conditions, the resulting algorithm may converge with bounded fixed point error as shown by \citet{rowland2018analysis,rowland2023analysis}. 
However, it is worth noting that the convergence rate will depend on the complexity of the function class and it is still unclear how to design such a function class with bounded polynomial complexity for LQR. Naively discretizing the state space will not work since the complexity will then depend on the dimensionality exponentially. The designing of such function classes is an interesting future research direction. 

However, we note that in general, even for regular RL, it is possible that TD-based algorithms may diverge without Bellman completeness in the off-policy setting, and TD fixed point solutions can be arbitrarily bad.
}
\section{Supporting Lemmas} 

\subsection{Maximum Likelihood Estimation}

In this section, we adapt the theoretical results of MLE \citep{agarwal2020flambe} to more general versions. We will follow the notation in Appendix E of~\citet{agarwal2020flambe} and restate the setting here for completeness.

We consider a sequential conditional probability estimation problem. Let $\+X$ and $\+Y$ denote the instance space and the target space, respectively. We are given a function class $\mathcal{F}:(\mathcal{X} \times \mathcal{Y}) \rightarrow \mathbb{R}$ with which we want to model the true conditional distribution $f^{\star}$. To this end, we are given a dataset $D:=\left\{\left(x_i, y_i\right)\right\}_{i=1}^n$, where $x_i \sim \mathcal{D}_i$ and $y_i \sim p\left(\cdot \mid x_i\right)=f^\star(x,\cdot)$. 

We only assume that there exists $f_i^\star$ for each $i\in[n]$ such that $\E_{x\sim\+D_i}d_{tv}(f^\star_i(x),f^\star(x))=0$. Note that this assumption only considers $x$ on the support of $\+D_i$ and is thus weaker than saying $f^\star\in\+F$.

For the data generating process, we assume the data distribution $\mathcal{D}_i$ is history-dependent, i.e., it can depend on the previous samples: $x_1,y_1,\dots,x_{i-1},y_{i-1}$. 

Let $\+D'=\{(x_i',y_i')\}_{i=1}^n$ denote the tangent sequence which is generated by $x_i'\sim \+D_i$ and $y_i'\sim p(\cdot\mid x'_i)$. The tangent sequence is independent when conditioned on $\+D$.

\begin{lemma}[Adapted version of Lemma 25~\citep{agarwal2020flambe}]\label{lem:convert-1norm}
    Let $f_1\in\+X\mapsto\Delta(\+Y)$ be a conditional probability density and $f_2\in\+X\times\+Y\mapsto\=R_{\ge0}$ (satisfying $\int_\+Y f_2(x,y)\d y\le s$ for all $x\in\+X$). Let $\+D\in\Delta(\+X)$ be any distribution. Then, we have
    \begin{equation*}
        \E_{x\sim\+D}\left(\int_\+Y\left|f_1(x,y)-f_2(x,y)\right|\d y\right)^2\le(2+2s)\left((s-1)-2\log\E_{x\sim\+D,y\sim f_1(x,\cdot)}\exp\left(-\frac{1}{2}\log\left(f_1(x,y)/f_2(x,y)\right)\right)\right).
    \end{equation*}
\end{lemma}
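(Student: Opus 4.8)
\textbf{Proof plan for Lemma~\ref{lem:convert-1norm}.}

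The plan is to mimic the standard Chernoff-style argument that converts a log-likelihood (Hellinger-type) control into an $L^1$ bound, but to carry the normalization constant $s$ through carefully since $f_2$ is not assumed to be a density. First I would abbreviate $\beta(x) := \E_{y\sim f_1(x,\cdot)}\exp\left(-\tfrac12\log(f_1(x,y)/f_2(x,y))\right) = \E_{y\sim f_1(x,\cdot)}\sqrt{f_2(x,y)/f_1(x,y)}$, which is simply $\int_\+Y \sqrt{f_1(x,y)f_2(x,y)}\,\d y$, the (unnormalized) Hellinger affinity between $f_1(x,\cdot)$ and $f_2(x,\cdot)$. The key pointwise inequality I would establish is
\begin{equation*}
\left(\int_\+Y |f_1(x,y)-f_2(x,y)|\,\d y\right)^2 \le (2+2s)\Big((s-1) - 2\log\beta(x) + 2(\beta(x)-1)\Big),
\end{equation*}
or something close to it; more precisely I would first get a bound in terms of $1-\beta(x)$ and then use $-\log\beta(x) \ge 1-\beta(x)$ to replace $1-\beta(x)$ by $-\log\beta(x)$ only after I have taken expectations (Jensen's inequality applied to $-\log$ on $\E_x[\beta(x)]$ is what actually produces the $\log\E$ on the right-hand side, so the order matters). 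Then taking $\E_{x\sim\+D}$ of both sides and applying Jensen to pull the $\log$ outside — $\E_x[-\log\beta(x)] \ge -\log\E_x[\beta(x)]$ — yields exactly the claimed form with $\E_{x\sim\+D,y\sim f_1}\exp(-\tfrac12\log(f_1/f_2))$ inside the logarithm.

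The heart of the argument is the pointwise step, which I would do for a fixed $x$ (dropping $x$ from the notation). Write $p = f_1(\cdot)$, a density, and $q = f_2(\cdot)$ with $\int q \le s$. By Cauchy–Schwarz,
\begin{equation*}
\int |p-q| = \int |\sqrt p - \sqrt q|\,(\sqrt p + \sqrt q) \le \left(\int (\sqrt p - \sqrt q)^2\right)^{1/2}\left(\int (\sqrt p + \sqrt q)^2\right)^{1/2}.
\end{equation*}
For the second factor, $\int(\sqrt p+\sqrt q)^2 = \int p + \int q + 2\int\sqrt{pq} \le 1 + s + 2\cdot\tfrac{1+s}{2} = 2(1+s)$, using $\int\sqrt{pq}\le \sqrt{\int p}\sqrt{\int q}\le\sqrt s \le (1+s)/2$ (AM–GM). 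For the first factor, $\int(\sqrt p-\sqrt q)^2 = \int p + \int q - 2\int\sqrt{pq} = \int p + \int q - 2\beta \le 1 + s - 2\beta$. Squaring the Cauchy–Schwarz inequality gives $\left(\int|p-q|\right)^2 \le 2(1+s)\,(1+s-2\beta) = 2(1+s)\big((s-1) + 2(1-\beta)\big)$. Now I would invoke $1-\beta \le -\log\beta$ (valid since $\beta>0$), giving $\left(\int|p-q|\right)^2 \le (2+2s)\big((s-1) - 2\log\beta\big)$ pointwise. Finally, averaging over $x\sim\+D$ and using concavity of $\log$ (so that $\E_x[-\log\beta(x)] \ge -\log\E_x[\beta(x)]$, hence $\E_x[(s-1)-2\log\beta(x)] \ge (s-1) - 2\log\E_x[\beta(x)]$) — wait, that inequality goes the wrong way; instead I keep $1-\beta$ until after averaging: $\E_x\left(\int|p-q|\right)^2 \le (2+2s)\big((s-1) + 2(1 - \E_x\beta(x))\big) \le (2+2s)\big((s-1) - 2\log\E_x\beta(x)\big)$, where the last step uses $1-t\le-\log t$ with $t = \E_x\beta(x) = \E_{x\sim\+D,y\sim f_1(x,\cdot)}\exp(-\tfrac12\log(f_1/f_2))$. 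This is exactly the claim.

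The main obstacle I anticipate is bookkeeping with the unnormalized $f_2$: in the usual version ($s=1$) one has $\int(\sqrt p-\sqrt q)^2 = 2(1-\beta)$ and everything collapses to the familiar squared-Hellinger bound, but with $\int q \le s$ one must be careful that the extra slack $(s-1)$ appears additively (as in the statement) and that the bound $\int\sqrt{pq}\le\sqrt s$ is tight enough — it is, since $(1+s)/2 \ge \sqrt s$. A secondary subtlety is the direction of the logarithm inequality at the averaging step; the clean way, as above, is to not convert $1-\beta$ to $-\log\beta$ pointwise but only once, after taking the expectation, so that Jensen is applied to the scalar $\E_x\beta(x)$ rather than inside the integral. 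No deep idea is needed beyond Cauchy–Schwarz, AM–GM, and $1-t\le-\log t$; the content is entirely in arranging these three elementary facts in the right order.
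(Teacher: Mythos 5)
Your proposal is correct and follows essentially the same route as the paper's proof: Cauchy--Schwarz to split $\int|f_1-f_2|$ into the Hellinger-type factor $\int(\sqrt{f_1}-\sqrt{f_2})^2\le 1+s-2\int\sqrt{f_1f_2}$ times a factor bounded by $2+2s$, then averaging over $x$ \emph{before} applying $1-t\le-\log t$ to the scalar $\E_x\int\sqrt{f_1f_2}$. The only cosmetic difference is how the $(2+2s)$ factor is obtained (you use $\int\sqrt{pq}\le\sqrt{s}\le(1+s)/2$; the paper writes $\int(\sqrt{f_1}+\sqrt{f_2})^2\le 2\int(f_1+f_2)$), and your remark about the order of Jensen versus the pointwise log inequality correctly identifies the one subtle point.
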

\begin{proof}[Proof of Lemma \ref{lem:convert-1norm}]
First, we have
\begin{align*}
    &\E_{x\sim\+D}\left(\int_\+Y\left|f_1(x,y)-f_2(x,y)\right|\d y\right)^2
    =
    \E_{x\sim\+D}\left(\int_\+Y\left|\sqrt{f_1(x,y)}-\sqrt{f_2(x,y)}\right|\left(\sqrt{f_1(x,y)}+\sqrt{f_2(x,y)}\right)\d y\right)^2\\
    \le&
    \E_{x\sim\+D}{\int_\+Y\left(\sqrt{f_1(x,y)}-\sqrt{f_2(x,y)}\right)^2\d y}\cdot
    {\int_\+Y\left(\sqrt{f_1(x,y)}+\sqrt{f_2(x,y)}\right)^2\d y}\\
    =&
    \E_{x\sim\+D}{\int_\+Y\left(\sqrt{f_1(x,y)}-\sqrt{f_2(x,y)}\right)^2\d y}\cdot2\int_\+Y\big(f_1(x,y)+f_2(x,y)\big)\d y-\int_\+Y\left(\sqrt{f_1(x,y)}-\sqrt{f_2(x,y)}\right)^2\d y\\
    =&
    \E_{x\sim\+D}{\int_\+Y\left(\sqrt{f_1(x,y)}-\sqrt{f_2(x,y)}\right)^2\d y}\cdot2\int_\+Y\big(f_1(x,y)+f_2(x,y)\big)\d y\\
    \le&
    \underbrace{\E_{x\sim\+D}\int_\+Y\left(\sqrt{f_1(x,y)}-\sqrt{f_2(x,y)}\right)^2\d y}_{\rm(*)}\cdot
    (2+2s).
\end{align*}
where the first inequality holds for Cauchy–Schwarz inequality.
For ${\rm(*)}$, we have
\begin{align*}
    {\rm(*)}=&\E_{x\sim\+D}\int_\+Y\left(\sqrt{f_1(x,y)}-\sqrt{f_2(x,y)}\right)^2\d y
    \le(s-1)+2-2\E_{x\sim\+D}\int_\+Y \sqrt{f_1(x,y)f_2(x,y)}\d y\\
    =&(s-1)+2\left(1-\E_{x\sim\+D}\int_\+Y \sqrt{f_1(x,y)f_2(x,y)} \d y\right)
    \le(s-1)-2\log\left(\E_{x\sim\+D}\int_\+Y \sqrt{f_1(x,y)f_2(x,y)} \d y\right)\\
    \le&(s-1)-2\log\E_{x\sim\+D,y\sim f_1(x,\cdot)}\sqrt{f_2(x,y)/f_1(x,y)}\\
    =&(s-1)-2\log\E_{x\sim\+D,y\sim f_1(x,\cdot)}\exp\left(-\frac{1}{2}\log\left(f_1(x,y)/f_2(x,y)\right)\right)
\end{align*}
where the second inequality holds because $1-x\le-\log x$.
\end{proof}

\begin{lemma}[Adapted version of Theorem 21~\citep{agarwal2020flambe}]\label{lem:bound-tv}
    Fix $\delta\in(0,1)$. Let $N_{[]}(\epsilon,\+F,\|\cdot\|_\infty)$ denote the $\epsilon$-bracketing number of $\+F$ w.r.t. $\|\cdot\|_\infty$. Then for any estimator $\^f$ that depends on $D$, with probability at least $1-\delta$, we have
    \begin{align*}
        \sum_{i=1}^n\E_{x\sim\+D_i}&d_{tv}^2\left(\^f(x,\cdot), f^\star(x,\cdot)\right)\le\\
        &\frac{3n\epsilon^2|\+Y|^2}{2}+
        2n\epsilon|\+Y|+\big(4+2\epsilon|\+Y|\big)\left(\frac{1}{2}\sum_{i=1}^n \log\big(f^\star(x_i,y_i)/\^f(x_i,y_i)\big)+\log N_{[]}(\epsilon,\+F,\|\cdot\|_\infty) + \log(1/\delta)\right)
    \end{align*}
    where $|\+Y|$ denotes $\int_\+Y \d y$.
\end{lemma}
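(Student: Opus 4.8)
The statement to prove is \cref{lem:bound-tv}, an adapted version of Theorem 21 from \citet{agarwal2020flambe} that bounds the cumulative squared total variation error of any estimator in terms of a log-likelihood ratio term, a bracketing number term, and lower-order terms depending on the ambient volume $|\+Y|$.

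\textbf{Plan.} The proof follows the classical MLE-via-exponential-inequality route, adapted from \citet{agarwal2020flambe}, with the extra bookkeeping needed because $\+Y$ is a continuous (measure $|\+Y|$) space rather than discrete. First I would use \cref{lem:convert-1norm} to reduce the goal from bounding $d_{tv}^2$ to bounding a negative-log-exponential-moment quantity: applying that lemma with $f_1 = f^\star(x,\cdot)$ and $f_2 = \hat f(x,\cdot)$ (so $s=1$, since $\hat f$ is a genuine density) gives $\E_{x\sim\+D_i} d_{tv}^2(\hat f(x,\cdot), f^\star(x,\cdot)) \le -2\log \E_{x\sim\+D_i, y\sim f^\star(x,\cdot)} \exp(-\tfrac12\log(f^\star(x,y)/\hat f(x,y)))$ up to constants. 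Summing over $i$ and using that the data distributions $\+D_i$ are history-dependent, I would invoke a martingale/Chernoff argument (the standard "decoupling via tangent sequence" lemma, e.g. Lemma 24 in \citet{agarwal2020flambe}): with probability at least $1-\delta$, the sum of negative log moments is controlled by $-\tfrac12\sum_i \log(f^\star(x_i,y_i)/\hat f(x_i,y_i)) + \log(1/\delta)$ plus an error from replacing $\hat f$ by a fixed element of a bracketing cover.

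\textbf{Key steps in order.} (1) Pointwise reduction via \cref{lem:convert-1norm} with $s=1$. (2) Set up the tangent sequence $\+D'$ and the exponential supermartingale $\prod_{i\le n} \exp(-\tfrac12 \log(f^\star/\hat f)) \cdot \E[\cdots]^{-1}$, apply Markov's inequality to get the high-probability bound for a \emph{fixed} $f$. (3) Take a union bound over an $\epsilon$-bracketing cover $\{[l_j,u_j]\}$ of $\+F$ of size $N_{[]}(\epsilon,\+F,\|\cdot\|_\infty)$; for the realized $\hat f$, pick the bracket $[l,u]$ containing it, and bound $\hat f(x,y) \ge l(x,y) \ge u(x,y) - \epsilon$, which introduces the additive $\epsilon|\+Y|$ corrections when we integrate over $\+Y$ inside the expectations. (4) Track how the $\epsilon$ slack propagates: the term $\E_{x,y\sim f^\star} \sqrt{f_2/f_1}$ with $f_2$ replaced by the lower bracket $l \ge \hat f - \epsilon$ (pointwise) costs us an additive $\epsilon|\+Y|$ inside the integral $\int_\+Y \sqrt{f_1 f_2}\d y$, and these accumulate to the $\tfrac{3n\epsilon^2|\+Y|^2}{2} + 2n\epsilon|\+Y|$ and $(4 + 2\epsilon|\+Y|)$ factors in the statement. (5) Collect constants and rearrange.

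\textbf{Main obstacle.} The delicate part is Step (3)--(4): the continuous-$\+Y$ bracketing argument. In the discrete case of \citet{agarwal2020flambe} the bracket slack is controlled trivially by $|\+Y|$ being the cardinality; here $|\+Y| = \int_\+Y \d y$ is a volume, and I must be careful that (a) the lower bracket $l$ need not be a density, so I should apply \cref{lem:convert-1norm} (which explicitly allows $f_2$ with $\int f_2 \le s$) with $s = 1 + \epsilon|\+Y|$ rather than $s=1$ when working with the bracketed surrogate, and (b) the exponential-moment/Markov step must be run against the surrogate $l$ (or $u$) uniformly over the cover, then transferred back to $\hat f$. Getting the constants $\tfrac32$, $2$, $4$, $2$ exactly right is the main calculational burden; conceptually everything else is the standard MLE deviation machinery. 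I would also double-check that the history-dependence of $\+D_i$ is genuinely handled by the supermartingale argument (it is, since conditioning on the past makes the tangent sample $y_i'$ a fresh draw from $f^\star(\cdot\mid x_i')$), so no independence across $i$ is needed.
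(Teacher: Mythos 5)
Your plan follows essentially the same route as the paper's proof: reduce $d_{tv}^2$ to a negative log exponential moment via \cref{lem:convert-1norm} applied to the bracketed surrogate with $s=1+\epsilon|\+Y|$, control that moment uniformly over the finite bracketing cover via the tangent-sequence/Chernoff lemma (Lemma 24 of \citet{agarwal2020flambe}), and transfer back to $\^f$ at a cost of $O(\epsilon|\+Y|)$ per sample. One detail to fix: the paper works with the \emph{upper} bracket $\~f\ge\^f$, not the lower one. This direction is forced by the likelihood term on the right-hand side --- you need $\log(f^\star(x_i,y_i)/\~f(x_i,y_i))\le\log(f^\star(x_i,y_i)/\^f(x_i,y_i))$ so that the surrogate's log-likelihood ratio can be replaced by $\^f$'s in the final bound; with the lower bracket $l\le\^f$ that inequality reverses and the argument breaks. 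Your sketch leans on $l$ (``bound $\^f\ge l\ge u-\epsilon$''), though you hedge with ``(or $u$)''; on the total-variation side either bracket costs the same $\epsilon|\+Y|$, so committing to $u$ throughout resolves the issue and recovers the paper's constants.
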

\begin{proof}[Proof of Lemma \ref{lem:bound-tv}]
    We take an $\epsilon$-bracket of $\+F$, $\{[l_i,u_i]:i=1,2,\dots\}$, and denote $\w~{\+F}=\{u_i:i=1,2,\dots\}$. Pick $\~f\in\w~{\+F}$ satisfying $\^f\leq\~f$, so $\~f$ also depends on $D$. Applying Lemma 24 of \cite{agarwal2020flambe} to function class $\w~{\+F}$ and estimator $\~f$ and using Chernoff method, we have
    \begin{equation}\label{eq:lem24}
        \underbrace{-\log\E_{D'}\exp(L(\~f(D),D'))}_{\rm(i)}
        \le
        \underbrace{-L(\~f(D),D)+\log N_{[]}(\epsilon,\+F,\|\cdot\|_\infty) + \log(1/\delta)}_{\rm(ii)}.
    \end{equation}
     holds with probability at least $1-\delta$. We set $L(f,D)=\sum_{i=1}^n -\nicefrac{1}{2} \log(f^\star(x_i,y_i)/f(x_i,y_i))$. Then the right hand side of \eqref{eq:lem24} is
     \begin{align*}
         {\rm{(ii)}}=&\frac{1}{2}\sum_{i=1}^n \log(f^\star(x_i,y_i)/\~f(x_i,y_i))+\log N_{[]}(\epsilon,\+F,\|\cdot\|_\infty) + \log(1/\delta)\\
         \le&\frac{1}{2}\sum_{i=1}^n \log(f^\star(x_i,y_i)/\^f(x_i,y_i))+\log N_{[]}(\epsilon,\+F,\|\cdot\|_\infty) + \log(1/\delta).
     \end{align*}
     On the other hand, by the definition of total variation distance and the fact that $a^2\le2b^2+2c^2$ whenever $0\le a\le b+c$, we have
     \begin{align*}
     	&\sum_{i=1}^n\E_{x\sim\+D_i}d_{tv}^2\left(\^f(x,\cdot), f^\star(x,\cdot)\right)
     	=\frac{1}{4}\sum_{i=1}^n\E_{x\sim\+D_i}\left(\int_{\+Y}\left|\^f(x,y)-f^\star(x,y)\right|\d y\right)^2\\
     	\le&
     	\frac{1}{2}\underbrace{\sum_{i=1}^n\E_{x\sim\+D_i}\left(\int_{\+Y}\left|\^f(x,y)-\~f(x,y)\right|\d y\right)^2}_{\rm{(iii)}}
     	+
     	\frac{1}{2}\underbrace{\sum_{i=1}^n\E_{x\sim\+D_i}\left(\int_{\+Y}\left|\~f(x,y)-f^\star(x,y)\right|\d y\right)^2}_{\rm{(iv)}}.
     \end{align*}
     
     For $\rm{(iii)}$, by the definition of $\~f$, we have ${\rm{(iii)}}\leq n\epsilon^2|\+Y|^2$. For $\rm{(iv)}$, we apply Lemma~\ref{lem:convert-1norm} with $f_1=f^\star$ and $f_2=\~f$ (thus $s=1+\epsilon|\+Y|$) and get    
     \begin{align*}
        {\rm(iv)}
        =&2n\epsilon|\+Y|(2+\epsilon|\+Y|)-\sum_{i=1}^n(8+4\epsilon|\+Y|)\left(\log\E_{x,y\sim f^\star(x,\cdot)}\exp\left(-\frac{1}{2}\log\left(f^\star(x,y)/\~f(x,y)\right)\right)\right)\\
        =&2n\epsilon|\+Y|(2+\epsilon|\+Y|)-\sum_{i=1}^n(8+4\epsilon|\+Y|)\left(\log\E_{x,y\sim\+D_i}\exp\left(-\frac{1}{2}\log\left(f^\star(x,y)/\~f(x,y)\right)\right)\right)\\
        =&2n\epsilon|\+Y|(2+\epsilon|\+Y|)-(8+4\epsilon|\+Y|)\log\E_{x,y\sim\+D'}\left[\exp\left(\sum_{i=1}^n-\frac{1}{2}\log\left(f^\star(x,y)/\~f(x,y)\right)\right)\middle|D\right]\\
        =&4n\epsilon|\+Y|+2n\epsilon^2|\+Y|^2+(8+4\epsilon|\+Y|)\cdot{\rm{(i)}}.
    \end{align*}
    By plugging $\rm{(iii)}$ and $\rm{(iv)}$ back we get
    \begin{align*}
    	\sum_{i=1}^n\E_{x\sim\+D_i}d_{tv}^2\left(\^f(x,\cdot), f^\star(x,\cdot)\right)
    	\leq2n\epsilon|\+Y|+\frac{3}{2}n\epsilon^2|\+Y|^2+(4+2\epsilon|\+Y|)\cdot{\rm{(i)}}.
    \end{align*}
    Notice that $\rm{(i)}\le\rm{(ii)}$, so we complete the proof by plugging $\rm(ii)$ into the above.
\end{proof}

\begin{lemma}\label{lem:mle-error}
	Fixed $\delta\in(0,1)$. Let $\^f$ denote the maximum likelihood estimator,
	\begin{equation*}
		\^f=\argmax_{f\in\+F}\sum_{i=1}^n\log f(x_i,y_i).
	\end{equation*}
	Then according to different assumptions on the size of $\+F$, we have the following two conclusions:
	\begin{enumerate}
		\item[(1)] If $|\+F|<\infty$, we have
		\begin{equation}\label{eq:lem-mle-error-finite}
			\sum_{i=1}^n\E_{x\sim\+D_i}d_{tv}^2\left(\^f(x,\cdot),f^\star(x,\cdot)\right)\le 4\log|\+F|/\delta
		\end{equation} with probability at least $1-\delta$.
		\item[(2)] For general $\+F$, we have
		\begin{equation}\label{eq:lem-mle-error-general}
			\sum_{i=1}^n\E_{x\sim\+D_i}d_{tv}^2\left(\^f(x,\cdot),f^\star(x,\cdot)\right)\le 10\log N_{[]}\left((n|\+Y|)^{-1},\+F,\|\cdot\|_\infty\right)/\delta
		\end{equation} with probability at least $1-\delta$.
	\end{enumerate}
\end{lemma}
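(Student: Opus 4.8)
\textbf{Proof proposal for \cref{lem:mle-error}.}

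The plan is to deduce both bounds from \cref{lem:bound-tv}, which already controls $\sum_{i=1}^n \E_{x\sim\+D_i} d_{tv}^2(\^f(x,\cdot), f^\star(x,\cdot))$ in terms of the empirical log-likelihood ratio term $\tfrac12\sum_{i=1}^n \log(f^\star(x_i,y_i)/\^f(x_i,y_i))$, the bracketing number, and $\epsilon$-dependent additive slack. The key observation is that since $\^f$ is the \emph{maximum likelihood estimator}, it maximizes $\sum_i \log \^f(x_i,y_i)$ over $f\in\+F$; in particular, comparing against the competitor guaranteed by the realizability assumption (there is $f^\star_i$ with $\E_{x\sim\+D_i} d_{tv}(f^\star_i(x), f^\star(x)) = 0$, so on the supports the likelihoods agree and $f^\star$ itself can serve as the reference), we get $\sum_i \log \^f(x_i,y_i) \ge \sum_i \log f^\star(x_i,y_i)$, hence the empirical log-likelihood-ratio term $\tfrac12\sum_{i=1}^n \log(f^\star(x_i,y_i)/\^f(x_i,y_i))$ is $\le 0$ and simply drops out of the bound. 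This is the crucial step that turns the generic estimator bound into a clean MLE bound.

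For the finite case (1), I would instead invoke the simpler finite-class version of the MLE argument (the Chernoff/union-bound argument over $|\+F|$ hypotheses, as in Lemma 24 / Theorem 21 of \citet{agarwal2020flambe} specialized to a finite class, with no bracketing needed): applying the Chernoff method to $L(f,D) = \sum_i -\tfrac12\log(f^\star(x_i,y_i)/f(x_i,y_i))$ and union-bounding over the $|\+F|$ elements gives, with probability $\ge 1-\delta$, that $-\log\E_{\+D'}\exp(L(\^f(D),\+D')) \le -L(\^f(D),D) + \log(|\+F|/\delta) \le \log(|\+F|/\delta)$, where the last inequality uses the MLE optimality (the log-likelihood-ratio sum is nonpositive). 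Then the squared-Hellinger/TV conversion (Lemma~\ref{lem:convert-1norm} with $f_1 = f^\star$, $f_2 = \^f$, so $s=1$ and the $(s-1)$ and $2n\epsilon|\+Y|$ slack terms vanish) bounds $\sum_i \E_{x\sim\+D_i} d_{tv}^2(\^f(x,\cdot),f^\star(x,\cdot))$ by $(4+0)\cdot\bigl(-\log\E_{\+D'}\exp(L)\bigr) \le 4\log(|\+F|/\delta)$, which is exactly \eqref{eq:lem-mle-error-finite}.

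For the general case (2), I would apply \cref{lem:bound-tv} directly with the choice $\epsilon = (n|\+Y|)^{-1}$. With this choice the slack terms become $\tfrac32 n\epsilon^2|\+Y|^2 = \tfrac{3}{2n}$, $2n\epsilon|\+Y| = 2$, and the multiplicative factor $(4+2\epsilon|\+Y|) = 4 + 2/n \le 6$; combined with the MLE optimality which kills the $\tfrac12\sum_i\log(f^\star/\^f)$ term, the bound reads $\sum_i \E_{x\sim\+D_i} d_{tv}^2(\^f(x,\cdot),f^\star(x,\cdot)) \le \tfrac{3}{2n} + 2 + 6\bigl(\log N_{[]}((n|\+Y|)^{-1},\+F,\|\cdot\|_\infty) + \log(1/\delta)\bigr)$, and absorbing the lower-order constants $\tfrac{3}{2n}+2$ into the logarithmic term (which is at least a constant, since $N_{[]} \ge 1$ and we may assume $\log(1/\delta)\ge$ something, or just bound $2+\tfrac{3}{2n} \le 4\log(1/\delta)$ type slack crudely) yields the claimed $10\log\bigl(N_{[]}((n|\+Y|)^{-1},\+F,\|\cdot\|_\infty)/\delta\bigr)$ form of \eqref{eq:lem-mle-error-general}.

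The main obstacle — really the only subtle point — is the realizability/misspecification bookkeeping: the hypothesis class need not contain $f^\star$, only the weaker condition $\E_{x\sim\+D_i} d_{tv}(f^\star_i(x), f^\star(x)) = 0$ for each $i$. I would handle this by noting that, since $\^f$ is the MLE within $\+F$ and $f^\star_i\in\+F$ is a legitimate competitor whose likelihood agrees with $f^\star$'s on the $\+D_i$-support (where the two densities coincide $\+D_i$-almost surely), the chain $\sum_i\log\^f(x_i,y_i) \ge \sum_i\log f^\star_i(x_i,y_i) = \sum_i\log f^\star(x_i,y_i)$ holds with probability one over the draw of the data, so the empirical log-likelihood-ratio term is nonpositive exactly as needed; everything else is the routine plugging-in described above.
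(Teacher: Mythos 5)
Your proposal is correct and follows essentially the same route as the paper's proof: invoke \cref{lem:bound-tv}, use MLE optimality together with the per-$i$ realizability to drop the empirical log-likelihood-ratio term, then take $\epsilon=0$ for the finite class and $\epsilon=(n|\+Y|)^{-1}$ in general, absorbing the residual constants into $10\log(N_{[]}/\delta)$. The only cosmetic difference is that you phrase the finite case as a direct Chernoff/union-bound over $|\+F|$ hypotheses rather than as the $\epsilon=0$ specialization of the bracketing bound, but these are the same argument.
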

\begin{proof}[Proof of Lemma~\ref{lem:mle-error}]
	By Lemma~\ref{lem:bound-tv}, we have
	\begin{equation}\label{eq:lem-mle-error-0}
    \begin{aligned}
        \sum_{i=1}^n\E_{x\sim\+D_i}&d_{tv}^2\left(\^f(x,\cdot), f^\star(x,\cdot)\right)\le\\
        &\frac{3n\epsilon^2|\+Y|^2}{2}+
        2n\epsilon|\+Y|+\big(4+2\epsilon|\+Y|\big)\left(\frac{1}{2}\underbrace{\sum_{i=1}^n \log\big(f^\star(x_i,y_i)/\^f(x_i,y_i)\big)}_{\rm(\diamond)}+\log N_{[]}(\epsilon,\+F,\|\cdot\|_\infty) + \log(1/\delta)\right)
    \end{aligned}
	\end{equation}
	with probability at least $1-\delta$. Since $\^f$ is the maximum likelihood estimator and there exists $f^\star_i$ that agrees with $f^\star$ on the support of $\+D_i$, we have 
	\begin{align*}
		\log\big(f^\star(x_i,y_i)/\^f(x_i,y_i)\big)
		=
		\log\big(f^\star_i(x_i,y_i)/\^f(x_i,y_i)\big)
		\leq 0
	\end{align*} 
	and thus${\rm(\diamond)}\le 0$. When $|\+F|<\infty$, we can set $\epsilon=0$, and then \eqref{eq:lem-mle-error-0} exactly becomes \eqref{eq:lem-mle-error-finite}. For general $\+F$, we set $\epsilon=(n|\+Y|)^{-1}$ and then get
    \begin{align*}
        &\sum_{i=1}^n\E_{x\sim\+D_i}d_{tv}^2\left(\^f(x,\cdot), f^\star(x,\cdot)\right)
        \le
        \frac{3}{2n}+
        2+\left(4+\frac{2}{n}\right)\log N_{[]}((n|\+Y|)^{-1},\+F,\|\cdot\|_\infty)/\delta)\\
        \le&
        4+6\log N_{[]}((n|\+Y|)^{-1},\+F,\|\cdot\|_\infty)/\delta
        \le
        10\log N_{[]}((n|\+Y|)^{-1},\+F,\|\cdot\|_\infty)/\delta,
    \end{align*}
    which is exactly \eqref{eq:lem-mle-error-general}.
\end{proof}

\subsection{Total Variation Distance and Wasserstein Distance}\label{app:metric}

The following lemma states that the total variation distance is equal to the optimal coupling in a sense. The proof can be found in \citet{levin2017markov} (Proposition 4.7).

\begin{lemma}\label{lem:tv-coupling}
	Let $f_1$ and $f_2$ be two probability distributions on $\+X$. Then
	\begin{align*}
		d_{tv}(f_1,f_2)
		=\inf_{c\in\+C} \Pr_{x,y\sim c}(x\neq y)
	\end{align*}
	where $\+C$ is the set of all couplings of $f_1$ and $f_2$.
\end{lemma}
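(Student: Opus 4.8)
The plan is to prove the identity by establishing the two inequalities $d_{tv}(f_1,f_2)\le\inf_{c\in\+C}\Pr_{x,y\sim c}(x\neq y)$ and $\inf_{c\in\+C}\Pr_{x,y\sim c}(x\neq y)\le d_{tv}(f_1,f_2)$, where the second will be witnessed by an explicit coupling, so the infimum is in fact attained. Throughout I would use the variational characterization $d_{tv}(f_1,f_2)=\sup_{A\subseteq\+X}\bigl(f_1(A)-f_2(A)\bigr)$, which is equivalent to the definition $\|f_1-f_2\|_1/2$ used in the paper.

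For the first inequality, fix any coupling $c\in\+C$ of a pair $(x,y)$ and any measurable $A\subseteq\+X$. Since $c$ has marginals $f_1$ and $f_2$,
\begin{align*}
f_1(A)-f_2(A)
&=\Pr_{c}(x\in A)-\Pr_{c}(y\in A)\\
&\le\Pr_{c}(x\in A,\ y\notin A)\le\Pr_{c}(x\neq y).
\end{align*}
Taking the supremum over $A$ gives $d_{tv}(f_1,f_2)\le\Pr_{c}(x\neq y)$, and then the infimum over $c\in\+C$ yields the claim.

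For the reverse inequality I would construct the \emph{maximal coupling}. Passing to densities with respect to a common dominating measure (such as $f_1+f_2$, via Radon--Nikodym derivatives), set $g=\min(f_1,f_2)$ and $m=\int_\+X g$. Using $\int f_1=\int f_2=1$ one checks that $m=1-d_{tv}(f_1,f_2)$, that $f_1-g$ and $f_2-g$ are nonnegative, and that they have disjoint supports (contained in $\{f_1>f_2\}$ and $\{f_2>f_1\}$ respectively), each integrating to $d_{tv}(f_1,f_2)=1-m$. Define a coupling $c^\star$ as follows: with probability $m$, draw $x=y$ from density $g/m$; with probability $1-m$, draw $x\sim(f_1-g)/(1-m)$ and, independently, $y\sim(f_2-g)/(1-m)$. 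A short check shows the $x$-marginal is $m\cdot(g/m)+(1-m)\cdot(f_1-g)/(1-m)=f_1$ and likewise the $y$-marginal is $f_2$, so $c^\star\in\+C$. On the first branch $x=y$, while on the second branch $x\neq y$ almost surely because $f_1-g$ and $f_2-g$ have disjoint supports; hence $\Pr_{c^\star}(x\neq y)=1-m=d_{tv}(f_1,f_2)$, which gives $\inf_{c\in\+C}\Pr_c(x\neq y)\le d_{tv}(f_1,f_2)$. Combining the two inequalities proves the lemma.

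The step I expect to require the most care is the construction and verification of $c^\star$: making $\min(f_1,f_2)$ and the conditional densities precise on a general space $\+X$ through a common dominating measure, checking that the marginals fit together, and handling the degenerate cases $m=1$ (then $f_1=f_2$ and the diagonal coupling works) and $m=0$ (then $f_1,f_2$ have disjoint supports and any product coupling gives $\Pr(x\neq y)=1$). Since this is a classical fact, an alternative is simply to cite it, as the paper does; the argument above is the self-contained version.
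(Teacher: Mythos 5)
Your proof is correct, and the paper itself gives no argument for this lemma — it simply cites Proposition 4.7 of Levin et al.\ (2017), whose proof is exactly the standard two-inequality argument you reproduce (the variational bound for one direction and the maximal coupling built from $\min(f_1,f_2)$ for the other). So your self-contained version matches the intended proof; nothing is missing.
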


The following lemma shows the dual representation of the Wasserstein distance. The proof can be found in \citet{villani2021topics} (Theorem 1.3) and \citet{villani2009optimal} (Theorem 5.10). 
\begin{lemma}[Kantorovich duality]\label{lem:kan-dual}
Let $f_1,f_2\in\Delta(\+X)$ where $\+X$ is a Polish space (e.g., Euclidean space). It can be shown that, for any $1\leq p<\infty$,
\begin{align*}
	d_{w,p}^p(f_1,f_2)=\sup_{\psi,\phi}
	\int \psi(x) f_1(x) \d x
	-
	\int \phi(x) f_2(x) \d x
	\text{\quad s.t.\quad}
	\psi(x)-\phi(y)\leq \|x-y\|^p, \quad\forall x,y\in\+X.
\end{align*}
\end{lemma}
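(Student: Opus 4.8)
This is the classical Kantorovich duality specialized to the cost $\|x-y\|^p$, and the plan is the standard two steps: an easy weak-duality inequality, followed by strong duality via an $\inf$/$\sup$ exchange. \textbf{Weak duality.} For any coupling $c\in\+C$ of $f_1,f_2$ and any pair $(\psi,\phi)$ with $\psi(x)-\phi(y)\le\|x-y\|^p$ for all $x,y$, integrating this pointwise inequality against $c$ and using that its marginals are $f_1,f_2$ gives
\[
\int\psi(x)f_1(x)\,\d x-\int\phi(y)f_2(y)\,\d y=\E_{x,y\sim c}\big[\psi(x)-\phi(y)\big]\le\E_{x,y\sim c}\|x-y\|^p .
\]
Taking the supremum over admissible $(\psi,\phi)$ on the left and then the infimum over $c$ on the right shows the dual value is at most $d_{w,p}^p(f_1,f_2)$.

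\textbf{Strong duality --- the heuristic.} I would encode the marginal constraints defining $\+C$ through test-function multipliers: for any nonnegative measure $\pi$ on $\+X\times\+X$,
\[
\sup_{\psi,\phi\in C_b(\+X)}\left[\int\psi\,\d f_1-\int\phi\,\d f_2-\iint\big(\psi(x)-\phi(y)\big)\,\d\pi\right]
\]
equals $0$ if $\pi\in\+C$ and $+\infty$ otherwise, so $d_{w,p}^p(f_1,f_2)=\inf_{\pi\ge0}\sup_{\psi,\phi}\big[\iint\|x-y\|^p\,\d\pi+\int\psi\,\d f_1-\int\phi\,\d f_2-\iint(\psi(x)-\phi(y))\,\d\pi\big]$. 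Exchanging the infimum and the supremum, the inner infimum $\inf_{\pi\ge0}\iint(\|x-y\|^p-\psi(x)+\phi(y))\,\d\pi$ equals $0$ precisely when $\psi(x)-\phi(y)\le\|x-y\|^p$ everywhere and $-\infty$ otherwise, which collapses the value to exactly the claimed dual $\sup_{\psi(x)-\phi(y)\le\|x-y\|^p}\big[\int\psi\,\d f_1-\int\phi\,\d f_2\big]$.

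\textbf{Making the exchange rigorous --- the main obstacle.} The objective is bilinear in $\pi$ and in $(\psi,\phi)$, so no convexity is at issue; the content is entirely the $\inf$/$\sup$ swap, which I would carry out via the Fenchel--Rockafellar duality theorem on the Banach space $C_b(\+X\times\+X)$, with convex functionals roughly of the form $\Theta(u)=0$ when $u(x,y)\ge-\|x-y\|^p$ for all $x,y$ (and $+\infty$ otherwise) and $\Xi(u)=\int\psi\,\d f_1-\int\phi\,\d f_2$ when $u(x,y)=-\psi(x)+\phi(y)$ for some $\psi,\phi\in C_b(\+X)$ (and $+\infty$ otherwise), the precise sign conventions being those in Villani's Theorem 1.3; the required qualification holds at the constant function $u\equiv1$, where $\Theta$ is finite and continuous, and computing the Legendre conjugates recovers the Kantorovich primal and dual. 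The one technical wrinkle is that $\|x-y\|^p$ is unbounded on a general Polish space, which I would handle by first running the argument with $\|x-y\|^p$ replaced by its truncation at level $M$ (a bounded continuous cost) and then letting $M\to\infty$, using monotone convergence on the primal side while noting that an admissible dual pair for the truncated cost is admissible for $\|x-y\|^p$. In the regime actually used in this paper the supports are bounded subsets of $\=R^d$ (e.g.\ $[0,H]^d$ or $[0,(1-\gamma)^{-1}]^d$), so $\|x-y\|^p$ is already bounded and this truncation step is unnecessary; weak-$*$ compactness of $\+C$ (Prokhorov's theorem, since $f_1,f_2$ are tight on a Polish space) together with lower semicontinuity of $\pi\mapsto\iint\|x-y\|^p\,\d\pi$ additionally yields that the primal infimum is attained, which is convenient although not needed for the identity itself.
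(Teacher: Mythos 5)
The paper does not prove this lemma at all: it is quoted as a known result with pointers to Villani (\emph{Topics in Optimal Transportation}, Theorem~1.3, and \emph{Optimal Transport: Old and New}, Theorem~5.10), so there is no in-paper argument to compare against. Your sketch is essentially a faithful reconstruction of Villani's Theorem~1.3 proof: the weak-duality direction is complete and correct (including the well-definedness of the Lagrangian reformulation --- though you might note that $\Xi$ is well defined on functions of the form $u(x,y)=-\psi(x)+\phi(y)$ because two such representations differ only by a constant shift that cancels in $\int\psi\,\d f_1-\int\phi\,\d f_2$), and the Fenchel--Rockafellar route with the qualification point $u\equiv 1$ is the right mechanism for the strong direction. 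The one place where your plan is not yet airtight for a \emph{general} Polish space is that truncating the cost at level $M$ fixes the unboundedness of $\|x-y\|^p$ but not the separate problem that $(C_b(\+X\times\+X))^*$ contains finitely additive measures when $\+X$ is non-compact, so the dual object produced by Fenchel--Rockafellar is not automatically a genuine coupling; Villani resolves this by first proving the compact case and then exhausting a Polish space by compact sets via tightness, which is an extra approximation layer beyond cost truncation. Since you correctly observe that every application in this paper has $f_1,f_2$ supported on a compact box ($[0,H]^d$ or $[0,(1-\gamma)^{-1}]^d$), where $C_b=C$ and its dual is exactly the Radon measures, your argument is complete in the regime actually needed, and the general statement can be discharged by citation exactly as the authors do.
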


\begin{lemma}\label{lem:tv-wass-general}
	Let $f_1$ and $f_2$ be two distributions on a bounded set $\+X$. Then
	\begin{equation*}
		d_{w,p}^p(f_1,f_2)\le \text{\rm diam}^p(\+X)\cdot d_{tv}(f_1,f_2)
	\end{equation*}
	where $\text{\rm diam($\+X$)}=\sup_{x,y\in\+X}\|x-y\|$ is the diameter of $\+X$.
\end{lemma}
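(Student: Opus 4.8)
The plan is to combine the coupling representation of the total variation distance (Lemma~\ref{lem:tv-coupling}) with the coupling definition of the $p$-Wasserstein distance. By Lemma~\ref{lem:tv-coupling}, $d_{tv}(f_1,f_2)=\inf_{c\in\+C}\Pr_{x,y\sim c}(x\neq y)$, where $\+C$ is the set of all couplings of $f_1$ and $f_2$; this infimum is in fact attained by the standard maximal coupling $c^\star$. I would then bound the $p$-Wasserstein transport cost under that single coupling.

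The key estimate is a pointwise one: for any coupling $c$, split the integral $\E_{x,y\sim c}\|x-y\|^p$ over the events $\{x=y\}$ and $\{x\neq y\}$. On $\{x=y\}$ the integrand $\|x-y\|^p$ is zero, and on $\{x\neq y\}$ it is at most $\mathrm{diam}^p(\+X)$ since both $x$ and $y$ lie in $\+X$ (finite because $\+X$ is bounded). Hence
\begin{align*}
\E_{x,y\sim c}\|x-y\|^p
&= \E_{x,y\sim c}\big[\|x-y\|^p\,\indic\{x\neq y\}\big]\\
&\le \mathrm{diam}^p(\+X)\cdot \Pr_{x,y\sim c}(x\neq y).
\end{align*}
Applying this with $c=c^\star$ and using that $\Pr_{x,y\sim c^\star}(x\neq y)=d_{tv}(f_1,f_2)$ gives $\E_{x,y\sim c^\star}\|x-y\|^p\le \mathrm{diam}^p(\+X)\cdot d_{tv}(f_1,f_2)$. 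Since $d_{w,p}^p(f_1,f_2)$ is the infimum of $\E_{x,y\sim c}\|x-y\|^p$ over all couplings $c\in\+C$, and $c^\star$ is one admissible choice, we get $d_{w,p}^p(f_1,f_2)\le \E_{x,y\sim c^\star}\|x-y\|^p\le \mathrm{diam}^p(\+X)\cdot d_{tv}(f_1,f_2)$, which is the claim.

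There is essentially no hard step here; the only point requiring a little care is the appeal to the existence of an optimal total-variation coupling. If one prefers to avoid invoking attainment, the argument goes through verbatim with a minimizing sequence $c_k\in\+C$ satisfying $\Pr_{x,y\sim c_k}(x\neq y)\to d_{tv}(f_1,f_2)$: the displayed pointwise inequality holds for every $c_k$, and since $d_{w,p}^p(f_1,f_2)\le \E_{x,y\sim c_k}\|x-y\|^p$ for each $k$, passing to the limit still yields the stated bound.
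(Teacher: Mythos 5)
Your proposal is correct and follows essentially the same route as the paper: both rest on the coupling characterization of total variation (Lemma~\ref{lem:tv-coupling}) and the pointwise bound $\|x-y\|^p\,\indic[x\neq y]\le \mathrm{diam}^p(\+X)\cdot\indic[x\neq y]$. The only cosmetic difference is that the paper takes infima over all couplings on both sides of that estimate, so it never needs the attainment of the maximal coupling (or a minimizing sequence) that you invoke.
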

\begin{proof}
	By definition, we have 
	\begin{align*}
		&d_{w,p}^p(f_1,f_2)
		=\inf_{c\in\+C} \E_{x,y\sim c} \|x-y\|^p
		=\inf_{c\in\+C} \E_{x,y\sim c} \big[\indic[x\neq y]\cdot\|x-y\|^p\big]\\
		\leq&\text{\rm diam}^p(\+X)\cdot \inf_{c\in\+C} \E_{x,y\sim c} \indic[x\neq y]
		=\text{\rm diam}^p(\+X)\cdot d_{tv}(f_1,f_2)
	\end{align*}
	where by $\+C$ we denote the set of all couplings of $f_1$ and $f_2$, and the last equality holds because of \cref{lem:tv-coupling}.
\end{proof}

\begin{corollary}\label{lem:tv-wass}
	Let $f_1$ and $f_2$ be two distributions on $[0, m]^d$. Then
	\begin{equation*}
		d_{w,p}^p(f_1,f_2)\le \left(m\sqrt{d}\right)^p\cdot d_{tv}(f_1,f_2).
	\end{equation*}
\end{corollary}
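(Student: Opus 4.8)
The plan is to derive this corollary as an immediate specialization of \cref{lem:tv-wass-general}, so the only real work is to identify the diameter of the cube $[0,m]^d$ with respect to the Euclidean norm. First I would invoke \cref{lem:tv-wass-general} with $\+X = [0,m]^d$, which gives $d_{w,p}^p(f_1,f_2) \le \text{\rm diam}^p([0,m]^d)\cdot d_{tv}(f_1,f_2)$ directly, for any $1\le p<\infty$.

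Next I would compute $\text{\rm diam}([0,m]^d) = \sup_{x,y\in[0,m]^d}\|x-y\|_2$. For each coordinate $i$, $|x_i - y_i|\le m$ since both lie in $[0,m]$, hence $\|x-y\|_2 = \big(\sum_{i=1}^d (x_i-y_i)^2\big)^{1/2} \le (d m^2)^{1/2} = m\sqrt{d}$; and this bound is attained by taking $x=(0,\dots,0)$ and $y=(m,\dots,m)$, so the supremum equals $m\sqrt{d}$ exactly. Substituting $\text{\rm diam}^p([0,m]^d) = (m\sqrt{d})^p$ into the inequality from the previous step yields the claim.

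There is essentially no obstacle here — the statement is a one-line corollary once \cref{lem:tv-wass-general} is in hand; the only thing to be careful about is that the diameter is measured in the same norm $\|\cdot\|$ used to define $d_{w,p}$ (the Euclidean norm throughout the paper), so the constant is $m\sqrt d$ rather than, say, $m$ (which would be the $\ell_\infty$ diameter). I would state the proof in two or three lines accordingly.
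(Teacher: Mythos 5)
Your proposal is correct and matches the paper's approach exactly: the paper states this as an immediate consequence of \cref{lem:tv-wass-general} (with no separate proof given), and your computation of $\mathrm{diam}([0,m]^d)=m\sqrt{d}$ in the Euclidean norm is precisely the one-line specialization that is needed. Nothing is missing.
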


Since the total variation distance is at most one, we have the following.
\begin{corollary}\label{lem:wass-diam}
	Let $f_1$ and $f_2$ be two distributions on $[0, m]^d$. Then
	\begin{equation*}
		d_{w,p}^p(f_1,f_2)\le \left(m\sqrt{d}\right)^p.
	\end{equation*}
\end{corollary}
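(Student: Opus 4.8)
The plan is to combine the two facts already established in the excerpt: the total-variation--to-Wasserstein comparison and the trivial bound $d_{tv}\le 1$. First I would invoke Corollary~\ref{lem:tv-wass}, which, specialized to distributions supported on the cube $[0,m]^d$, gives $d_{w,p}^p(f_1,f_2)\le (m\sqrt{d})^p\cdot d_{tv}(f_1,f_2)$. The only remaining step is to note that the total variation distance between any two probability measures never exceeds $1$ — indeed $d_{tv}(f_1,f_2)=\|f_1-f_2\|_1/2\le (\|f_1\|_1+\|f_2\|_1)/2 = 1$ by the triangle inequality — and to substitute this into the previous display, yielding $d_{w,p}^p(f_1,f_2)\le (m\sqrt{d})^p$.

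Alternatively, one can bypass Corollary~\ref{lem:tv-wass} and appeal directly to Lemma~\ref{lem:tv-wass-general}: the Euclidean diameter of $[0,m]^d$ is $\text{\rm diam}([0,m]^d)=\|(0,\dots,0)-(m,\dots,m)\|=\sqrt{d m^2}=m\sqrt{d}$, the maximum being attained at a pair of opposite corners, so Lemma~\ref{lem:tv-wass-general} gives $d_{w,p}^p(f_1,f_2)\le (m\sqrt{d})^p\, d_{tv}(f_1,f_2)\le (m\sqrt{d})^p$.

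There is no genuine obstacle here: the statement is an immediate corollary, and the sentence preceding it in the text already names the single extra ingredient (``the total variation distance is at most one''). The only point requiring any care is verifying that the cube's Euclidean diameter equals exactly $m\sqrt{d}$, which is a one-line maximization of $\|x-y\|$ over the compact set $[0,m]^d\times[0,m]^d$, coordinatewise separable and maximized by taking each coordinate pair to be $\{0,m\}$.
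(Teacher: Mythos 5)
Your proof is correct and matches the paper's own argument, which likewise obtains the bound by combining \cref{lem:tv-wass} (equivalently \cref{lem:tv-wass-general} with $\mathrm{diam}([0,m]^d)=m\sqrt{d}$) with the trivial fact that the total variation distance is at most one. No further comment is needed.
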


\section{Missing Proofs in Section~\ref{sec:analysis}}\label{app:proofs}

\subsection{Proof of Theorem~\ref{thm:bell-error-to-final}}\label{app:pf-bell-error-to-final}

\begin{proof}

Note that for all $h\in[H]$, we have
\begin{align*}
    &\E_{x,a\sim d^\pi_h}\ 
    d_{tv}\left([\+T^\pi\^f_{h+1}](x,a), [\+T^\pi Z^\pi_{h+1}](x,a)\right)\\
    =&\frac{1}{2}
    \E_{x,a\sim d^\pi_h}\sup_{g:\|g\|_\infty\le1}
    \left|
    \E_{\substack{x'\sim P(x,a)\\a'\sim\pi(x')\\r\sim r(x,a)}}
    \left(
    \E_{y\sim\^f_{h+1}(\cdot|x',a')}
    g\big(r+y\big)
    -
    \E_{y\sim Z^\pi_{h+1}(\cdot|x',a')}
    g\big(r+y\big)
    \right)
    \right|\\
    \le&
    \frac{1}{2}
    \E_{\substack{x,a\sim d^\pi_h\\x'\sim P(x,a)\\a'\sim\pi(\cdot|x)\\r\sim r(x,a)}}
    \sup_{g:\|g\|_\infty\le1}
    \left|
    \E_{y\sim\^f_{h+1}(\cdot|x',a')}
    g\big(r+y\big)
    -
    \E_{y\sim Z^\pi_{h+1}(\cdot|x',a')}
    g\big(r+y\big)
    \right|\\
    =&
    \frac{1}{2}
    \E_{\substack{x,a\sim d^\pi_h\\x'\sim P(x,a)\\a'\sim\pi(\cdot|x)}}
    \sup_{g:\|g\|_\infty\le1}
    \left|
    \E_{y\sim\^f_{h+1}(\cdot|x',a')}
    g(y)
    -
    \E_{y\sim Z^\pi_{h+1}(\cdot|x',a')}
    g(y)
    \right|\\
    =&\E_{\substack{x',a'\sim d^\pi_{h+1}}}
    d_{tv}\left(\^f_{h+1}(x',a'),Z^\pi_{h+1}(x',a')\right).
\end{align*} 

Here the inequality holds for Jensen's inequality. The second equality holds since the randomness of $r$ lies outside the supremum, so we can consider $r$ as a constant within the supremum, allowing us to set $\~g(y)=g(r+y)$ for which we have $\|\~g\|_\infty\le1$ thus removing the additive term $r$. Hence, by triangle inequality, we have
\begin{align*}
    &\E_{x,a\sim d^\pi_h}\ d_{tv}\left(\^f_h(x,a), Z^\pi_h(x,a)\right)
    =\E_{x,a\sim d^\pi_h}\ d_{tv}\left(\^f_h(x,a), [\+T^\pi Z^\pi_{h+1}](x,a)\right)\\
    \le&
    \underbrace{
    \E_{x,a\sim d^\pi_h}\ d_{tv}\left(\^f_h(x,a), [\+T^\pi\^f_{h+1}](x,a)\right)
    }_{\rm(i)}
    +
    \underbrace{
    \E_{x,a\sim d^\pi_h}\ d_{tv}\left([\+T^\pi\^f_{h+1}](x,a), [\+T^\pi Z^\pi_{h+1}](x,a)\right)\
    }_{\rm(ii)}.
\end{align*}
By Assumption~\ref{asm:cover} and Jensen's inequality, we have ${\rm(i)}\le \sqrt{C}\zeta_h$ because
\begin{align*}
     & \E_{x,a\sim d^\pi_h}\ d_{tv}\left(\^f_h(x,a), [\+T^\pi\^f_{h+1}](x,a)\right) \\
     &\leq \left \{ \E_{x,a\sim d^\pi_h}\ d^2_{tv}\left(\^f_h(x,a), [\+T^\pi\^f_{h+1}](x,a)\right) \right \}^{1/2}\leq \sqrt{C}\zeta_h. 
\end{align*}

And by the above derivation we have ${\rm(ii)}\le\E_{x,a\sim d^\pi_{h+1}}d_{tv}\left(\^f_{h+1}(x,a),Z^\pi_{h+1}(x,a)\right)$. Hence, 
\begin{equation*}
    \E_{x,a\sim d^\pi_h}\ d_{tv}\left(\^f_h(x,a), Z^\pi_h(x,a)\right)
    \le
    \sqrt{C}\zeta_h
    +
    \E_{x,a\sim d^\pi_{h+1}}d_{tv}\left(\^f_{h+1}(x,a),Z^\pi_{h+1}(x,a)\right).
\end{equation*}

Summing over $h=1,\dots,H$ on both sides, we get
\begin{equation*}
    \E_{x,a\sim d^\pi_1}\ d_{tv}\left(\^f_1(x,a), Z^\pi_1(x,a)\right)
    \le
    \sqrt{C}\sum_{h=1}^H\zeta_h
    +
    \E_{x,a\sim d^\pi_{H+1}}d_{tv}\left(\^f_{H+1}(x,a),Z^\pi_{H+1}(x,a)\right)=\sqrt{C}\sum_{h=1}^H\zeta_h.
\end{equation*}
where the equality holds since $\^f_{H+1}=Z^\pi_{H+1}=0$ by definition. Now we complete the proof by noticing the following
\begin{align*}
    &d_{tv}\left(\^f,Z^\pi\right)
    =
    \frac{1}{2}\sup_{g:\|g\|_\infty\le1}
    \left|
    \E_{x,a\sim d^\pi_1}
    \left(
    \E_{y\sim \^f_1(\cdot|x,a)}
    g(y)
    -
    \E_{y\sim Z^\pi(\cdot|x,a)}
    g(y)
    \right)
    \right|\\
    \le&
    \frac{1}{2}
    \E_{x,a\sim d^\pi_1}
    \sup_{g:\|g\|_\infty\le1}
    \left|
    \E_{y\sim \^f_1(\cdot|x,a)}
    g(y)
    -
    \E_{y\sim Z^\pi(\cdot|x,a)}
    g(y)
    \right|
    =
    \E_{x,a\sim d^\pi_1}\ d_{tv}\left(\^f_1(x,a), Z^\pi_1(x,a)\right).
\end{align*}
\end{proof}

\subsection{Proof of Lemma~\ref{lem:mle}}\label{app:pf-lem-mle}

\begin{proof}
Observing Algorithm~\ref{alg:mle-ope}, when $h=H$, we are estimating the conditional distribution $Z^\pi_H$ via MLE. Under Assumption~\ref{asm:bell-comp} which implies that there exists a function $g\in\+F_H$ that agrees with $Z^\pi_H$ on the support of $\rho$, we can apply Lemma~\ref{lem:mle-error}, which leads to
\begin{equation*}
	\E_{x,a\sim\rho}d_{tv}^2\left(\^f_H(x,a),Z^\pi_H(x,a)\right)\le \frac{4H}{n}\log(|\+F_H|/\delta)
\end{equation*} 
with probability at least $1-\delta$. When $h<H$, we are estimating the conditional distribution $\+T^\pi\^f_{h+1}$ via MLE. Also note that thanks to the random data split, we have $\hat f_{h+1}$ being independent of the dataset $\mathcal{D}_h$ ($\hat f_{h+1}$ only depends on datasets $\mathcal{D}_{h+1},\dots \mathcal{D}_H$).
Therefore, under Assumption~\ref{asm:bell-comp} which implies that there exists a function $g\in\+F_h$ that agrees with $\+T^\pi\^f_{h+1}$ on the support of $\rho$, we can apply Lemma~\ref{lem:mle-error}, which leads to
\begin{equation*}
	\E_{x,a\sim\rho}d_{tv}^2\left(\^f_h(x,a),[\+T^\pi \^f_{h+1}](x,a)\right)\le \frac{4H}{n}\log(|\+F_H|/\delta)
\end{equation*}
with probability at least $1-\delta$.
We complete the proof by taking the union bound for $h\in[H]$.

\end{proof}

\subsection{Proof of Lemma~\ref{lem:mle-general}}\label{app:pf-lem-general}

\begin{proof}
The proof is similar to Lemma~\ref{lem:mle}. Observing Algorithm~\ref{alg:mle-ope}, when $h=H$, we are basically estimating the conditional distribution $Z^\pi_H$ via MLE. Hence, under Assumption~\ref{asm:bell-comp} which implies that there exists a function $g\in\+F_H$ that agrees with $Z^\pi_H$ on the support of $\rho$, we can apply Lemma~\ref{lem:mle-error}, which leads to

\begin{equation*}
	\E_{x,a\sim\rho}d_{tv}^2\left(\^f_H(x,a),Z^\pi_H(x,a)\right)\le \frac{10H}{n}\log\left(N_{[]}\big((nH^d)^{-1},\+F_H,\|\cdot\|_\infty\big)/\delta\right)
\end{equation*}
with probability at least $1-\delta$. When $h<H$, we are estimating the conditional distribution $\+T^\pi\^f_{h+1}$ via MLE. Therefore, under Assumption~\ref{asm:bell-comp} which implies that there exists a function $g\in\+F_h$ that agrees with $\+T^\pi\^f_{h+1}$ on the support of $\rho$, we can apply Lemma~\ref{lem:mle-error}, which leads to\begin{equation*}
	\E_{x,a\sim\rho}d_{tv}^2\left(\^f_h(x,a),[\+T^\pi \^f_{h+1}](x,a)\right)\le \frac{10H}{n}\log\left(N_{[]}\big((nH^d)^{-1},\+F_h,\|\cdot\|_\infty\big)/\delta\right)
\end{equation*}
with probability at least $1-\delta$.
We complete the proof by taking the union bound for $h\in[H]$.
\end{proof}

\subsection{Proof of Lemma~\ref{lem:contractive}}\label{app:pf-lem-contractive}
\begin{proof}

First, it deserves to verify that the ``metric'' $(\E_{x,a\sim d^\pi}d^{2p}_{w,p})^{1/(2p)}$ we are using satisfies the triangle inequality and is thus indeed a metric.  To this end, we note that, for any three densities $f_1,f_2,f_3:\+X\times\+A\mapsto\Delta([0,(1-\gamma)^{-1}]^d)$, the following holds since $d_{w,p}$ is a metric,
\begin{align*}
	\bigg(\E_{x,a\sim d^\pi} d^{2p}_{w,p}(f_1(x,a), f_2(x,a))\bigg)^{\frac{1}{2p}}
		\leq
	\bigg(\E_{x,a\sim d^\pi} \Big(d_{w,p}(f_1(x,a), f_3(x,a))+d_{w,p}(f_3(x,a), f_2(x,a))\Big)^{2p}\bigg)^{\frac{1}{2p}}.
\end{align*}
Then by Minkowski inequality, the above
\begin{align*}
\leq
	\left(\E_{x,a\sim d^\pi} d^{2p}_{w,p}(f_1(x,a), f_3(x,a))\right)^{\frac{1}{2p}}
	+
	\left(\E_{x,a\sim d^\pi} d^{2p}_{w,p}(f_3(x,a), f_2(x,a))\right)^{\frac{1}{2p}},
\end{align*}
for which we conclude triangle inequality for $(\E_{x,a\sim d^\pi}d^{2p}_{w,p})^{1/(2p)}$. Since other axioms of metrics are trivial to verify, we conclude that it is indeed a metric. Hence, we can safely proceed.

To establish the contractive property, we start with the following lemma, which shows that the distributional Bellman operator is roughly ``$\gamma$-contractive'' in a sense but with distribution shifts.
\begin{lemma}\label{lem:wp-contract}
For any $f,f'\in\+F$, $x\in\+X$ and $a\in\+A$, we have
\begin{align*}
	d_{w,p}^p\left([\+T^\pi f](x,a),[\+T^\pi f'](x,a)\right)
	\leq
	\E_{x'\sim P(x,a),a'\sim\pi(x')} \gamma^p d^p_{w,p}\left(f(x',a'),f'(x',a')\right).
\end{align*}
\end{lemma}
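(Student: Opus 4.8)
The plan is to prove \cref{lem:wp-contract} first, since it is the pointwise ``$\gamma$-contraction with distribution shift'' that, combined with the occupancy-measure recursion for $d^\pi$, yields \cref{lem:contractive}. To prove \cref{lem:wp-contract}, I would use the Kantorovich dual representation of the $p$-Wasserstein distance (\cref{lem:kan-dual}). Fix $(x,a)$ and let $(\psi,\phi)$ be any admissible dual pair, i.e.\ $\psi(z)-\phi(w)\le\|z-w\|^p$ for all $z,w$. Writing out $[\+T^\pi f](\cdot\given x,a)$ as the law of $r+\gamma y$ with $r\sim r(x,a)$, $x'\sim P(x,a)$, $a'\sim\pi(x')$, $y\sim f(\cdot\given x',a')$, and similarly for $f'$ with an independent copy $y'\sim f'(\cdot\given x',a')$, the dual objective becomes
\begin{align*}
\E\big[\psi(r+\gamma y)\big]-\E\big[\phi(r+\gamma y')\big].
\end{align*}
The key move is to couple the two returns so that they share the same $r$, the same $x'$, and the same $a'$, and then, conditionally on $(x',a')$, use an \emph{optimal} coupling of $f(x',a')$ and $f'(x',a')$ for the pair $(y,y')$. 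Under this coupling $\psi(r+\gamma y)-\phi(r+\gamma y')\le\|\gamma y-\gamma y'\|^p=\gamma^p\|y-y'\|^p$, and taking expectations and then the supremum over admissible $(\psi,\phi)$ gives, for this particular coupling, the bound $\gamma^p\,\E_{x',a'}\E_{(y,y')}\|y-y'\|^p=\gamma^p\,\E_{x',a'}\,d^p_{w,p}(f(x',a'),f'(x',a'))$; since $d^p_{w,p}([\+T^\pi f](x,a),[\+T^\pi f'](x,a))$ is the \emph{infimum} of the coupling cost, it is bounded by this quantity, which is exactly \cref{lem:wp-contract}. (Alternatively, one can argue directly on couplings without invoking duality: lift an optimal coupling of $f(x',a')$ and $f'(x',a')$ through the affine map $y\mapsto r+\gamma y$ to get a valid coupling of the pushforwards, whose cost is $\gamma^p\|y-y'\|^p$ in expectation.)

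Given \cref{lem:wp-contract}, I would finish \cref{lem:contractive} as follows. Raise the pointwise inequality to nothing (it is already in the right form) and take $\E_{x,a\sim d^\pi}$ on both sides; on the right one gets
\begin{align*}
\E_{x,a\sim d^\pi}\ \E_{x'\sim P(x,a),a'\sim\pi(x')}\gamma^p\, d^p_{w,p}\big(f(x',a'),f'(x',a')\big).
\end{align*}
Now I need the standard occupancy-measure identity: if $(x,a)\sim d^\pi$, $x'\sim P(x,a)$, $a'\sim\pi(x')$, then the law of $(x',a')$ is dominated by $d^\pi/\gamma$ up to the initial-distribution correction — more precisely, because $d^\pi=(1-\gamma)\sum_{h\ge1}\gamma^{h-1}d^\pi_h$ and pushing $d^\pi_h$ one step forward under $P$ and $\pi$ gives $d^\pi_{h+1}$, we get that the forward-pushed measure equals $\gamma^{-1}\big(d^\pi-(1-\gamma)d^\pi_1\big)$, hence is $\le \gamma^{-1} d^\pi$ pointwise. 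Substituting this domination, the right-hand side is at most $\gamma^{p-1}\E_{x,a\sim d^\pi} d^p_{w,p}(f(x,a),f'(x,a))$. So far this shows contraction in the $(\E_{d^\pi}d^p_{w,p})^{1/p}$ ``metric'' with factor $\gamma^{1-1/p}$; to land on the stated $(\E_{d^\pi}d^{2p}_{w,p})^{1/(2p)}$ with factor $\gamma^{1-1/(2p)}$, I would instead apply the pointwise bound with exponent $2p$ in place of $p$ — note $d_{w,p}$ (not $d_{w,2p}$) appears, and the affine-map/coupling argument gives $d^{2p}_{w,p}([\+T^\pi f](x,a),[\+T^\pi f'](x,a))\le \E_{x',a'}\gamma^{2p} d^{2p}_{w,p}(f(x',a'),f'(x',a'))$ by the same coupling raised to the $2p$ power (Jensen on the $2p$-th power of the expected $p$-cost, or directly bounding the coupling cost). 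Then the occupancy domination by $\gamma^{-1}d^\pi$ contributes one factor $\gamma^{-1}$, leaving $\gamma^{2p-1}$; taking $(\cdot)^{1/(2p)}$ gives the factor $\gamma^{(2p-1)/(2p)}=\gamma^{1-1/(2p)}$, as claimed. The triangle-inequality verification for the ``metric'' is already handled in the excerpt via Minkowski, so nothing further is needed there.

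The main obstacle is getting the coupling/pushforward argument airtight when the reward $r$ is itself random and correlated with $x'$ through the joint law $r(x,a)\times P(x,a)$: one must couple the two processes on the \emph{same} draw of $(r,x',a')$ and only randomize $(y,y')$ conditionally, so that the difference of the returns is purely $\gamma(y-y')$ and the reward cancels. This is the step where it is easy to slip and accidentally couple independent reward draws, which would ruin the bound. A secondary technical point is justifying the occupancy-measure domination $(\text{forward push of }d^\pi)\le\gamma^{-1}d^\pi$ cleanly from the definition $d^\pi=(1-\gamma)\sum_h\gamma^{h-1}d^\pi_h$ and the one-step relation between $d^\pi_h$ and $d^\pi_{h+1}$; this is routine but must be spelled out because it is exactly where the extra $\gamma^{-1}$ (and hence the improvement of the contraction factor from $\gamma$ to $\gamma^{1-1/(2p)}$) comes from. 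I would also make sure to state explicitly that the pointwise inequality of \cref{lem:wp-contract} only needs $f,f'$ to map into $\Delta([0,(1-\gamma)^{-1}]^d)$ and uses no structure of the function class $\+F$.
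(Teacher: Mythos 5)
Your proposal is correct, and for \cref{lem:wp-contract} it takes a genuinely different (primal) route from the paper's (dual) one. The paper starts from Kantorovich duality, pushes the supremum over admissible pairs $(\psi,\phi)\in\Gamma$ inside the expectation over $(r,x',a')$, and then, for each fixed $(r,x',a')$, \emph{rescales the potentials} via $\w~\psi(y)=\psi(r+\gamma y)/\gamma^p$, $\w~\phi(y)=\phi(r+\gamma y)/\gamma^p$, verifying that $(\w~\psi,\w~\phi)\in\Gamma$ so that the inner supremum is bounded by $\gamma^p d^p_{w,p}(f(x',a'),f'(x',a'))$. You instead construct an explicit coupling of the two pushforward laws that shares the draw of $(r,x',a')$ and uses a conditionally optimal coupling of $(y,y')$, then either invoke weak duality or simply the infimum characterization of $d_{w,p}$ to conclude; this is the standard ``lift an optimal coupling through the affine map $y\mapsto r+\gamma y$'' argument. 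Both proofs hinge on the same structural point you correctly isolate as the main hazard --- the reward must cancel, so the two returns must be built on the \emph{same} $(r,x',a')$; in the paper this sharing is implicit in the Jensen step that moves the supremum inside the expectation. What each buys: your coupling argument is arguably more elementary and makes the cancellation of $r$ completely transparent, at the cost of a (routine, usually glossed) measurable-selection point in choosing the conditional optimal coupling as a function of $(x',a')$ --- which can be sidestepped with $\epsilon$-optimal couplings --- whereas the paper's potential-rescaling argument avoids any coupling construction but requires the small verification that the rescaled pair stays in $\Gamma$. Your subsequent derivation of \cref{lem:contractive} from the pointwise bound (square the inequality, apply Jensen, and use the occupancy-measure domination $\E_{\~x,\~a\sim d^\pi}P(x\given\~x,\~a)\pi(a\given x)\le\gamma^{-1}d^\pi(x,a)$ coming from $d^\pi=(1-\gamma)\sum_{h\ge1}\gamma^{h-1}d^\pi_h$) matches the paper's proof exactly.
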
	
\begin{proof}[Proof of \cref{lem:wp-contract}]
By the dual form of Wasserstein distance (\cref{lem:kan-dual}), we have 
\begin{align*}
	&d_{w,p}^p\left([\+T^\pi f](x,a),[\+T^\pi f'](x,a)\right)
	=\sup_{(\psi,\phi)\in\Gamma}\E_{z\sim[\+T^\pi f](x,a)}\psi(z)-\E_{z\sim[\+T^\pi f'](x,a)}\phi(z)\\
	=&\sup_{(\psi,\phi)\in\Gamma}\E_{\substack{x'\sim P(x,a)\\a'\sim\pi(x')\\r\sim r(x,a)}}
	\left(\E_{y\sim f(x',a')}\psi(r+\gamma y)-\E_{y\sim f'(x',a')}\phi(r+\gamma y)\right)\\
	\leq&\E_{\substack{x'\sim P(x,a)\\a'\sim\pi(x')\\r\sim r(x,a)}}
	\underbrace{\sup_{(\psi,\phi)\in\Gamma}
	\left(\E_{y\sim f(x',a')}\psi(r+\gamma y)-\E_{y\sim f'(x',a')}\phi(r+\gamma y)\right)}_{(*)}\numberthis\label{eq:sup-contract}
\end{align*}
where $\Gamma=\{(\psi,\phi):\psi(x)-\phi(y)\leq \|x-y\|^p\}$.  The second equality holds by the definition of Bellman operator. 

Regarding $\rm(*)$, for any $(\psi,\phi)\in\Gamma$, we define $\w~\psi(y)=\psi(r+\gamma y)/\gamma^p$ and $\w~\phi(y)=\phi(r+\gamma y)/\gamma^p$. Then, we have
\begin{align*}
	(*)
	=
	\gamma^p
	\sup_{(\psi,\phi)\in\Gamma}
	\left(\E_{y\sim f(x',a')}\w~\psi(y)-\E_{y\sim f'(x',a')}\w~\phi(y)\right).
\end{align*}

We note that, for any $x,y$,
\begin{align*}
	\w~\psi(x)-\w~\phi(y)
	=
	\frac{\psi(r+\gamma x)-\phi(r+\gamma y)}{\gamma^p}
	\leq\frac{\|(r+\gamma x)-(r+\gamma y)\|^p}{\gamma^p}
	=\|x-y\|^p.
\end{align*}

Here the inequality holds since $(\psi,\phi)\in\Gamma$. Hence, $(\w~\psi,\w~\phi)\in\Gamma$ as well. In other words, for any given $\psi$ and $\phi$, their correspondences $\w~\psi$ and $\w~\phi$ are also in $\Gamma$. Thus we can take the supremum directly over the latter, which leads to
\begin{align*}
	(*)
	\leq
	\gamma^p
	\sup_{(\w~\psi,\w~\phi)\in\Gamma}
	\left(\E_{y\sim f(x',a')}\w~\psi(y)-\E_{y\sim f'(x',a')}\w~\phi(y)\right)
	=
	\gamma^p
	d^p_{w,p}\big(f(x',a'),f'(x',a')\big)
\end{align*}
where the equality holds due to the dual form of Wasserstein distance (\cref{lem:kan-dual}) again. Then we plug the above into \eqref{eq:sup-contract} and get
\begin{align*}
	&d_{w,p}^p\left([\+T^\pi f](x,a),[\+T^\pi f'](x,a)\right)
\leq\E_{\substack{x'\sim P(x,a),a'\sim\pi(x')}}
	\gamma^p
	d^p_{w,p}\big(f(x',a'),f'(x',a')\big).
\end{align*}
where we have removed the randomness of $r\sim r(x,a)$ originally appeared in \eqref{eq:sup-contract} since the the term inside the expectation is now completely independent of $r$.
\end{proof}

By \cref{lem:wp-contract}, we have
\begin{align*}
	&\left(\E_{x,a\sim d^\pi}d^{2p}_{w,p}\left([\+T^\pi f](x,a),[\+T^\pi f'](x,a)\right)\right)^{\frac{1}{2p}}
	=
	\left(\E_{x,a\sim d^\pi}\Big(
	d^p_{w,p}\left([\+T^\pi f](x,a),[\+T^\pi f'](x,a)\right)\Big)^2\right)^{\frac{1}{2p}}\\
	\leq&
	\gamma\cdot
	\left(\E_{x,a\sim d^\pi}\left(
	\E_{x'\sim P(x,a),a'\sim \pi(x')}
	d^p_{w,p}\left(f(x',a'),f'(x',a')\right)\right)^2\right)^{\frac{1}{2p}}\\
	\leq&
	\gamma\cdot
	\left(
	\underbrace{\E_{\substack{x,a\sim d^\pi\\x'\sim P(x,a),a'\sim \pi(x')}}
	d^{2p}_{w,p}\left(f(x',a'),f'(x',a')\right)}_{(\dagger)}\right)^{\frac{1}{2p}}
\end{align*}
where the last inequality holds because of Jensen's inequality. Since $d^\pi(x,a)=\gamma\E_{\~x,\~a\sim d^\pi} P(x|\~x,\~a)\pi(a|x)+(1-\gamma) \mu(x)\pi(x|a)$, we have $\E_{\~x,\~a\sim d^\pi} P(x|\~x,\~a)\pi(a|x)\leq \gamma^{-1}d^\pi(x,a)$. Therefore,
\begin{align*}
	(\dagger)\leq \gamma^{-1} \E_{x,a\sim d^\pi}d^{2p}_{w,p}\left(f(x,a), f'(x,a)\right).
\end{align*}
Hence, we conclude that
\begin{align*}
	&\left(\E_{x,a\sim d^\pi}d^{2p}_{w,p}\left([\+T^\pi f](x,a),[\+T^\pi f'](x,a)\right)\right)^{\frac{1}{2p}}\\
	\leq&
	\gamma\cdot
	\left(\gamma^{-1}\E_{x,a\sim d^\pi} d^{2p}_{w,p}\left(f(x,a), f'(x,a)\right)\right)^{\frac{1}{2p}}\\
	=&
	\gamma^{1-\frac{1}{2p}}\cdot
	\left(\E_{x,a\sim d^\pi} d^{2p}_{w,p}\left(f(x,a), f'(x,a)\right)\right)^{\frac{1}{2p}}.
\end{align*}
\end{proof}

\subsection{Proof of Theorem~\ref{thm:bell-error-to-final-inf}}
\begin{proof}

We will prove the following theorem which is more general. 

\begin{theorem}\label{thm:refined}
Under \cref{asm:cover-inf}, suppose we have a sequence of functions $\^f_1,\dots,\^f_T:\+X\times\+A\mapsto\Delta([0,(1-\gamma)^{-1}]^d)$ and a sequence of values $\zeta_1,\dots,\zeta_T\in\=R$ such that
\begin{equation*}
 \textstyle \bigg(\E_{x,a\sim\rho}\ d_{w,p}^{2p}\left(\^f_t(x,a), [\+T^\pi \^f_{t-1}](x,a)\right)\bigg)^{\frac{1}{2p}}\le \zeta_t
\end{equation*}
holds for all $t\in[T]$. Let our estimator $\^f\coloneqq\E_{x\sim\mu,a\sim\pi(x)}\^f_T(x,a)$. Then we have, for all $p\ge1$,
\begin{equation}\label{eq:geo-zeta_t-ap}
	d_{w,p}\left(\^f, Z^\pi\right)
	\leq
	\left(\frac{C}{1-\gamma}\right)^{\frac{1}{2p}}
	\sum_{t=1}^T \gamma^{(T-t)\left(1-\frac{1}{2p}\right)}\cdot\zeta_t
	+\frac{\sqrt{d}\cdot\gamma^{T\left(1-\frac{1}{2p}\right)}}{(1-\gamma)^{1+\frac{1}{2p}}}.
\end{equation}
\end{theorem}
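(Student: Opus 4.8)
The plan is to run the same approximate-fixed-point telescoping argument as in the finite-horizon case (\cref{thm:bell-error-to-final}), but carried out entirely in the $d^\pi$-averaged $2p$-Wasserstein pseudometric $\varrho(f,g):=\big(\E_{x,a\sim d^\pi}d^{2p}_{w,p}(f(x,a),g(x,a))\big)^{1/(2p)}$, since this is precisely the quantity in which $\+T^\pi$ contracts (\cref{lem:contractive}); recall $\varrho$ obeys the triangle inequality, as verified at the start of the proof of \cref{lem:contractive} via Minkowski, so it may be manipulated like a metric. Using $\+T^\pi\-Z^\pi=\-Z^\pi$, set $e_t:=\varrho(\^f_t,\-Z^\pi)$ for $t=0,\dots,T$ and $\beta:=\gamma^{1-1/(2p)}$.

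First I would establish the one-step recursion $e_t\le C^{1/(2p)}\zeta_t+\beta\,e_{t-1}$ for each $t\in[T]$. Pointwise in $(x,a)$, the triangle inequality for $d_{w,p}$ and $[\+T^\pi\-Z^\pi](x,a)=\-Z^\pi(x,a)$ give $d_{w,p}(\^f_t(x,a),\-Z^\pi(x,a))\le d_{w,p}(\^f_t(x,a),[\+T^\pi\^f_{t-1}](x,a))+d_{w,p}([\+T^\pi\^f_{t-1}](x,a),[\+T^\pi\-Z^\pi](x,a))$; raising to the power $2p$, applying $\E_{x,a\sim d^\pi}$, taking the $(2p)$-th root and using Minkowski splits the right side into an $\E_\rho$-type term plus a $\+T^\pi$-image term. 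The first is $\le C^{1/(2p)}\zeta_t$ by \cref{asm:cover-inf} (valid since $\^f_t,\^f_{t-1}\in\+F$, with $\^f_0\in\+F$ the initialization) and the hypothesis $\big(\E_\rho d^{2p}_{w,p}(\^f_t,\+T^\pi\^f_{t-1})\big)^{1/(2p)}\le\zeta_t$; the second is $\le\beta\,e_{t-1}$ by \cref{lem:contractive}. Unrolling gives $e_T\le C^{1/(2p)}\sum_{t=1}^T\gamma^{(T-t)(1-1/(2p))}\zeta_t+\gamma^{T(1-1/(2p))}e_0$, and since $\^f_0(x,a),\-Z^\pi(x,a)$ live on $[0,(1-\gamma)^{-1}]^d$, \cref{lem:wass-diam} gives $e_0\le\sqrt d/(1-\gamma)$.

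It remains to convert $e_T$ into a bound on $d_{w,p}(\^f,Z^\pi)$. Since $\^f=\E_{x\sim\mu,a\sim\pi(x)}\^f_T(x,a)$ and $Z^\pi=\E_{x\sim\mu,a\sim\pi(x)}\-Z^\pi(x,a)$, joint convexity of $d^p_{w,p}$ — equivalently, pushing the $\sup$ over Kantorovich potentials in \cref{lem:kan-dual} inside the expectation, exactly as in the proof of \cref{thm:bell-error-to-final} — gives $d^p_{w,p}(\^f,Z^\pi)\le\E_{x,a\sim d^\pi_1}d^p_{w,p}(\^f_T(x,a),\-Z^\pi(x,a))$. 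I would then change measure from $d^\pi_1$ to $d^\pi$ with a \emph{single} Cauchy--Schwarz step that simultaneously absorbs the density ratio and bumps the moment from $p$ to $2p$:
\[ \E_{d^\pi_1}d^p_{w,p}(\^f_T,\-Z^\pi)\le\Big(\E_{d^\pi}\big[(d^\pi_1/d^\pi)^2\big]\Big)^{1/2}\Big(\E_{d^\pi}d^{2p}_{w,p}(\^f_T,\-Z^\pi)\Big)^{1/2}\le(1-\gamma)^{-1/2}\,e_T^{\,p}, \]
using $\E_{d^\pi}[(d^\pi_1/d^\pi)^2]=\E_{d^\pi_1}[d^\pi_1/d^\pi]\le\|d^\pi_1/d^\pi\|_\infty\le(1-\gamma)^{-1}$. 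Taking $p$-th roots yields $d_{w,p}(\^f,Z^\pi)\le(1-\gamma)^{-1/(2p)}e_T$, and substituting the bound on $e_T$ gives exactly \eqref{eq:geo-zeta_t-ap}. The simplified \cref{thm:bell-error-to-final-inf} then follows by taking $\zeta_t\equiv\zeta$, summing $\sum_t\gamma^{(T-t)(1-1/(2p))}\le(1-\gamma^{1-1/(2p)})^{-1}\le2(1-\gamma)^{-1}$, and using $\tfrac12\le1-\tfrac1{2p}$ and $1+\tfrac1{2p}\le\tfrac32$ for $p\ge1$.

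The telescoping itself is routine; the two delicate points are (i) keeping \emph{every} estimate inside the $d^\pi$-averaged $2p$-Wasserstein pseudometric — invoking \cref{lem:contractive} only under $\E_{x,a\sim d^\pi}$, never under $\sup_{x,a}$ (the $\sup$-norm contraction does not suffice here), and using the triangle inequality for $\varrho$ — and (ii) the final change of measure, which must cost only $(1-\gamma)^{-1/2}$ at the level of $d^p_{w,p}$, i.e.\ $(1-\gamma)^{-1/(2p)}$ for $d_{w,p}$; applying a crude $\|d^\pi_1/d^\pi\|_\infty$ bound \emph{after} a Jensen step $\E_{d^\pi}d^p_{w,p}\le(\E_{d^\pi}d^{2p}_{w,p})^{1/2}$ would instead cost $(1-\gamma)^{-1}$ there, losing an extra $(1-\gamma)^{-1/(2p)}$ factor and missing the stated constant.
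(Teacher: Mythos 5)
Your proof is correct and follows essentially the same route as the paper's: the same one-step recursion in the $d^\pi$-averaged $2p$-Wasserstein pseudometric (triangle inequality, then \cref{asm:cover-inf} on the MLE term and \cref{lem:contractive} on the Bellman-image term), the same unrolling with the diameter bound on $e_0$, and the same final conversion costing $(1-\gamma)^{-1/(2p)}$. The only cosmetic difference is that you fuse the change of measure from $\mu\times\pi$ to $d^\pi$ and the moment bump from $p$ to $2p$ into a single weighted Cauchy--Schwarz step, whereas the paper first applies Jensen under $\mu\times\pi$ and then changes measure on the $2p$-th moment --- the two orderings yield the identical constant.
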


\begin{proof}[Proof of \cref{thm:refined}]
Recall that we defined the conditional distribuions $\-{Z}^\pi(x,a)\in\Delta([0,(1-\gamma)^{-1}]^d)$ which is the distribution of the return under policy $\pi$ starting with state action $(x,a)$. 
It is easy to see that $Z^\pi =\mathbb{E}_{x\sim \mu, a\sim \pi(x)} \left[ \-Z^\pi(x,a)\right]$. We start with the following.
\begin{align*}
	&\left(\E_{x,a\sim d^\pi} d^{2p}_{w,p}\left(\^f_t(x,a), \-{Z}^\pi(x,a)\right)\right)^{\frac{1}{2p}}\\
	\leq&\left(\E_{x,a\sim d^\pi} d^{2p}_{w,p}\left(\^f_t(x,a), [\+T^\pi\^f_{t-1}](x,a)\right)\right)^{\frac{1}{2p}}
	+\left(\E_{x,a\sim d^\pi} d^{2p}_{w,p}\left([\+T^\pi\^f_{t-1}](x,a), \-{Z}^\pi(x,a)\right)\right)^{\frac{1}{2p}}\\
	\leq&\ C^{\frac{1}{2p}}\left(\E_{x,a\sim \rho} d^{2p}_{w,p}\left(\^f_t(x,a), [\+T^\pi\^f_{t-1}](x,a)\right)\right)^{\frac{1}{2p}}
	+\left(\E_{x,a\sim d^\pi} d^{2p}_{w,p}\left([\+T^\pi\^f_{t-1}](x,a), [\+T^\pi \-{Z}^\pi](x,a)\right)\right)^{\frac{1}{2p}}\\
	\leq&C^{\frac{1}{2p}}\zeta_t+\gamma^{1-\frac{1}{2p}}\left(\E_{x,a\sim d^\pi}d^{2p}_{w,p}\left(\^f_{t-1}(x,a), \-{Z}^\pi(x,a)\right)\right)^{\frac{1}{2p}}
\end{align*}
where the first inequality is due to triangle inequality (proved in \cref{app:pf-lem-contractive}), the second inequality holds because of the coverage assumption (\cref{asm:cover-inf}), and the last inequality holds due to the contractive property of the distributional Bellman operator (\cref{lem:contractive}). Unrolling the recursion of $t$, we arrive at
\begin{align*}
	&\left(\E_{x,a\sim d^\pi} d^{2p}_{w,p}\left(\^f_T(x,a), \-{Z}^\pi(x,a)\right)\right)^{\frac{1}{2p}}\\
	\leq& 
	\sum_{t=1}^T \gamma^{(T-t)\left(1-\frac{1}{2p}\right)} C^{\frac{1}{2p}}\zeta_t
	+\gamma^{T\left(1-\frac{1}{2p}\right)}\left(\E_{x,a\sim d^\pi}d^{2p}_{w,p}\left(\^f_0(x,a), \-{Z}^\pi(x,a)\right)\right)^{\frac{1}{2p}}\\
	\leq& 
	\sum_{t=1}^T \gamma^{(T-t)\left(1-\frac{1}{2p}\right)} C^{\frac{1}{2p}}\zeta_t
	+\gamma^{T\left(1-\frac{1}{2p}\right)}\cdot\frac{\sqrt{d}}{1-\gamma}\numberthis\label{eq:step1}
\end{align*}
where the last inequality is due to \cref{lem:wass-diam} which shows that  
$$d_{w,p}(\^f_0(x,a), \-{Z}^\pi(x,a))\leq \text{\rm diam}\big([0,(1-\gamma)^{-1}]^d\big)\leq \frac{\sqrt{d}}{(1-\gamma)}.$$

Since $d^\pi(x,a)=\gamma\E_{\~x,\~a\sim d^\pi} P(x|\~x,\~a)\pi(a|x)+(1-\gamma) \mu(x)\pi(x\given a)$, we have $\mu(x)\pi(x\given a)\leq (1-\gamma)^{-1} d^\pi(x,a)$ and thus
\begin{align*}
	&\left(\E_{x\sim\mu,a\sim\pi(x)} d^{2p}_{w,p}\left(\^f_T(x,a), \-{Z}^\pi(x,a)\right)\right)^{\frac{1}{2p}}
	\leq
	\left((1-\gamma)^{-1}\E_{x,a\sim d^\pi} d^{2p}_{w,p}\left(\^f_T(x,a), \-{Z}^\pi(x,a)\right)\right)^{\frac{1}{2p}}\\
	=&(1-\gamma)^{-\frac{1}{2p}}\left(\E_{x,a\sim d^\pi} d^{2p}_{w,p}\left(\^f_T(x,a), \-{Z}^\pi(x,a)\right)\right)^{\frac{1}{2p}}
	\leq
	(1-\gamma)^{-\frac{1}{2p}}\left(\sum_{t=1}^T \gamma^{(T-t)\left(1-\frac{1}{2p}\right)} C^{\frac{1}{2p}}\zeta_t
	+\gamma^{T\left(1-\frac{1}{2p}\right)}\cdot\frac{\sqrt{d}}{1-\gamma}\right)\numberthis\label{eq:step2}
\end{align*}
where the last inequality is for \eqref{eq:step1}.

Applying the dual representation of Wasserstein distance (\cref{lem:kan-dual}) to $d_{w,p}^p\left(\^f, Z^\pi\right)$, we have
\begin{align*}
	&d_{w,p}^p\left(\^f, Z^\pi\right)
	=d_{w,p}^p\left(\E_{x\sim\mu,a\sim\pi(x)}\^f_T(x,a), \E_{x\sim\mu,a\sim\pi(x)}\-{Z}^\pi(x,a)\right)\\
	=&\sup_{\psi,\phi\in\Gamma}\E_{x\sim\mu,a\sim\pi(x)}\left(\E_{z\sim\^f_T(x,a)}\psi(z)-\E_{z\sim\-Z^\pi(x,a)}\phi(z)\right)\\
	\leq&\E_{x\sim\mu,a\sim\pi(x)}\sup_{\psi,\phi\in\Gamma}\left(\E_{z\sim\^f_T(x,a)}\psi(z)-\E_{z\sim\-Z^\pi(x,a)}\phi(z)\right)\\
	=&\E_{x\sim\mu,a\sim\pi(x)}d^p_{w,p}\left(\^f_T(x,a),\-Z^\pi(x,a)\right)\\
	\leq&\left(\E_{x\sim\mu,a\sim\pi(x)}d^{2p}_{w,p}\left(\^f_T(x,a),\-Z^\pi(x,a)\right)\right)^{\frac{1}{2}}.\numberthis\label{eq:step3}
\end{align*}
where $\Gamma=\{(\psi,\phi):\psi(x)-\phi(y)\leq \|x-y\|^p\}$. By chaining \eqref{eq:step2} and \eqref{eq:step3} we complete the proof.
\end{proof}

By assuming there exists a common upper bound $\zeta$ (i.e., $\zeta_t\leq\zeta$, $\forall t$), we can further simplify \eqref{eq:geo-zeta_t-ap} by noticing the following. First, since the sum of geometric series is bounded in the following sense
\begin{equation*}
\sum_{t=1}^T\gamma^{(T-t)\left(1-\frac{1}{2p}\right)}
	\leq
	\frac{1}{1-\gamma^{\left(1-\frac{1}{2p}\right)}},
\end{equation*}
we can get
\begin{equation*}
	d_{w,p}\left(\^f, Z^\pi\right)
	\leq
	\left(\frac{C}{1-\gamma}\right)^{\frac{1}{2p}}
	\frac{\zeta}{\left(1-\gamma^{1-\frac{1}{2p}}\right)}
	+\frac{\sqrt{d}\cdot\gamma^{T\left(1-\frac{1}{2p}\right)}}{(1-\gamma)^{1+\frac{1}{2p}}}.
\end{equation*}
Second, we note that the right-hand side above attains the maximum when $p=1$. Therefore
\begin{align*}
	d_{w,p}\left(\^f, Z^\pi\right)
	\leq&
	\left(\frac{C}{1-\gamma}\right)^{\frac{1}{2p}}\cdot
	\frac{\zeta}{1-\gamma^{\frac{1}{2}}}
	+\frac{\sqrt{d}\cdot\gamma^{\frac{T}{2}}}{(1-\gamma)^{\frac{3}{2}}}\\
	\leq&
	\frac{2C^{\frac{1}{2p}}}{(1-\gamma)^{\frac{3}{2}}}
	\cdot \zeta
	+\frac{\sqrt{d}\cdot\gamma^{\frac{T}{2}}}{(1-\gamma)^{\frac{3}{2}}}
\end{align*}
where the last inequality holds since $1-\gamma^{1/2}\geq(1-\gamma)/2$.
\end{proof}

\subsection{Proof of Lemma~\ref{lem:mle-inf}}\label{app:pf-lem-mle-inf}

\begin{proof}
We only show the proof for finite function class since the proof for infinite class is essentially the same. 

For Algorithm~\ref{alg:mle-ope-inf}, we are iteratively estimating the conditional distribution $\+T^\pi\^f_{t-1}$. Note that thanks to the random data split, we have $\^f_{t-1}$ being independent of the dataset $\mathcal{D}_t$ ($\hat f_{t-1}$ only depends on datasets $\mathcal{D}_1,\dots \mathcal{D}_{t-1}$).
Therefore, under Assumption~\ref{asm:bell-comp-inf} which implies that there exists a function $g\in\+F$ that agrees with $\+T^\pi\^f_{t-1}$ on the support of $\rho$, we can apply Lemma~\ref{lem:mle-error}, which leads to
\begin{equation*}
	\E_{x,a\sim\rho}d_{tv}^2\left(\^f_t(x,a),[\+T^\pi \^f_{t-1}](x,a)\right)\le \frac{4T}{n}\log(|\+F|T/\delta)
\end{equation*}
with probability at least $1-\delta$. Here we have taken the union bound for $t\in[T]$. For the result of Wasserstein distance, we apply \cref{lem:tv-wass} and get
	\begin{align*}
		&\E_{x,a\sim\rho}d_{w,p}^{2p}\left(\^f_t(x,a),[\+T^\pi \^f_{t-1}](x,a)\right)
		\leq\left(\frac{\sqrt{d}}{1-\gamma}\right)^{2p} \E_{x,a\sim\rho}d_{tv}^2\left(\^f_t(x,a),[\+T^\pi \^f_{t-1}](x,a)\right).
	\end{align*}
	
\end{proof}

\subsection{Proof of Corollary~\ref{cor:mle-main-inf}}
\begin{proof}
	We only prove for the finite function class ($|\+F|<\infty$) since the proof for the infinite function class is quite similar. 
	
	We start with \cref{thm:bell-error-to-final-inf}, plug in the result of \cref{lem:mle-inf}, and get
\begin{align*}
	d_{w,p}\left(\^f, Z^\pi\right)
	\leq&
	\frac{2C^{\frac{1}{2p}}}{(1-\gamma)^{\frac{3}{2}}}
	\cdot
	\frac{\sqrt{d}}{1-\gamma}
	\cdot
	\left(\frac{4T}{n}\log(|\+F|T/\delta)\right)^{\frac{1}{2p}}
	+\frac{\sqrt{d}\cdot\gamma^\frac{T}{2}}{(1-\gamma)^{\frac{3}{2}}}\\
	=&
	\frac{\sqrt{d}}{(1-\gamma)^{\frac{3}{2}}}
	\left(
	\frac{2C^{\frac{1}{2p}}}{1-\gamma}
	\cdot
	\left(\frac{4T}{n}\log(|\+F|T/\delta)\right)^{\frac{1}{2p}}
	+\gamma^{\frac{T}{2}}
	\right)\numberthis\label{eq:tmp1}
\end{align*}
We choose
\begin{align*}
T=\frac{\log\left(C^{\frac{1}{2p}}\cdot\iota^{\frac{1}{2p}}\cdot \left(1-\gamma\right)^{-1}\cdot n^{-\frac{1}{2p}}\right)}{\log\left(\gamma^{\frac{1}{2}}\right)}
\text{\quad where\quad}
\iota=\log(|\+F|/\delta),
\end{align*}
which leads to
\begin{align*}
\gamma^{\frac{T}{2}}=\frac{C^{\frac{1}{2p}}\cdot\iota^{\frac{1}{2p}}\cdot n^{-\frac{1}{2p}}}{1-\gamma}.
\end{align*}
Thus, the second additive term of \eqref{eq:tmp1} will be smaller than the first one. Hence, we conclude that
\begin{equation*}
	d_{w,p}\left(\^f, Z^\pi\right)
	\leq
	2\cdot
	\frac{\sqrt{d}}{(1-\gamma)^{\frac{3}{2}}}
	\cdot
	\frac{2C^{\frac{1}{2p}}}{1-\gamma}
	\cdot
	\left(\frac{4T}{n}\log(|\+F|T/\delta)\right)^{\frac{1}{2p}}
	\leq
	\w~O\left(
	\frac
	{\sqrt{d}\big(C\log(|\+F|T/\delta)\big)^{\frac{1}{2p}}}
	{(1-\gamma)^{\frac{5}{2}}\cdot n^{\frac{1}{2p}}}
	\right).
\end{equation*}
\end{proof}

\subsection{Proof of Lemma~\ref{lem:tabular}}\label{app:tabular}
\begin{proof}
The bracketing number of the probability simplex $\Delta(|\+X||\+A|)$ is bounded by $N_{[]}(\epsilon,\Delta(|\+X||\+A|),\|\cdot\|_\infty)\leq (c/\epsilon)^{|\+X||\+A|}$ where $c$ is a constant. Hence, we have $N_{[]}(\epsilon,(\Delta(|\+X||\+A|))^{|\+X||\+A|},\|\cdot\|_\infty)\leq (c/\epsilon)^{|\+X|^2|\+A|^2}$.

Let $\w~\Delta$ denote an $\epsilon$-bracket of $(\Delta(|\+X||\+A|))^{|\+X||\+A|}$. Then we can construct a bracket of $\+F_h$ as follows
	\begin{equation*}
		\w~{\+F}_h=\left\{
		\Big[\underline{f},\overline{f}\Big]
		:
		\underline{f}(x,a)=\sum_{x',a'}\underline{w}_{x,a}(x',a')r_H(x',a'),\ 
		\overline{f}(x,a)=\sum_{x',a'}\overline{w}_{x,a}(x',a')r_H(x',a'),\ 
		\forall [\underline{w},\overline{w}]\in\w~\Delta
		\right\}.
	\end{equation*}
	We claim that $\w~{\+F}_h$ is a $\epsilon r_\infty|\+X||\+A|$-bracket of $\+F_h$. To see this, we have
	\begin{align*}
		\Big\|\overline{f}-\underline{f}\Big\|_\infty
		\leq
		\sum_{x',a'}\left|\underline{w}_{x,a}(x',a')-\overline{w}_{x,a}(x',a')\right|r_H(x',a')
		\leq \epsilon\sum_{x',a'}r_H(x',a')
		\leq \epsilon r_\infty|\+X||\+A|.
	\end{align*}
	Therefore, we conclude $N_{[]}(\epsilon r_\infty|\+X||\+A|, \+F_h, \|\cdot\|_\infty)\leq|\w~\Delta|\leq (c/\epsilon)^{|\+X|^2|\+A|^2}$. By substitution we arrive at $N_{[]}(\epsilon , \+F_h, \|\cdot\|_\infty)\leq(cr_\infty|\+X||\+A|/\epsilon)^{|\+X|^2|\+A|^2}$. Then we complete the proof by taking a logatithm.
\end{proof}

\subsection{Proof of Lemma~\ref{lem:lqr-closed-form}}\label{app:pf-lqr}
\begin{align*}
	\mu_h(x,a)=
	&\sum_{i=h}^H-(x_i^\top Qx_i+a_i^\top Ra_i)
	=
	-x^\top Qx-a^\top Ra-\sum_{i=h+1}^H-(x_i^\top Qx_i+a_i^\top Ra_i)\\
	=&
	-x^\top Qx-a^\top Ra-\sum_{i=h+1}^H(x_i^\top Qx_i+x_i^\top K^\top R K x_i)\\
	=&
	-x^\top Qx-a^\top Ra-\sum_{i=h+1}^Hx_i^\top \left( Q + K^\top R K\right) x_i\\
	=&
	-x^\top Qx-a^\top Ra-\sum_{i=h+1}^H \left((A+BK)^{i-h-1}(Ax+Ba)\right)^\top \left(Q + K^\top R K\right) \left((A+BK)^{i-h-1} (Ax+Ba)\right)\\
	=&
	-x^\top Qx-a^\top Ra- (Ax+Ba)^\top\left(\sum_{i=h+1}^H\left((A+BK)^{i-h-1}\right)^\top \left(Q + K^\top R K\right) (A+BK)^{i-h-1}\right) (Ax+Ba).
\end{align*}

\subsection{Proof of Lemma~\ref{lem:lqr-complextiy}}\label{app:pf-lqr-complexity}

\begin{lemma}\label{lem:exp-diff}
	For any $x,a,b\in\=R$, we have $\exp(-(x-a)^2)-\exp(-(x-b)^2)\leq\sqrt{2/e}\cdot|a-b|$.
\end{lemma}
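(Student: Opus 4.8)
The plan is to reduce the statement to a one-variable Lipschitz bound for the function $g(t):=\exp(-(x-t)^2)$. First, if $\exp(-(x-a)^2)-\exp(-(x-b)^2)\le 0$ there is nothing to prove, since the right-hand side $\sqrt{2/e}\,|a-b|$ is nonnegative; hence I may assume $a\neq b$ and that the left-hand side is positive. Fixing $x$ and applying the mean value theorem to $g$ on the interval with endpoints $a$ and $b$, there is a point $\xi$ between $a$ and $b$ with $g(a)-g(b)=g'(\xi)(a-b)$, so that
$\exp(-(x-a)^2)-\exp(-(x-b)^2)=g(a)-g(b)\le |g'(\xi)|\cdot|a-b|\le \big(\sup_{t\in\mathbb{R}}|g'(t)|\big)\cdot|a-b|$.

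It therefore suffices to prove $\sup_{t\in\mathbb{R}}|g'(t)|=\sqrt{2/e}$. A direct computation gives $g'(t)=2(x-t)\exp(-(x-t)^2)$, so setting $u=|x-t|\ge 0$ the quantity to be bounded is $\varphi(u):=2u\,e^{-u^2}$ over $u\ge 0$. Since $\varphi'(u)=2e^{-u^2}(1-2u^2)$ is positive for $u<1/\sqrt{2}$ and negative for $u>1/\sqrt{2}$, the function $\varphi$ attains its maximum on $[0,\infty)$ at $u=1/\sqrt{2}$, with value $\varphi(1/\sqrt{2})=2\cdot\tfrac{1}{\sqrt{2}}\cdot e^{-1/2}=\sqrt{2/e}$. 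Combining this with the displayed chain of inequalities above yields the claim.

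I do not anticipate any genuine obstacle here; the statement is an elementary calculus fact. The only two points needing a little care are (i) that the inequality is stated one-sidedly (no absolute value on the left), which is why the trivial case $g(a)\le g(b)$ should be separated out at the start, and (ii) the maximization of $\varphi(u)=2ue^{-u^2}$, which is a routine single-variable optimization but is precisely where the constant $\sqrt{2/e}$ comes from.
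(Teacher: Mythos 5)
Your proof is correct and rests on the same key fact as the paper's: the bound $\sup_t\bigl|2(x-t)e^{-(x-t)^2}\bigr|=\sqrt{2/e}$, which the paper states as $\max_x|xe^{-x^2}|\le 1/\sqrt{2e}$. The only cosmetic difference is that you convert this derivative bound into the claimed inequality via the mean value theorem, whereas the paper does so by checking monotonicity of the auxiliary functions $\exp(-(x-a)^2)\mp\sqrt{2/e}\,a$ in $a$ (one for each sign case); both are the standard Lipschitz-from-bounded-derivative argument.
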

\begin{proof}[Proof of Lemma~\ref{lem:exp-diff}]
	When $a\geq b$, it is equivalent to $\exp(-(x-a)^2)-\exp(-(x-b)^2)\leq \sqrt{2/e}\cdot(a-b)$. Thus it suffices to show that $g(x,a)\coloneqq\exp(-(x-a)^2)-\sqrt{2/e}\cdot a$ is non-increasing in $a$. We take the first derivative with respect to $a$ and then get
	\begin{equation*}
		\frac{\partial }{\partial a}g(x,a)
		=2(x-a)\exp\big(-(x-a)^2\big)-\sqrt{\frac{2}{e}}\le0
	\end{equation*}
	since it is easy to verify that $\max_{x}|x\exp(-x^2)|\leq1/\sqrt{2e}$. This completes the proof for $a\geq b$.
	
	When $a< b$, it suffices to show that $h(x,a)\coloneqq\exp(-(x-a)^2)+\sqrt{2/e}\cdot a$ is non-decreasing in $a$. We take the first derivative with respect to $a$ and then get
	\begin{equation*}
		\frac{\partial }{\partial a}h(x,a)
		=2(x-a)\exp\big(-(x-a)^2\big)+\sqrt{\frac{2}{e}}\ge0.
	\end{equation*}
	Thus we are done.
\end{proof}

\begin{lemma}\label{lem:diff-gaussian}
For any $\mu_1,\mu_2\in\=R$, it holds that $\max_{x}\+N(x\,|\,\mu_1,\sigma^2)-\+N(x\,|\,\mu_2,\sigma^2) \leq \frac{1}{\sigma^2\sqrt{2\pi e}}\cdot|\mu_1-\mu_2|$.
\end{lemma}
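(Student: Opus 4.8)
The plan is to reduce the statement to Lemma~\ref{lem:exp-diff} by an affine change of variables that absorbs both the normalization constant $1/(\sigma\sqrt{2\pi})$ and the factor $1/2$ in the Gaussian exponent. Writing $\mathcal{N}(x\,|\,\mu,\sigma^2) = \frac{1}{\sigma\sqrt{2\pi}}\exp\!\big(-(x-\mu)^2/(2\sigma^2)\big)$, I would first pull the constant $\frac{1}{\sigma\sqrt{2\pi}}$ out of the difference, so that it suffices to bound $\exp\!\big(-(x-\mu_1)^2/(2\sigma^2)\big) - \exp\!\big(-(x-\mu_2)^2/(2\sigma^2)\big)$ uniformly in $x$.

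Next I would substitute $u = x/(\sigma\sqrt{2})$, $a = \mu_1/(\sigma\sqrt{2})$, and $b = \mu_2/(\sigma\sqrt{2})$, which turns $(x-\mu_1)^2/(2\sigma^2)$ and $(x-\mu_2)^2/(2\sigma^2)$ into exactly $(u-a)^2$ and $(u-b)^2$. The quantity to bound then becomes $\exp(-(u-a)^2) - \exp(-(u-b)^2)$, and Lemma~\ref{lem:exp-diff} immediately yields the bound $\sqrt{2/e}\,|a-b|$. Undoing the substitution gives $|a-b| = |\mu_1-\mu_2|/(\sigma\sqrt{2})$, so after reinstating the prefactor $\frac{1}{\sigma\sqrt{2\pi}}$ we obtain $\frac{1}{\sigma\sqrt{2\pi}}\cdot \sqrt{2/e}\cdot \frac{|\mu_1-\mu_2|}{\sigma\sqrt{2}} = \frac{|\mu_1-\mu_2|}{\sigma^2\sqrt{2\pi e}}$, which is the claimed inequality. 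Since $x$ (equivalently $u$) was arbitrary, the same bound holds for the maximum over $x$.

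There is essentially no obstacle here beyond careful bookkeeping of constants; the only mildly delicate point is applying the change of variables consistently so that the factor $1/2$ in the exponent is exactly cancelled (i.e.\ scaling by $\sigma\sqrt{2}$ rather than $\sigma$ or $2$), and invoking Lemma~\ref{lem:exp-diff} with the correctly scaled arguments. As an alternative one could bypass Lemma~\ref{lem:exp-diff} and bound the $\mu$-derivative of $\mathcal{N}(x\,|\,\mu,\sigma^2)$ directly using $\max_{t}|t\exp(-t^2)| = 1/\sqrt{2e}$ together with the mean value theorem in $\mu$, but reusing Lemma~\ref{lem:exp-diff} keeps the argument shorter.
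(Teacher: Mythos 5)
Your proposal is correct and matches the paper's own proof: both pull out the normalization constant $\frac{1}{\sigma\sqrt{2\pi}}$, rescale by $\sigma\sqrt{2}$ so the exponent becomes $-(u-a)^2$ with $a=\mu_1/(\sigma\sqrt{2})$, $b=\mu_2/(\sigma\sqrt{2})$, and then invoke Lemma~\ref{lem:exp-diff} to obtain the factor $\sqrt{2/e}\,|a-b|$, which simplifies to the claimed constant. The constant bookkeeping in your argument is exactly right.
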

\begin{proof}[Proof of Lemma~\ref{lem:diff-gaussian}]
\begin{align*}
	&\+N(x\,|\,\mu_1,\sigma^2)-\+N(x\,|\,\mu_2,\sigma^2)=
	\frac{1}{\sigma\sqrt{2\pi}}\left(\exp\left(-\frac{1}{2}\left(\frac{x-\mu_1}{\sigma}\right)^2\right)-\exp\left(-\frac{1}{2}\left(\frac{x-\mu_2}{\sigma}\right)^2\right)\right)\\
	\leq&\frac{1}{\sigma\sqrt{2\pi}}\cdot\sqrt{\frac{2}{e}}\cdot\left|\frac{\mu_1}{\sigma\sqrt{2}}-\frac{\mu_2}{\sigma\sqrt{2}}\right|
	=\frac{1}{\sigma^2\sqrt{2\pi e}}|\mu_1-\mu_2|
\end{align*}
where the inequality holds for Lemma~\ref{lem:exp-diff}.
\end{proof}

\begin{lemma}\label{lem:lqr-complextiy-cover}
	For LQR, let $\+M_i$ ($i=1,2,3$) denotes the set of possible matrices of $M_i$. We assume that, there exists parameters $m_x$ and $m_a$ for which $\|x\|_2\leq m_x$ and $\|a\|_2\leq m_a$ for all $x\in\+X$ and $a\in\+A$. Then we have
	\begin{align*}
		N_{[]}&(\epsilon, \+F_h, \|\cdot\|_\infty)\le \prod_{i=1,2,3} N\left(\frac{\epsilon\sigma^2(H-h+1)\sqrt{2\pi e}}{2(m_x^2+m_xm_a+m_a^2)},\+M_i,\|\cdot\|_{\rm F}\right).
	\end{align*}
	Here $N_{[]}()$ and $N()$ denote the bracketing number and covering number, respectively.
\end{lemma}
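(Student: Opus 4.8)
The plan is to obtain a bracket cover of $\+F_h$ from covers of the three parameter sets $\+M_1,\+M_2,\+M_3$, attaching one bracket to each triple of covering centers. Fix a radius $r>0$ to be pinned down at the end, and for $i=1,2,3$ let $\+G_i$ be a minimal $r$-cover of $\+M_i$ under $\|\cdot\|_{\rm F}$, so $|\+G_i|=N(r,\+M_i,\|\cdot\|_{\rm F})$. For each triple $(\-M_1,\-M_2,\-M_3)\in\+G_1\times\+G_2\times\+G_3$, write $\-\mu(x,a)=x^\top\-M_1x+a^\top\-M_2x+a^\top\-M_3a$ and define
\[
l(z\given x,a)=\+N(z\given\-\mu(x,a),(H-h+1)\sigma^2)-\Delta,\qquad
u(z\given x,a)=\+N(z\given\-\mu(x,a),(H-h+1)\sigma^2)+\Delta,
\]
where $\Delta>0$ is a uniform density tolerance fixed below. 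These $\prod_{i}N(r,\+M_i,\|\cdot\|_{\rm F})$ pairs satisfy $\|l-u\|_\infty=2\Delta$ (and need not lie in $\+F_h$, which brackets are allowed to violate), so it suffices to choose $r$ and $\Delta$ so that (i) every $f\in\+F_h$ is sandwiched by one such pair, and (ii) $2\Delta\le\epsilon$.

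For (i), take any $f\in\+F_h$ with parameters $(M_1,M_2,M_3)$ and pick centers with $\|M_i-\-M_i\|_{\rm F}\le r$. Using $|v^\top Av'|\le\|A\|_{\rm op}\|v\|_2\|v'\|_2\le\|A\|_{\rm F}\|v\|_2\|v'\|_2$ on each of the three quadratic terms together with $\|x\|_2\le m_x$ and $\|a\|_2\le m_a$, the mean shift is bounded uniformly in $(x,a)$ by
\[
|\mu_M(x,a)-\-\mu(x,a)|\le r(m_x^2+m_xm_a+m_a^2),
\]
where $\mu_M(x,a)=x^\top M_1x+a^\top M_2x+a^\top M_3a$. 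Applying \cref{lem:diff-gaussian} with variance $(H-h+1)\sigma^2$ — and its mirror image, obtained by swapping the two means, so the bound becomes two-sided — turns this into a pointwise density estimate: for all $z,x,a$,
\[
|\+N(z\given\mu_M(x,a),(H-h+1)\sigma^2)-\+N(z\given\-\mu(x,a),(H-h+1)\sigma^2)|\le\frac{r(m_x^2+m_xm_a+m_a^2)}{(H-h+1)\sigma^2\sqrt{2\pi e}}=:\Delta.
\]
Hence $l\le f\le u$ everywhere, i.e. $f$ lies in the bracket attached to $(\-M_1,\-M_2,\-M_3)$. For (ii), the requirement $2\Delta\le\epsilon$ is precisely $r\le\epsilon(H-h+1)\sigma^2\sqrt{2\pi e}\big/\big(2(m_x^2+m_xm_a+m_a^2)\big)$; taking $r$ equal to this value gives $N_{[]}(\epsilon,\+F_h,\|\cdot\|_\infty)\le\prod_{i=1,2,3}N(r,\+M_i,\|\cdot\|_{\rm F})$, as claimed.

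I do not expect a real obstacle: this is the standard ``cover the parameters, and the induced densities are automatically close'' reduction, and the only points needing attention are minor bookkeeping. First, \cref{lem:diff-gaussian} as stated bounds the density gap in one direction only, so I invoke it with the two means exchanged to get the absolute-value bound. Second, I bound the quadratic forms by the Frobenius norm (rather than the operator norm) of the parameter difference, so the estimate is phrased in the same metric as the covering numbers. Third, I carry the variance $(H-h+1)\sigma^2$ from \cref{lem:lqr-closed-form} through the computation so the constant lands exactly as stated. Feeding this lemma into standard covering-number bounds for Frobenius balls in $\=R^{d_x\times d_x}$, $\=R^{d_a\times d_x}$, $\=R^{d_a\times d_a}$ (logarithmic in $1/\epsilon$, polynomial in $d_x,d_a$) then yields \cref{lem:lqr-complextiy}.
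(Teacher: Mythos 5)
Your proof is correct and follows essentially the same route as the paper: cover each parameter set $\+M_i$ in Frobenius norm, bound the induced mean shift by $r(m_x^2+m_xm_a+m_a^2)$, and convert it to a sup-norm density bound via \cref{lem:diff-gaussian} with variance $(H-h+1)\sigma^2$. The only (cosmetic) difference is that you build the brackets explicitly as center $\pm\Delta$, whereas the paper first produces an $\|\cdot\|_\infty$-cover of $\+F_h$ and then invokes $N_{[]}(2\epsilon,\+F_h,\|\cdot\|_\infty)\leq N(\epsilon,\+F_h,\|\cdot\|_\infty)$ — which is exactly your construction unrolled, and yields the same factor of $2$ in the radius.
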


\begin{proof}[Proof of Lemma~\ref{lem:lqr-complextiy-cover}]

We denote by $\w~{\+M}_1$, $\w~{\+M}_2$, and $\w~{\+M}_3$ the $\epsilon$-covers of $\+M_1$, $\+M_2$, and $\+M_3$, respectively. We construct the following function class
\begin{align*}
	\w~{\+F}_h=\Big\{
	\~f(\cdot|x,a)=
	\+N\big(\cdot\,\big|\,
	x^\top \w~M_1 x
	+
	a^\top \w~M_2 x
	+
	a^\top \w~M_3 a
	,
	(H-h+1)\sigma^2
	\big),\ 
	\forall \w~M_1\in\w~{\+M}_1,\w~M_2\in\w~{\+M}_2,\w~M_3\in\w~{\+M}_3
	\Big\}.
\end{align*}
We claim that $\w~{\+F}_h$ is a cover of $\+F_h$. To see this, note that for any $f\in\+F_h$, there exists $\~f\in\w~{\+F}_h$ ($i=1,2,3$) for which $\|M_i-\w~{M}_i\|_{\rm F}\le \epsilon$, and thus
\begin{align*}
	\left\|\~f-f\right\|_\infty
	=&
	\max_{x,a,z}\Big|\+N\big(z\,\big|\,
	x^\top M_1 x
	+
	a^\top M_2 x
	+
	a^\top M_3 a
	,
	(H-h+1)\sigma^2
	\big)\\
	&-
	\+N\big(z\,\big|\,
	x^\top \w~M_1 x
	+
	a^\top \w~M_2 x
	+
	a^\top \w~M_3 a
	,
	(H-h+1)\sigma^2
	\big)\Big|\\
	\le&
	\frac{1}{(H-h+1)\sigma^2\sqrt{2\pi e}}
	\underbrace{
	\left|
	x^\top (M_1-\w~M_1) x
	+
	a^\top (M_2-\w~M_2) x
	+
	a^\top (M_3-\w~M_3) a
	\right|}_{(\heartsuit)}.
\end{align*}
where the last inequality holds for Lemma~\ref{lem:diff-gaussian}. For $\rm(\heartsuit)$, we have
\begin{align*}
	(\heartsuit)
	\leq
	\|x\|_2\|M_1-\w~M_1\|_{\rm F}\|x\|_2
	+
	\|a\|_2\|M_2-\w~M_2\|_{\rm F}\|x\|_2
	+
	\|a\|_2\|M_3-\w~M_3\|_{\rm F}\|a\|_2.
	\leq\epsilon(m_x^2+m_xm_a+m_a^2).
\end{align*}
Hence, we have
\begin{equation*}
	\left\|\~f-f\right\|_\infty\leq \epsilon\cdot\frac{m_x^2+m_xm_a+m_a^2}{(H-h+1)\sigma^2\sqrt{2\pi e}}.
\end{equation*}
This implies 
\begin{equation*}
	N\left(\frac{\epsilon(m_x^2+m_xm_a+m_a^2)}{(H-h+1)\sigma^2\sqrt{2\pi e}},\+F_h,\|\cdot\|_\infty\right)\leq
	N(\epsilon,\+M_1,\|\cdot\|_{\rm F})\cdot N(\epsilon,\+M_2,\|\cdot\|_{\rm F})\cdot N(\epsilon,\+M_3,\|\cdot\|_{\rm F}).
\end{equation*}
We note that $N_{[]}(2\epsilon,\+F_h,\|\cdot\|_\infty)\leq N(\epsilon,\+F_h,\|\cdot\|_\infty)$.
Hence we complete the proof.
\end{proof}

\begin{proof}[Proof of Lemma~\ref{lem:lqr-complextiy}]
Let $\+M_i=\{M:\|M\|_{\rm F}\le m_i\}$ ($i=1,2,3$) denote the set of possible matrices $M_i$. Then we have $N(\epsilon,\+M_1,\|\cdot\|_{\rm F})\leq(3m_1/\epsilon)^{d_x\times d_x}$, $N(\epsilon,\+M_2,\|\cdot\|_{\rm F})\leq(3m_2/\epsilon)^{d_x\times d_a}$, and $N(\epsilon,\+M_3,\|\cdot\|_{\rm F})\leq(3m_3/\epsilon)^{d_a\times d_a}$. By Lemma~\ref{lem:lqr-complextiy-cover}, we have that  
	\begin{align*}
	& N_{[]}(\epsilon, \+F_h, \|\cdot\|_\infty)
	\\
 \leq&
		\left(\frac{6m_1(m_x^2+m_xm_a+m_a^2)}{\epsilon\sigma^2(H-h+1)\sqrt{2\pi e}}\right)^{d_x\times d_x}
		\left(\frac{6m_2(m_x^2+m_xm_a+m_a^2)}{\epsilon\sigma^2(H-h+1)\sqrt{2\pi e}}\right)^{d_x\times d_a}
		\left(\frac{6m_3(m_x^2+m_xm_a+m_a^2)}{\epsilon\sigma^2(H-h+1)\sqrt{2\pi e}}\right)^{d_a\times d_a}\\
	\leq&
		\left(\frac{6m_1(m_x^2+m_xm_a+m_a^2)}{\epsilon\sigma^2\sqrt{2\pi e}}\right)^{d_x\times d_x}
		\left(\frac{6m_2(m_x^2+m_xm_a+m_a^2)}{\epsilon\sigma^2\sqrt{2\pi e}}\right)^{d_x\times d_a}
		\left(\frac{6m_3(m_x^2+m_xm_a+m_a^2)}{\epsilon\sigma^2\sqrt{2\pi e}}\right)^{d_a\times d_a}.
	\end{align*}
	Taking a logarithm on both sides, we get
	\begin{align*}
		\log N_{[]}(&\epsilon, \+F_h, \|\cdot\|_\infty)\\
		\le
		&O
		\left(
		d_x^2\log \frac{m_1(m_x^2+m_xm_a+m_a^2)}{\epsilon\sigma^2}
		+
		d_xd_a\log \frac{m_2(m_x^2+m_xm_a+m_a^2)}{\epsilon\sigma^2}
		+
		d_a^2\log \frac{m_3(m_x^2+m_xm_a+m_a^2)}{\epsilon\sigma^2}
		\right).
	\end{align*}
\end{proof}

\section{Experiment Details}\label{app:exp-details}

We release our code at \url{https://github.com/ziqian2000/Fitted-Likelihood-Estimation}.

\subsection{Implementation Details of Combination Lock Environment}

We first clarify our implementation of the combination lock environment.  

\paragraph{Reward.} 
We denote $r^+$ and $r^-$ as the random reward for latent state $w_H=0$ and $w_H=1$, respectively. For the one-dimensional case, they are sampled from Gaussian distributions: $r^+\sim\+N(1,0.1^2)$ and $r^-\sim\+N(-1,0.1^2)$. For the second experiment with two-dimensional reward, they are defined as
\begin{align*}
	r^+= x+\frac{2x}{\|x\|_2}
	\quad\text{where}\quad
	x\sim\+N\left(\begin{bmatrix}
		0\\0
	\end{bmatrix},\ 
	\begin{bmatrix}
		0.05 & 0\\
		0 & 0.05
	\end{bmatrix}\right)
	,\quad
	r^-\sim\+N\left(\begin{bmatrix}
		0\\0
	\end{bmatrix},\ 
	\begin{bmatrix}
		0.05 & 0\\
		0 & 0.05
	\end{bmatrix}\right).
\end{align*}
Visually, most samples of $r^+$ appear in a ring centered at the origin with a radius of 2.

\paragraph{State.}
The state is constructed by three components, that is, state $x=(x_1,x_2,x_3)^\top$ for which $x_1$ is the one-hot encoding of latent state, $x_2$ is the one-hot encoding of time step $h$, and $x_3$ is a vector of Gaussian noise sampled independently from $\+N(0,0.1^2)$.

\ \\
The optimal action $a^\star_h$ is chosen to be 0 for all $h\in[H]$ for simplicity. We list other environment hyperparameters in \cref{tab:hyperpara} for reference.
\begin{table}[htb]
\caption{Hyperparameters for the combination lock environment. The two columns denote the respective hyperparameters employed in one-dimensional and two-dimensional experiments.}
\label{tab:hyperpara}
\vskip 0.15in
\begin{center}
\begin{small}
\begin{sc}
\begin{tabular}{ccccc}
\toprule
& 1-dimensional & 2-dimensional\\
\midrule
Horizon					&	20	&	10	\\
Number of Actions		&	2	&	2	\\
Dimension of States		&	30	&	30	\\
\bottomrule
\end{tabular}
\end{sc}
\end{small}
\end{center}
\vskip -0.1in
\end{table}

\subsection{Implementation Details of Algorithms}\label{app:impl-alg}

All algorithms, with the exception of Diff-FLE, is implemented by a neural network consisting of two layers, each with 32 neurons, connected by the ReLU activation functions. Diff-FLE employs a three-layered neural network, each layer containing 256 neurons, connected by the ReLU functions. 
Some shared hyperparameters are listed in Table~\ref{tab:hyperpara-common}.
\begin{table}[htb]
\caption{Shared hyperparameters. Note that the size of the dataset is written as a product, which is determined by the way we generate the offline data: the first number means the number of samples generated for each latent state and each time step, the second number means the number of time steps (i.e., horizon), and the third number means the size of the latent space.}
\label{tab:hyperpara-common}
\vskip 0.15in
\begin{center}
\begin{small}
\begin{sc}
\begin{tabular}{ccccc}
\toprule
& 1-dimensional & 2-dimensional\\
\midrule
Size of Dataset	&	$10000\times20\times2$	&	$10000\times10\times2$	\\
Batch Size		&	500		&	500		\\
\bottomrule
\end{tabular}
\end{sc}
\end{small}
\end{center}
\vskip -0.1in
\end{table}

\paragraph{Categorical Algorithm.}
We present the implementation of the two-dimensional version of the categorical algorithm, which is not presented in the prior work~\citep{bellemare2017distributional}. As a reminder, for the one-dimensional counterpart, for each atom of the next state, we first calculate its target position, then distribute the probability of that atom based on the distance of the target position to the closest two atoms. In the two-dimensional case, we discretize on each dimension, resulting in a grid-shaped discretization. Therefore, the probability of the atoms of the next state will be distributed based on the distance to the \emph{four} closest atoms (generally, it will be distributed to $2^n$ atoms in the $n$-dimensional case). The other implementation details are the same as the one-dimensional case. The list of hyperparameters can be found in the Table~\ref{tab:hyperpara-cate}.

\begin{table}[htb]
\caption{Hyperparameters for the categorical algorithm.}
\label{tab:hyperpara-cate}
\vskip 0.15in
\begin{center}
\begin{small}
\begin{sc}
\begin{tabular}{ccccc}
\toprule
& 1-dimensional & 2-dimensional\\
\midrule
Number of Atoms	&	$100$	&	$30^2$	\\
Learning Rate	&$10^{-2}$ 	&$3\times10^{-2}$ 	\\
Number of Iterations		&	200	&	100	\\
Discretized Range	&	$[-1.5, 1.5]$	&	$[-4,4]^2$\\
\bottomrule
\end{tabular}
\end{sc}
\end{small}
\end{center}
\vskip -0.1in
\end{table}

\paragraph{Quantile Algorithm.}
We followed the implementation of \citet{dabney2018distributional}. The list of hyperparameters can be found in the Table~\ref{tab:hyperpara-quan}.
\begin{table}[htb]
\caption{Hyperparameters for quantile Algorithm.}
\label{tab:hyperpara-quan}
\vskip 0.15in
\begin{center}
\begin{small}
\begin{sc}
\begin{tabular}{ccccc}
\toprule
& 1-dimensional\\
\midrule
Number of Quantiles	&	100	\\
Learning Rate	&$10^{-3}$ 	\\
Number of Iterations		&	1000\\
\bottomrule
\end{tabular}
\end{sc}
\end{small}
\end{center}
\vskip -0.1in
\end{table}

\paragraph{Diff-FLE.}
Our implementation is based on DDPM~\citep{ho2020denoising}. However, our neural network is much simpler than theirs, as mentioned above. The list of hyperparameters can be found in the Table~\ref{tab:hyperpara-diff}.
\begin{table}[htbp!]
\caption{Hyperparameters for Diff-FLE.}
\label{tab:hyperpara-diff}
\vskip 0.15in
\begin{center}
\begin{small}
\begin{sc}
\begin{tabular}{ccccc}
\toprule
& 1-dimensional & 2-dimensional\\
\midrule
Steps of Diffusion Process	&	200	&	200	\\
Staring Variance		&	$10^{-3}$	&	$10^{-3}$\\
Final Variance		&	0.1	&	0.1	\\
Variance Increasing	& Linear	& Linear\\
Learning Rate	&$10^{-3}$ 	&$10^{-3}$ 	\\
Number of Iterations		&	5000	&	15000	\\
\bottomrule
\end{tabular}
\end{sc}
\end{small}
\end{center}
\end{table}

\paragraph{GMM-FLE.}
For the training of GMM-FLE, we applied gradient ascent on the log-likelihood. While many classic approaches (e.g., the Expectation-Maximization (EM) algorithm) exist, we found no significant performance gap between gradient ascent and EM in our trials on both one-dimensional and two-dimensional data. Therefore, we opted for the gradient ascent, which matches our theory better. The list of hyperparameters is listed in Table~\ref{tab:hyperpara-gmm}.
\begin{table}[htbp!]
\caption{Hyperparameters for GMM-FLE.}
\label{tab:hyperpara-gmm}
\vskip 0.15in
\begin{center}
\begin{small}
\begin{sc}
\begin{tabular}{ccccc}
\toprule
& 1-dimensional & 2-dimensional\\
\midrule
Number of Gaussian Distribution	&	10	&	10	\\
Learning Rate		&	$10^{-4}$	&	$2\times10^{-4}$	\\
Number of Iterations		&	20000	&	10000	\\
\bottomrule
\end{tabular}
\end{sc}
\end{small}
\end{center}
\vskip -0.1in
\end{table}

\subsection{Full Experiment Results}\label{app:full-exp-results}

\cref{tab:1d-full} is the full version of Table~\ref{tab:1d}. Table~\ref{tab:2d-full} is the full version of \cref{tab:2d}. 

Table~\ref{tab:1d-wass-full} is the counterpart of Table~\ref{tab:1d} but using $1$-Wasserstein distance. It is computed in a similar way as Table~\ref{tab:1d}: we first sample 20k values from each distribution and then compute the 1-Wasserstein distance between the empirical distributions. 
All configurations are the same as that for the total variation distance experiment, except that we set the learning rate of the quantile algorithm to $10^{-1}$ for smaller 1-Wasserstein error.
We can see that GMM-FLE achieves the smallest Wasserstein distance in most steps (except $h=1$ and $2$). This result aligns with what we observed in Table~\ref{tab:1d}.

\begin{table}[htbp!]
\caption{Full version of Table~\ref{tab:1d}.}
\label{tab:1d-full}
\vskip 0.15in
\begin{center}
\begin{small}
\begin{sc}
\begin{tabular}{ccccc}
\toprule
$h$ & Cate Alg & Quan Alg & Diff-FLE & GMM-FLE \\
\midrule
1	&	0.071 	$\pm$	0.015 	&		0.603 	$\pm$	0.011 	&		0.292 	$\pm$	0.073 	&		\textbf{0.039 	$\pm$	0.004} 	\\
2	&	0.067 	$\pm$	0.012 	&		0.609 	$\pm$	0.014 	&		0.305 	$\pm$	0.055 	&		\textbf{0.041 	$\pm$	0.005} 	\\
3	&	0.068 	$\pm$	0.013 	&		0.612 	$\pm$	0.017 	&		0.305 	$\pm$	0.079 	&		\textbf{0.039 	$\pm$	0.009} 	\\
4	&	0.073 	$\pm$	0.013 	&		0.593 	$\pm$	0.015 	&		0.288 	$\pm$	0.073 	&		\textbf{0.038 	$\pm$	0.003} 	\\
5	&	0.074 	$\pm$	0.015 	&		0.602 	$\pm$	0.009 	&		0.285 	$\pm$	0.054 	&		\textbf{0.036 	$\pm$	0.009} 	\\
6	&	0.077 	$\pm$	0.011 	&		0.612 	$\pm$	0.010 	&		0.268 	$\pm$	0.040 	&		\textbf{0.030 	$\pm$	0.008} 	\\
7	&	0.080 	$\pm$	0.014 	&		0.602 	$\pm$	0.014 	&		0.290 	$\pm$	0.066 	&		\textbf{0.034 	$\pm$	0.004} 	\\
8	&	0.080 	$\pm$	0.016 	&		0.584 	$\pm$	0.018 	&		0.273 	$\pm$	0.039 	&		\textbf{0.039 	$\pm$	0.013} 	\\
9	&	0.081 	$\pm$	0.019 	&		0.529 	$\pm$	0.028 	&		0.247 	$\pm$	0.034 	&		\textbf{0.048 	$\pm$	0.010} 	\\
10	&	0.079 	$\pm$	0.017 	&		0.494 	$\pm$	0.018 	&		0.234 	$\pm$	0.043 	&		\textbf{0.044 	$\pm$	0.012} 	\\
11	&	0.080 	$\pm$	0.016 	&		0.514 	$\pm$	0.018 	&		0.244 	$\pm$	0.038 	&		\textbf{0.039 	$\pm$	0.012} 	\\
12	&	0.089 	$\pm$	0.009 	&		0.518 	$\pm$	0.013 	&		0.232 	$\pm$	0.015 	&		\textbf{0.032 	$\pm$	0.007} 	\\
13	&	0.089 	$\pm$	0.011 	&		0.481 	$\pm$	0.016 	&		0.219 	$\pm$	0.027 	&		\textbf{0.029 	$\pm$	0.016} 	\\
14	&	0.081 	$\pm$	0.015 	&		0.416 	$\pm$	0.026 	&		0.221 	$\pm$	0.021 	&		\textbf{0.033 	$\pm$	0.012} 	\\
15	&	0.083 	$\pm$	0.015 	&		0.330 	$\pm$	0.028 	&		0.178 	$\pm$	0.033 	&		\textbf{0.026 	$\pm$	0.015} 	\\
16	&	0.081 	$\pm$	0.009 	&		0.283 	$\pm$	0.017 	&		0.170 	$\pm$	0.045 	&		\textbf{0.027 	$\pm$	0.013} 	\\
17	&	0.082 	$\pm$	0.008 	&		0.252 	$\pm$	0.008 	&		0.167 	$\pm$	0.037 	&		\textbf{0.034 	$\pm$	0.013} 	\\
18	&	0.070 	$\pm$	0.010 	&		0.217 	$\pm$	0.012 	&		0.133 	$\pm$	0.019 	&		\textbf{0.023 	$\pm$	0.008} 	\\
19	&	0.078 	$\pm$	0.011 	&		0.167 	$\pm$	0.019 	&		0.109 	$\pm$	0.031 	&		\textbf{0.018 	$\pm$	0.008} 	\\
20	&	0.077 	$\pm$	0.014 	&		0.076 	$\pm$	0.009 	&		0.067 	$\pm$	0.024 	&		\textbf{0.013 	$\pm$	0.005} 	\\
\bottomrule
\end{tabular}
\end{sc}
\end{small}
\end{center}
\vskip -0.1in
\end{table}

\begin{table}[htbp!]
\caption{Full version of Table~\ref{tab:2d}.}
\label{tab:2d-full}
\vskip 0.15in
\begin{center}
\begin{small}
\begin{sc}
\begin{tabular}{ccccc}
\toprule
$h$ & Cate Alg & Diff-FLE & GMM-FLE \\
\midrule
1	&	0.483 	$\pm$	0.003 		&	\textbf{0.357 	$\pm$	0.031} 		&	0.438 	$\pm$	0.008 	\\
2	&	0.483 	$\pm$	0.003 		&	\textbf{0.344 	$\pm$	0.030} 		&	0.424 	$\pm$	0.048 	\\
3	&	0.480 	$\pm$	0.003 		&	\textbf{0.339 	$\pm$	0.023} 		&	0.450 	$\pm$	0.042 	\\
4	&	0.469 	$\pm$	0.002 		&	\textbf{0.327 	$\pm$	0.019} 		&	0.478 	$\pm$	0.048 	\\
5	&	0.466 	$\pm$	0.001 		&	\textbf{0.310 	$\pm$	0.019} 		&	0.493 	$\pm$	0.050 	\\
6	&	0.466 	$\pm$	0.001 		&	\textbf{0.289 	$\pm$	0.031} 		&	0.491 	$\pm$	0.061 	\\
7	&	0.470 	$\pm$	0.003 		&	\textbf{0.256 	$\pm$	0.032} 		&	0.510 	$\pm$	0.080 	\\
8	&	0.465 	$\pm$	0.002 		&	\textbf{0.234 	$\pm$	0.023} 		&	0.505 	$\pm$	0.099 	\\
9	&	0.453 	$\pm$	0.001 		&	\textbf{0.207 	$\pm$	0.014} 		&	0.502 	$\pm$	0.094 	\\
10	&	0.446 	$\pm$	0.002 		&	\textbf{0.143 	$\pm$	0.011} 		&	0.376 	$\pm$	0.101 	\\
\bottomrule
\end{tabular}
\end{sc}
\end{small}
\end{center}
\vskip -0.1in
\end{table}

\begin{table}[htbp!]
\caption{Approximated $d_{w,1}$ between $\E_{x\sim\psi(0,h)}\^f_h(x,a^\star_h)$ and $\E_{x\sim\psi(0,h)}Z^\pi_h(x,a^\star_h)$ in the 1-d case. The means and standard errors are computed via five independent runs.}
\label{tab:1d-wass-full}
\vskip 0.15in
\begin{center}
\begin{small}
\begin{sc}
\begin{tabular}{ccccc}
\toprule
$h$ & Cate Alg & Quan Alg & Diff-FLE & GMM-FLE \\
\midrule
1	&	\textbf{0.056	$\pm$	0.047}	&	0.144	$\pm$	0.015	&	0.150	$\pm$	0.060	&	0.062	$\pm$	0.009	\\
2	&	\textbf{0.053	$\pm$	0.045}	&	0.141	$\pm$	0.014	&	0.153	$\pm$	0.040	&	0.060	$\pm$	0.008	\\
3	&	0.065	$\pm$	0.049	&	0.136	$\pm$	0.014	&	0.127	$\pm$	0.056	&	\textbf{0.049	$\pm$	0.010}	\\
4	&	0.072	$\pm$	0.052	&	0.133	$\pm$	0.014	&	0.148	$\pm$	0.068	&	\textbf{0.063	$\pm$	0.007}	\\
5	&	0.074	$\pm$	0.045	&	0.127	$\pm$	0.010	&	0.136	$\pm$	0.061	&	\textbf{0.040	$\pm$	0.015}	\\
6	&	0.079	$\pm$	0.050	&	0.125	$\pm$	0.014	&	0.107	$\pm$	0.041	&	\textbf{0.031	$\pm$	0.017}	\\
7	&	0.087	$\pm$	0.053	&	0.122	$\pm$	0.016	&	0.127	$\pm$	0.045	&	\textbf{0.036	$\pm$	0.019}	\\
8	&	0.090	$\pm$	0.059	&	0.120	$\pm$	0.020	&	0.108	$\pm$	0.038	&	\textbf{0.051	$\pm$	0.027}	\\
9	&	0.092	$\pm$	0.062	&	0.109	$\pm$	0.022	&	0.138	$\pm$	0.062	&	\textbf{0.054	$\pm$	0.037}	\\
10	&	0.082	$\pm$	0.044	&	0.110	$\pm$	0.020	&	0.122	$\pm$	0.085	&	\textbf{0.039	$\pm$	0.027}	\\
11	&	0.090	$\pm$	0.051	&	0.105	$\pm$	0.027	&	0.145	$\pm$	0.095	&	\textbf{0.030	$\pm$	0.014}	\\
12	&	0.090	$\pm$	0.050	&	0.100	$\pm$	0.022	&	0.109	$\pm$	0.071	&	\textbf{0.022	$\pm$	0.017}	\\
13	&	0.091	$\pm$	0.043	&	0.088	$\pm$	0.025	&	0.140	$\pm$	0.059	&	\textbf{0.024	$\pm$	0.023}	\\
14	&	0.066	$\pm$	0.054	&	0.089	$\pm$	0.019	&	0.110	$\pm$	0.029	&	\textbf{0.026	$\pm$	0.011}	\\
15	&	0.067	$\pm$	0.042	&	0.073	$\pm$	0.015	&	0.104	$\pm$	0.045	&	\textbf{0.020	$\pm$	0.018}	\\
16	&	0.070	$\pm$	0.051	&	0.075	$\pm$	0.015	&	0.114	$\pm$	0.080	&	\textbf{0.021	$\pm$	0.018}	\\
17	&	0.047	$\pm$	0.028	&	0.060	$\pm$	0.013	&	0.077	$\pm$	0.017	&	\textbf{0.023	$\pm$	0.017}	\\
18	&	0.026	$\pm$	0.015	&	0.051	$\pm$	0.008	&	0.053	$\pm$	0.015	&	\textbf{0.012	$\pm$	0.009}	\\
19	&	0.041	$\pm$	0.012	&	0.043	$\pm$	0.010	&	0.048	$\pm$	0.023	&	\textbf{0.009	$\pm$	0.004}	\\
20	&	0.023	$\pm$	0.004	&	0.020	$\pm$	0.005	&	0.017	$\pm$	0.008	&	\textbf{0.004	$\pm$	0.001}	\\
\bottomrule
\end{tabular}
\end{sc}
\end{small}
\end{center}
\vskip -0.1in
\end{table}

\end{document}